\makeatletter\usepackage{microtype}\g@addto@macro\@verbatim{\microtypesetup{activate=false}}\makeatother
\def\ps@pprintTitle{%
  \let\@oddhead\@empty
  \let\@evenhead\@empty
  \def\@oddfoot{Author-generated preprint of \textit{Neural Networks} \textbf{173}, no. 106181 (2024). \\
  \url{https://doi.org/10.1016/j.neunet.2024.106181}}
  \let\@evenfoot\@oddfoot
}
\providecommand{\noopsort}[1]{}
\mathchardef\ordinarycolon\mathcode`\:
\newcommand{\N}{\mathbb N}
\newcommand{\Z}{\mathbb Z}
\newcommand{\R}{\mathbb R}
\newcommand{\mR}{\mathcal{R}}
\newcommand{\Cr}{C_\mR}
\newcommand{\fS}{\mathfrak{S}}
\newcommand{\supnorm}[1]{\norm{ #1 }_\infty}
\newcommand{\norm}[1]{\left\| #1 \right\|}
\newcommand{\ran}{\operatorname{ran}}
\newcommand{\spn}{\textnormal{span}}
\renewcommand{\mid}{~\middle|~}
\newcommand{\MLP}[3]
					{\mathcal{N}_\varphi^{#1}(\R^#2\def\mycmd{#3}\if1\mycmd \else ,\R^#3 \fi)}
\newcommand{\supp}{\operatorname{supp}}
\newcommand{\image}[2]{\includegraphics[width=#1\textwidth, height=#1\textheight,keepaspectratio]{#2}}
\newtheorem{thm}{Theorem}[section]
\newtheorem{lem}[thm]{Lemma}
\newtheorem{prop}[thm]{Proposition}
\newtheorem{cor}[thm]{Corollary}
\newtheorem{defn}[thm]{Definition}
\newtheorem{remark}[thm]{Remark}
\begin{document}

\title{Noncompact uniform universal approximation}
\author{Teun D. H. van Nuland\\~\\TU Delft, EWI/DIAM, P.O. Box 5031, 2600 GA Delft, The Netherlands\\~\\teunvn$\otimes$gmail$\cdot$com}
\date{\today}

\begin{abstract}
The universal approximation theorem is generalised to uniform convergence on the (noncompact) input space $\R^n$. All continuous functions that vanish at infinity can be uniformly approximated by neural networks with one hidden layer, for all activation functions $\varphi$ that are continuous, nonpolynomial, and asymptotically polynomial at $\pm\infty$. When $\varphi$ is moreover bounded, we exactly determine which functions can be uniformly approximated by neural networks, with the following unexpected results.
Let $\overline{\mathcal{N}_\varphi^l(\R^n)}$ denote the vector space of functions that are uniformly approximable by neural networks with $l$ hidden layers and $n$ inputs. For all $n$ and all $l\geq2$, $\overline{\mathcal{N}_\varphi^l(\R^n)}$ turns out to be an algebra under the pointwise product. 
If the left limit of $\varphi$ differs from its right limit (for instance, when $\varphi$ is sigmoidal) the algebra $\overline{\mathcal{N}_\varphi^l(\R^n)}$ ($l\geq2$) is independent of $\varphi$ and $l$, and equals the closed span of products of sigmoids composed with one-dimensional projections. If the left limit of $\varphi$ equals its right limit, $\overline{\mathcal{N}_\varphi^l(\R^n)}$ ($l\geq1$) equals the (real part of the) commutative resolvent algebra, a C*-algebra which is used in mathematical approaches to quantum theory. 
In the latter case, the algebra is independent of $l\geq1$, whereas in the former case $\overline{\mathcal{N}_\varphi^2(\R^n)}$ is strictly bigger than $\overline{\mathcal{N}_\varphi^1(\R^n)}$.
\end{abstract}
\maketitle

\section{Introduction}

Neural networks can uniformly approximate any continuous function only when the magnitude of the considered input values is bounded by a predetermined constant. Typical universal approximation theorems that use the entire noncompact input space $\R^n$ make use of convergence `uniformly on compacts' \cite{Barron,Cybenko,HKK,Hornik,HSW,LLPS,LWN} or convergence with respect to an integral norm on $\R^n$ \cite{Hornik,KL}. Such theorems do not rule out errors in the approximation growing exponentially (or worse) in the magnitude of the input values.

\textit{Noncompact} and \textit{uniform} approximation -- which uses convergence with respect to the supremum norm over $\R^n$ -- is a much stronger notion. In theory, it allows one to train a network up to a desired precision which is then respected by \textit{all} input values. It also gives a more honest picture of the generalisation capability of neural networks, as we shall see later. 

It is a common misconception that every continuous function on $\R^n$ can be uniformly approximated; in fact many commonplace continuous functions cannot.\footnote{E.g., $\sin(x)$, $e^x$, 
unless the activation function is specially tailored for these.
}
The question remains: precisely \textit{which} functions can be uniformly approximated?

Let the activation function $\varphi:\R\to\R$ be continuous and nonlinear, with asymptotically linear behaviour near $\pm\infty$. 
One-layer neural networks are by definition linear combinations of functions of the form
\begin{align}\label{eq:one-layer ANN}
x\mapsto \varphi(a\cdot x+b)\qquad (a\in\R^n,b\in\R),
\end{align}
where $\cdot$ is the standard inner product on $\R^n$. Such functions are constant in $n-1$ directions. If $n\geq2$, a nonzero one-layer neural network will therefore never be in $C_0(\R^n)$, the space of continuous functions that vanish at infinity,\footnote{If a neural network vanishes (approximately) at infinity, it means that the network responds consistently to large inputs, like outliers.
} no matter the activation function or the amount of nodes.\footnote{See Theorem \ref{thm:0C0}.}
Our first result is that, nonetheless, all functions in $C_0(\R^n)$ are uniformly approximable by one-layer neural networks (and therefore also by arbitrarily deep neural networks).
This generalises the 
universal approximation theorem to a truly noncompact statement.

We also precisely characterise the space of (uniformly) approximable functions in the case that $\varphi$ is moreover bounded.
The above result then implies that the space of approximable functions is some vector space between $C_0(\R^n)$ and the space of bounded continuous functions, $C_\textnormal{b}(\R^n)$.

By giving an explicit characterisation, we shall prove that this vector space is an \textit{algebra} under the usual pointwise operations. Equivalently, products of neural networks are approximable by neural networks. 

This uncovers a novel connection between neural networks and the theory of C*-algebras \cite{Murphy}, as any norm-closed subalgebra of $C_\textnormal{b}(\R^n)$ is a real C*-algebra. 
We do not rely on the theory of C*-algebras in this paper, but one should know that this theory initiated these findings, and might have merit for the machine learning community for reasons discussed in \cite{HWM}.

{\sloppy
Below, we discuss the explicit characterisation of the space of approximable functions, which notably does not depend greatly on the activation function $\varphi$, but only on the question whether $\varphi(-\infty):=\lim_{x\to-\infty}\varphi(x)$ equals $\varphi(\infty):=\lim_{x\to\infty}\varphi(x)$.
}

\subsection[The case phi(-inf)=phi(inf)]{The case $\varphi(-\infty)=\varphi(\infty)$}
We first consider the class of $\varphi$ satisfying $\varphi(-\infty)=\varphi(\infty)$.
We find that, for any amount of hidden layers, the vector space of approximable functions is equal to the real part of the \textit{commutative resolvent algebra}, defined in \cite{vN19}. 

In \cite{BF,vN19,vN21,vNS20}, the commutative resolvent algebra is studied because it is the classical counterpart of the resolvent algebra, a quantum observable algebra that was introduced in \cite{BG07,BG} for the purpose of (nonrelativistic) algebraic quantum field theory. 
This establishes a connection between machine learning and quantum algebra that seems unexplored so far, and for instance different from standard approaches to quantum neural networks \cite{SSP}. A useful application of the noncompact uniform approximation theorem to mathematical quantum physics will be demonstrated in a separate paper \cite{BvN}.

\subsection[The case phi(-inf)=/=phi(inf)]{The case $\varphi(-\infty)\neq\varphi(\infty)$}

Our final main theorem expresses the space of approximable functions in the case of $\varphi(-\infty)\neq\varphi(\infty)$ and gives novel insight into the approximation capability and limitations of neural networks.

When using two or more hidden layers, the space of approximable functions equals the closed span of products (with arbitrarily many factors) of sigmoids composed with one-dimensional projections. 
A way to visualise these products is as the wedge-shaped functions appearing in Figure \ref{fig:two wedge functions} and Definition \ref{defn:wedge function}, related to Voronoi diagrams \cite{MPCB} and tropical geometry \cite{MCT,ZNL}, and familiar to anyone who has ever visualised the approximation behaviour of neural networks in cases were there is a sufficiently complicated structure in the data. 
Indeed, when a neural network is prioritising the fitting of a small-scale structure, at a slightly larger scale one can often see the wedge functions of Definition \ref{defn:wedge function} appearing. See, for example, Figure \ref{fig:exampleANN}. In fact, the rigidity of these wedge functions can prevent the neural network from converging locally if there are not enough nodes or there is not enough time.
Thus, although the mathematical novelty of this paper resides at the `infinitely large' scale, the proof in Section \ref{sct:sigmoids} offers an insightful perspective on the appearance of wedge shapes in general.


\begin{figure}
\begin{center}
\image{0.25}{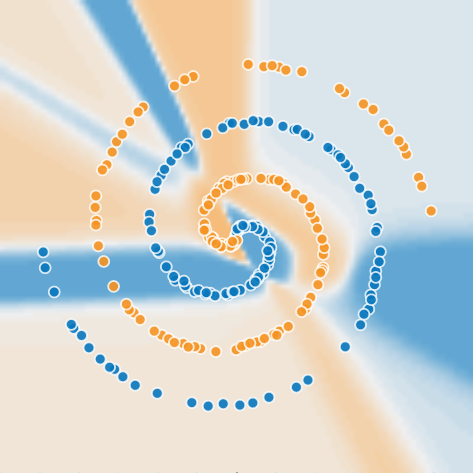}
\caption{Example of a neural network in which wedge functions (cf. Definition \ref{defn:wedge function} and Figure \ref{fig:two wedge functions}) are clearly visible in the contour plot. The network has been given insufficient nodes/layers/time to fit the data at all relevant scales, and has only succeeded on the small scale. At a slightly larger scale the wedge functions already become apparent, and this paper proves that this behaviour is in fact unavoidable at sufficiently large scale. This image was produced using \url{matlabsolutions.com/visualize-neural-network/neural-network.html}.}
\label{fig:exampleANN}
\end{center}
\end{figure}


Opposite to the earlier case, in the present case ($\varphi(-\infty)\neq\varphi(\infty)$) there are two-layer neural networks which cannot be approximated by one-layer neural networks.
We shall give a class of examples of such functions, including quite simple ones. 

\section{Notation and summary of main results}
We let $\N=\{1,2,\ldots\}$. We work over the field $\R$. For any $n\in\N$, we denote by $C(\R^n)$, $C_\textnormal{b}(\R^n)$, $C_0(\R^n)$, and $C_\textnormal{c}(\R^n)$ respectively the continuous functions from $\R^n$ to $\R$, the bounded ones, the ones vanishing at infinity (i.e., $\lim_{\|x\|\to\infty}f(x)=0$), and the compactly supported ones. The support of a function $f$ is denoted by $\supp f$.
By $\overline{S}$ we denote the uniform closure of a set $S$ of functions $\R^n\to\R$, i.e., the closure in the topology induced by the extended metric obtained from the supremum norm. We denote $\overline{\spn}\,S:=\overline{\spn\, S}$, where $\spn\, S$ is the $\R$-linear span of $S$.
For $a,x\in\R^n$, we denote by $a\cdot x:=\sum_{j=1}^na_jx_j$ the Euclidean inner product, and define functions $p_a:\R^n\to\R$ by $p_a(x):=a\cdot x$. We let $P_V:\R^n\to V$ denote the orthogonal projection onto any linear subspace $V\subseteq\R^n$.

Let $\varphi:\R\to\R$ be a function. We define the space of (feedforward) neural networks with $n$ input nodes, one hidden layer, one output node, and activation function $\varphi$, as the following subspace of the vector space of all functions $\R^n\to\R$:
\begin{align}\label{eq:1 hidden layer}
\MLP{1}{n}{1}:=\spn\,\Big\{x\mapsto \varphi(a\cdot x+b)~\Big|~ a\in\R^n,~b\in\R\Big\}.
\end{align}
The space of networks with $l$ hidden layers can then be defined recursively:\footnote{The biases $b$ are redundant for $l\geq2$.}
\begin{align}\label{eq:l hidden layers}
\MLP{l}{n}{1}:=\spn\left\{x\mapsto \varphi(f(x)+b)\mid f\in\MLP{l-1}{n}{1},~b\in\R\right\}.
\end{align}
The space of networks with an arbitrary amount of hidden layers is denoted by $\MLP{\infty}{n}{1}:=\bigcup_{l=1}^{\infty} \MLP{l}{n}{1}.$  Most results shall be stated for networks with only one output node, because extending these results to $m$ output nodes, i.e., to the spaces $\MLP{l}{n}{m}:=\MLP{l}{n}{1}^{\oplus m}$ and $\MLP{\infty}{n}{m}:=\cup_{l}\MLP{l}{n}{m}$, is straightforward. 

The following theorem summarises our first main result, which is formulated for the largest possible class of activation functions in Theorem \ref{thm:C_0 main}. 
\begin{thm}\label{thm:C_0 in 1 layer summary}
Let $n,l\in\N$, and let $\varphi\in C(\R)$ be nonlinear with
$\lim_{x\to\infty}(\varphi(x)-a_1x-b_1)=0$ and $\lim_{x\to-\infty}(\varphi(x)-a_{2}x-b_{2})=0$ for certain $a_1,b_1,a_2,b_2\in\R$. Then, 
$$C_0(\R^n)\subseteq\overline{\MLP{l}{n}{1}}.$$
\end{thm}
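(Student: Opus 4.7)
I would reduce the statement to the case of a single hidden layer ($l=1$), using a small separate monotonicity lemma to handle $l\ge 2$ (this uses that the asymptotic linearity of $\varphi$ lets one simulate a linear layer inside $\overline{\MLP{l}{n}{1}}$ by rescalings like $\varphi(k\cdot)/k$, which converge uniformly to a linear or piecewise-linear function). The core task is then $C_0(\R^n)\subseteq\overline{\MLP{1}{n}{1}}$.

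From $\varphi$ I would first manufacture an auxiliary $\psi\in C_0(\R)$ whose ridge functions still lie in $\MLP{1}{n}{1}$. A natural choice is the symmetric second difference
\begin{align*}
\psi(t):=\varphi(t+1)-2\varphi(t)+\varphi(t-1).
\end{align*}
The asymptotic linearity of $\varphi$ cancels the leading-order behaviour on both sides, giving $\psi(t)\to 0$ as $t\to\pm\infty$, so $\psi\in C_0(\R)$. Moreover $\psi\not\equiv0$: if $\psi\equiv0$ then $g(t):=\varphi(t+1)-\varphi(t)$ is continuous and $1$-periodic, and iterating the recurrence $\varphi(t+n)=\varphi(t)+ng(t)$ for $n\to\pm\infty$ against the asymptotics $\varphi(t+n)\sim a_1(t+n)+b_1$ and $\varphi(t-n)\sim a_2(t-n)+b_2$ forces $g\equiv a_1=a_2$ and $\varphi$ linear, contradicting the hypothesis. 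Every $x\mapsto\psi(a\cdot x+b)$ is a fixed three-term combination of $\varphi$-ridge functions, so $\mathcal{N}_\psi^1(\R^n)\subseteq\MLP{1}{n}{1}$, and it suffices to show $C_0(\R^n)\subseteq\overline{\mathcal{N}_\psi^1(\R^n)}$.

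I would tackle this in two further moves. In one dimension, translations and dilations $\psi(at+b)$ form a scale-rich family, and a Riemann-sum discretisation of the convolution $c_\epsilon\psi_\epsilon\ast f$ (with $\psi_\epsilon(t):=\psi(\epsilon^{-1}t)$) produces a finite linear combination approximating any $f\in C_c(\R)$ uniformly as $\epsilon\downarrow 0$; if $\int\psi=0$, one first replaces $\psi$ by a further auxiliary (for example $\psi(\cdot)-\psi(\cdot-c)$, or, when $a_1\ne a_2$, the sigmoidal difference already available in $\MLP{1}{1}{1}$) to secure the approximate-identity property. For $n\ge 2$ I would use a plane-wave decomposition of Radon-inversion type: for $f\in C_c^\infty(\R^n)$ there are $g_\omega\in C_0(\R)$ depending continuously on $\omega\in S^{n-1}$ with
\begin{align*}
f(x)=c_n\int_{S^{n-1}}g_\omega(\omega\cdot x)\,d\omega,
\end{align*}
and a fine Riemann sum over the sphere produces a finite linear combination of ridge functions $g_{\omega_j}(\omega_j\cdot x)$ converging to $f$ uniformly on $\R^n$. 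Each $g_{\omega_j}$ is then approximated by $\psi$-ridge combinations via the one-dimensional step.

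The principal obstacle is the uniformity in the last construction: the sphere-discretisation must converge in the supremum norm over the entire noncompact $\R^n$, not merely on compact subsets, even though each individual ridge summand is constant in $n-1$ directions and hence far from $C_0(\R^n)$. This requires uniform decay estimates on the family $\{g_\omega\}$ jointly in $\omega$, which in turn come from the Schwartz regularity of $f$ and careful control of the derivative or Hilbert-transform factor arising in the Radon inversion formula.
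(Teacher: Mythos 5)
Your opening moves coincide with the paper's: the reduction to $l=1$ (the paper does this by composing with a $C_0$ function that equals the identity on the bounded range of the inner network), and the second difference $\psi=\varphi(\cdot+1)-2\varphi(\cdot)+\varphi(\cdot-1)$, shown to be a nonzero element of $C_0(\R)$ by exactly the periodicity argument you sketch, is precisely how the paper reduces to the case of a $C_0$ activation. From that point on, however, your route diverges and the two places where the real difficulty of the theorem lives are asserted rather than proved. First, the one-dimensional step: $\psi\in C_0(\R)$ need \emph{not} lie in $L^1(\R)$ (take $\varphi(x)=x+\sin(x)/\log(2+|x|)$, whose second difference decays only logarithmically), so the approximate-identity argument for $c_\epsilon\psi_\epsilon\ast f$ is not available in general, and your proposed fixes address only the vanishing-mean issue, not integrability. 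Second, and more seriously, the plane-wave step: you correctly identify that the uniform convergence over all of $\R^n$ of a spherical Riemann sum for a Radon-type decomposition $f=c_n\int_{S^{n-1}}g_\omega(\omega\cdot x)\,d\omega$ is "the principal obstacle", but you do not resolve it. This is not a routine technicality: each summand $g_{\omega_j}\circ p_{\omega_j}$ is constant in $n-1$ directions, the map $\omega\mapsto g_\omega\circ p_\omega$ is discontinuous and inseparably valued in the sup norm, and for even $n$ the Hilbert-transform factor in the inversion formula destroys compact support of $g_\omega$; controlling the Riemann sums far from the origin requires a genuine counting/decay argument that is the heart of the matter.

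It is instructive to compare with how the paper circumvents both gaps. It never discretises a general Radon decomposition. Instead it fixes a single Lipschitz $g\in C_\textnormal{c}(\R)$ with $\int g=0$, forms only the rotationally symmetric average $f=\int_0^{2\pi}g\circ p_{(\cos\theta,\sin\theta)}\,d\theta$ on $\R^2$, and proves uniform convergence of the dyadic Riemann sums by exploiting the rotational symmetry: outside a large disc at most two of the $2^M$ directions can place a point inside $\supp g$, so the partial sums are uniformly small there, while inside the disc the integrand becomes a bona fide Bochner integral. Density in $C_0(\R^2)$ is then obtained not constructively but by duality: Hahn--Banach and Riesz--Markov produce an annihilating measure, and the marginal $\psi(x)=\int f(x,y)\,dy$ (shown to be nonzero via the $R^{-3}$ asymptotics) is fed into Hornik's discriminatory theorem to reach a contradiction. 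Finally, $n>2$ is handled not by a higher-dimensional sphere decomposition at all, but by Stone--Weierstrass ($C_0(\R^n)=\overline{C_0(\R)\otimes C_0(\R^{n-1})}$) and composition with the affine map $x\mapsto(x_1,p(x_2,\dots,x_n))$, reducing everything to $n=2$. If you want to keep your constructive Radon route, you must supply the uniform-in-$\omega$ decay of $g_\omega$ and carry out the far-field counting estimate for non-compactly-supported, $\omega$-dependent profiles; as written, the proposal leaves the theorem's central difficulty open.
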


The following theorem summarises our second and third main result, which are written in stronger form as Theorem \ref{thm:commutative resolvent algebra} and Theorem \ref{thm:main2}, combined with \ref{thm:distance}.

\begin{thm}\label{thm:main intro}
Let $n\in\N$, and let $\varphi\in C(\R)$ be nonconstant such that $\varphi(-\infty)=\lim_{x\to-\infty}\varphi(x)$ and $\varphi(\infty)=\lim_{x\to\infty}\varphi(x)$ exist and are finite. \begin{enumerate}
\item If $\varphi(-\infty)=\varphi(\infty)$ then $\overline{\MLP{1}{n}{1}}=\overline{\MLP{\infty}{n}{1}}$ and
\begin{align}\label{eq:CR}
\hspace*{-24pt}\overline{\MLP{\infty}{n}{1}} =\overline{\spn}\left\{x\mapsto g(P(x))\mid
\begin{aligned}
&P:\R^n\to\R^k\text{ linear,}\\
&g\in C_0(\R^k),~k\in\Z_{\geq0}
\end{aligned}
\right\}.
\end{align}
\item If $\varphi(-\infty)\neq\varphi(\infty)$ then $\overline{\MLP{2}{n}{1}}=\overline{\MLP{\infty}{n}{1}}$ and
\begin{align}\label{eq:fS}
\hspace*{-24pt}\overline{\MLP{\infty}{n}{1}}=\overline{\spn}\,\Bigg\{x\mapsto\prod_{j=1}^m\tanh(a_j\cdot x)~\Bigg|~ m\in\Z_{\geq0}, ~a_j\in\R^n\Bigg\}.
\end{align}
If, moreover, $n\geq2$, then $\overline{\MLP{1}{n}{1}}\neq \overline{\MLP{2}{n}{1}}$.
\end{enumerate}
\end{thm}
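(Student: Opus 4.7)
The plan is to prove each part by establishing the two inclusions $\MLP{l}{n}{1}\subseteq\text{RHS}$ for all $l$ (so $\overline{\MLP{\infty}{n}{1}}\subseteq\text{RHS}$), and the reverse $\text{RHS}\subseteq\overline{\MLP{1}{n}{1}}$ in case~1, respectively $\text{RHS}\subseteq\overline{\MLP{2}{n}{1}}$ in case~2; these collapse all the $\overline{\MLP{l}{n}{1}}$ to a single space. In both cases the RHS is a closed unital subalgebra of $C_\textnormal{b}(\R^n)$: for \eqref{eq:CR}, products satisfy $(g_1\circ P_1)(g_2\circ P_2)=(g_1\otimes g_2)\circ(P_1\oplus P_2)$ with $g_1\otimes g_2\in C_0(\R^{k_1+k_2})$, while \eqref{eq:fS} is already spanned by products. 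By Gelfand duality each RHS is isometrically $C(X)$ for a compact space $X$, so any continuous $\psi\colon\R\to\R$ lifts to $\psi\circ f\in\text{RHS}$ for every $f\in\text{RHS}$; this lets me induct from layer~$1$ or~$2$ upward.

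For part~1 ($\varphi(-\infty)=\varphi(\infty)$), since $\varphi-\varphi(\infty)\in C_0(\R)$, each generator $\varphi(a\cdot x+b)$ is a constant plus a $C_0$-function of $a\cdot x$, placing $\MLP{1}{n}{1}$ inside the RHS of \eqref{eq:CR}. Conversely, for $g\in C_0(\R^k)$ and linear $P\colon\R^n\to\R^k$, Theorem~\ref{thm:C_0 in 1 layer summary} gives a 1-layer $\varphi$-network approximating $g$ on $\R^k$; pulling back through $P$ approximates $g\circ P$ uniformly on $\R^n$. For part~2 ($\varphi(-\infty)\neq\varphi(\infty)$), the one-dimensional closed subalgebra generated by $\tanh$ equals $\{h\in C(\R):h(\pm\infty)\text{ exist finite}\}$ by Stone--Weierstrass on the $\tanh$-image, so pulling back along $a\cdot x$ yields $\varphi(a\cdot x+b)\in\fS$, whence $\MLP{1}{n}{1}\subseteq\fS$. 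For the reverse inclusion $\fS\subseteq\overline{\MLP{2}{n}{1}}$, first check $\tanh(a\cdot x)\in\overline{\MLP{1}{n}{1}}$ by choosing $\tau_0,\tau_1\in\R$ with $\tanh-\tau_1\varphi-\tau_0\in C_0(\R)$ (solvable since $\varphi(\infty)\neq\varphi(-\infty)$) and applying Theorem~\ref{thm:C_0 in 1 layer summary} to the $C_0$-remainder. For a product of $m$ such sigmoids, use the polarisation identity
\[\prod_{j=1}^mu_j=\frac{1}{m!\,2^m}\sum_{\epsilon\in\{\pm1\}^m}\Big(\prod_j\epsilon_j\Big)\Big(\sum_j\epsilon_ju_j\Big)^m:\]
approximate each $\sum_j\epsilon_j\tanh(a_j\cdot x)$ by some $L\in\MLP{1}{n}{1}$, note $L$ is bounded, approximate $t\mapsto t^m$ on the bounded range of $L$ by $Q\in\MLP{1}{1}{1}$, and observe that $Q\circ L\in\MLP{2}{n}{1}$ approximates $L^m$ and hence $(\sum_j\epsilon_ju_j)^m$.

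For the strict inequality $\overline{\MLP{1}{n}{1}}\neq\overline{\MLP{2}{n}{1}}$ when $n\geq2$, my test function is $f(x)=\tanh(a\cdot x)\tanh(b\cdot x)$ with $a,b\in\R^n$ linearly independent; $f\in\overline{\MLP{2}{n}{1}}$ by the above. Suppose toward contradiction $g_n\to f$ uniformly with $g_n=\sum_ic_i^{(n)}\varphi(a_i^{(n)}\cdot x+b_i^{(n)})\in\MLP{1}{n}{1}$. For every $u\in S^{n-1}$ the ray-limit $B(g_n)(u):=\lim_{r\to\infty}g_n(ru)$ exists; moreover, for $u$ outside the finite set $B_n=\{u:a_i^{(n)}\cdot u=0\text{ for some }i\}$, each antipodal pair $(u,-u)$ samples one $+\infty$ and one $-\infty$ side of $\varphi$ per term, so
\[B(g_n)(u)+B(g_n)(-u)=\big(\varphi(\infty)+\varphi(-\infty)\big)\textstyle\sum_ic_i^{(n)}.\]
Uniform convergence transfers this identity to the limit for every $u\notin\bigcup_nB_n$ (a countable union of finite sets, hence measure-zero in $S^{n-1}$), forcing $f_\infty(u)+f_\infty(-u)=2\sgn(a\cdot u)\sgn(b\cdot u)$ to be $u$-independent a.e.\ on $S^{n-1}$. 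Since $a,b$ are linearly independent, $2\sgn(a\cdot u)\sgn(b\cdot u)$ takes both values $\pm2$ on sets of positive measure, contradicting this.

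The main obstacle I anticipate is the ``$\psi\circ f\in\text{RHS}$'' induction step in the opening move of each part, which I would route through the Gelfand spectrum rather than a direct combinatorial decomposition into ``faces at infinity'' of sums $\sum_jg_j\circ P_j$. The polarisation and antipodal arguments in part~2 are comparatively clean but still demand care: in the strict-inequality proof, one must verify that the ray-limit identity passes to the uniform limit on a large enough set of directions, the quantitative point being that $\bigcup_nB_n\subset S^{n-1}$ is measure-zero while the complement still carries positive-measure subsets on which $2\sgn(a\cdot u)\sgn(b\cdot u)$ attains different values.
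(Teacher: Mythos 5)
Your proposal is correct, and while parts of it track the paper closely, the key inclusion in case~2 is handled by a genuinely different and noticeably shorter argument. Part~1 and the forward inclusions $\MLP{l}{n}{1}\subseteq\mathrm{RHS}$ mirror the paper: your ``lift $\psi\circ f$ into the closed unital subalgebra'' step is exactly the paper's Lemma~\ref{lem:algebra} (done there by Weierstrass approximation of $\varphi$ on the bounded range of $f+b$ --- you do not need Gelfand duality, which for real function algebras is anyway more delicate than the polynomial argument), and the reverse inclusion in case~1 is the paper's Theorem~\ref{thm:commutative resolvent algebra}. The real divergence is $\fS(\R^n)\subseteq\overline{\MLP{2}{n}{1}}$: the paper proves this via Proposition~\ref{prop:wedge functions}, a combinatorial double induction decomposing wedge functions $(g\circ P_V)\prod_j(g_j\circ p_{a_j})$ into wedge functions with fewer support vectors plus a remainder $h_{\#J}\circ(\text{one-layer network})$, whereas you use the polarisation identity to write $\prod_j\tanh(a_j\cdot x)$ as a signed combination of $m$-th powers $L_\epsilon^m$ of bounded elements $L_\epsilon\in\overline{\MLP{1}{n}{1}}$, then realise $t\mapsto t^m$ on the compact range of $L_\epsilon$ inside the second layer. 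This is valid (the composite $Q\circ L$ with $Q\in\MLP{1}{1}{1}$ and $L\in\MLP{1}{n}{1}$ is literally an element of $\MLP{2}{n}{1}$ by \eqref{eq:l hidden layers}), and it buys brevity; what it loses is the structural information the wedge-function decomposition provides (the explicit geometric picture of Figure~\ref{fig:two wedge functions} and the stronger statement that wedge functions with a $C_\textnormal{c}(V)$ factor also lie in $\overline{\MLP{2}{n}{1}}$). Your strict-inequality argument is the same antipodal ray-limit idea as the paper's Theorem~\ref{thm:distance}, with $\tanh(a\cdot x)\tanh(b\cdot x)$ in place of the mollified AND function; two small corrections there: for $n\geq3$ the exceptional set $B_n$ is a finite union of great subspheres rather than a finite set (still null, so nothing breaks), and your argument yields only $\overline{\MLP{1}{n}{1}}\neq\overline{\MLP{2}{n}{1}}$ rather than the quantitative distance of Theorem~\ref{thm:distance} --- which suffices for the statement as posed.
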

The sigmoid $\tanh$ is used for explicitness, but can be replaced by any sigmoid of choice, as will be discussed in Section \ref{sct:sigmoids}.


\begin{cor}\label{cor:algebra}
Let $n,m\in\N$ and let $\varphi\in C(\R)$ be such that $\lim_{x\to-\infty}\varphi(x)$ and $\lim_{x\to\infty}\varphi(x)$ exist and are finite. Then the vector space $\overline{\MLP{\infty}{n}{1}}$ is an algebra. Equivalently, pointwise products of neural networks are uniformly approximable by neural networks.
\end{cor}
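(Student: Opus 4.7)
The plan is to reduce Corollary \ref{cor:algebra} to the explicit descriptions in Theorem \ref{thm:main intro}. A continuous function $\varphi:\R\to\R$ with finite limits at $\pm\infty$ is automatically bounded, so every element of $\MLP{\infty}{n}{1}$ is bounded and consequently $\overline{\MLP{\infty}{n}{1}}\subseteq C_\textnormal{b}(\R^n)$. The general fact I would invoke is that, within $C_\textnormal{b}(\R^n)$, uniform closure preserves closure under pointwise products: if $f_k\to f$ and $g_k\to g$ uniformly, then
\[
\|fg-f_kg_k\|_\infty \leq \|f\|_\infty\,\|g-g_k\|_\infty + \|g_k\|_\infty\,\|f-f_k\|_\infty \longrightarrow 0,
\]
since $\|g_k\|_\infty$ is eventually bounded. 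Together with the bilinearity of multiplication, this reduces the corollary to the statement that the spanning sets appearing in \eqref{eq:CR} and \eqref{eq:fS} are each closed under pointwise multiplication.

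In the case $\varphi(-\infty)\neq\varphi(\infty)$, the generating set in \eqref{eq:fS} consists of finite products $\prod_{j=1}^m\tanh(a_j\cdot x)$; the pointwise product of two such functions is obtained by concatenating the lists of coefficient vectors, and is thus again a generator. In the case $\varphi(-\infty)=\varphi(\infty)$, the generators in \eqref{eq:CR} have the form $x\mapsto g(Px)$ with $g\in C_0(\R^k)$ and $P:\R^n\to\R^k$ linear; given two such generators coming from $(P_1,g_1)$ and $(P_2,g_2)$, their pointwise product equals $h\circ Q$, where $Q=(P_1,P_2):\R^n\to\R^{k_1+k_2}$ is linear and $h(y_1,y_2):=g_1(y_1)g_2(y_2)$. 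The function $h$ lies in $C_0(\R^{k_1+k_2})$ since $g_1,g_2$ are bounded and $\|(y_1,y_2)\|\to\infty$ forces $\|y_i\|\to\infty$ for at least one index $i$, making either $g_i(y_i)\to 0$ while the other factor stays bounded. Hence $h\circ Q$ is again a generator of the required form.

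The degenerate subcase in which $\varphi$ is constant (so the nonconstancy hypothesis of Theorem \ref{thm:main intro} fails) is immediate, as every neural network is then constant and $\overline{\MLP{\infty}{n}{1}}$ equals the algebra of constant functions. No genuine obstacle arises beyond this: the substance of the corollary is packaged entirely inside Theorem \ref{thm:main intro}, and what remains is only the observation that in each of the two explicit descriptions the natural generating family happens to be closed under pointwise products, which is transparent from the form of the generators.
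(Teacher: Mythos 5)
Your proposal is correct and follows essentially the same route as the paper: reduce to Theorem \ref{thm:main intro}, observe that in each case the generating family is closed under pointwise products (concatenating the coefficient vectors for \eqref{eq:fS}, and forming $h\circ Q$ with $h=g_1\otimes g_2\in C_0(\R^{k_1+k_2})$ for \eqref{eq:CR}), and treat the constant-$\varphi$ case separately. Your explicit verification that the uniform closure of a subspace of $C_\textnormal{b}(\R^n)$ inherits closure under pointwise products is a detail the paper leaves implicit, but it does not constitute a different approach.
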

\begin{proof}
If $\varphi$ is constant, $\overline{\MLP{\infty}{n}{1}}$ consists of constant functions, so the statement holds. 

Otherwise, Theorem \ref{thm:main intro} allows us to consider two cases. For case 1 ($\varphi(-\infty)=\varphi(\infty)$), we will now show that the right-hand side of \eqref{eq:CR} is an algebra. For $i=1,2$ we fix $k_i\in\Z_{\geq0}$, $g_i\in C_0(\R^{k_i})$ and linear maps $P_i:\R^n\to\R^{k_i}$. We may always write 
$$P_i(x)=(a_{i,1}\cdot x,\ldots,a_{i,k_i}\cdot x)\qquad (x\in\R^n,~i=1,2),$$
for vectors $a_{i,1},\ldots,a_{i,k_i}\in\R^n$.
We define the number $k:=k_1+k_2$, define the linear function $P:\R^n\to\R^k$ by
$$P(x):=(a_{1,1}\cdot x,\ldots,a_{1,k_1}\cdot x,a_{2,1}\cdot x,\ldots,a_{2,k_2}\cdot x)\qquad(x\in\R^n),$$
and define the function $g:\R^k\to\R$ by
$$g(y_1,\ldots,y_k):=g_1(y_1,\ldots,y_{k_1})g_2(y_{k_1+1},\ldots,y_k)\qquad(y\in\R^k).$$
It follows that $g\in C_0(\R^k)$, and the pointwise product of $g_1\circ P_1$ and $g_2\circ P_2$ evaluates to
$$(g_1\circ P_1)\cdot(g_2\circ P_2)=g\circ P.$$
Hence, by \eqref{eq:CR}, $\overline{\MLP{\infty}{n}{1}}$ is an algebra. 

In case 2 ($\varphi(-\infty)\neq\varphi(\infty)$), the explicit characterisation of $\overline{\MLP{\infty}{n}{1}}$ given by Theorem \ref{thm:main intro} is an algebra by construction.

\end{proof}
We remark that the number $k$ used in the above proof is not necessarily minimal, in the sense that the representation $g\circ P$ in \eqref{eq:CR} is not unique. An in-depth analysis of the algebra in \eqref{eq:CR} is made in \cite{vN19}, including an alternative to the above proof in \cite[Lemma 2.2(i)]{vN19}, cf. Section \ref{sct:CR}.

A particular aspect of Theorem \ref{thm:main intro} is that, for $\varphi(-\infty)\neq\varphi(\infty)$, there exist functions in $\overline{\MLP{2}{n}{1}}$ that are not in $\overline{\MLP{1}{n}{1}}$. In \textsection\ref{sct:1 or 2 layers} we give a class of explicit examples (including a mollified AND function) and thus prove the following stronger statement.
\begin{thm}\label{thm:1 or 2 layers intro}
Let $n\in\N$, $n\geq2$, and let $\varphi\in C(\R)$ be such that $\lim_{x\to-\infty}\varphi(x)$ and $\lim_{x\to\infty}\varphi(x)$ exist and are finite and distinct.
For every $d>0$, there are two-layer networks $f\in\MLP{2}{n}{1}$ with a fixed uniform distance $d$ from the whole collection of one-layer networks, $\MLP{1}{n}{1}$.
\end{thm}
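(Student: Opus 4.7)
The plan is to exploit the homogeneity of the two spaces together with a \emph{radial limit} obstruction. Since $\MLP{2}{n}{1}$ and $\MLP{1}{n}{1}$ are $\R$-vector spaces, it suffices to produce a single $f_0 \in \MLP{2}{n}{1}$ whose uniform distance from $\MLP{1}{n}{1}$ is some $\delta > 0$: by linearity, the rescaling $(d/\delta)f_0$ then lies in $\MLP{2}{n}{1}$ and has distance exactly $d$ from $\MLP{1}{n}{1}$. To find such an $f_0$, I would take the target $f(x) := \tanh(x_1)\tanh(x_2)$, which belongs to $\overline{\MLP{2}{n}{1}}$ by Theorem \ref{thm:main intro}(2). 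If one can establish $\inf_{g\in\MLP{1}{n}{1}}\|f-g\|_\infty \geq 1$, then any $f_0 \in \MLP{2}{n}{1}$ within uniform distance $1/2$ of $f$ inherits a distance of at least $1/2$ from $\MLP{1}{n}{1}$, as needed.

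For the distance bound the main tool is the radial limit $\rho_v(h) := \lim_{t\to\infty} h(tv)$ for $v\in S^{n-1}$. For $f$, a direct computation gives $\rho_v(f) = \sgn(v_1)\sgn(v_2)$ whenever $v_1 v_2 \neq 0$, and in particular $\rho_v(f) + \rho_{-v}(f) = 2\sgn(v_1)\sgn(v_2) \in \{\pm 2\}$. For any $g = \sum_{i=1}^k c_i\varphi(a_i\cdot x + b_i)\in\MLP{1}{n}{1}$, the limit $\rho_v(g)$ exists for every $v$ outside the finite union $\bigcup_i\{a_i\cdot u = 0\}$, and a termwise computation using $\varphi(\pm\infty)$ yields the crucial identity
\begin{align*}
\rho_v(g) + \rho_{-v}(g) \;=\; \bigl(\varphi(\infty) + \varphi(-\infty)\bigr)\sum_i c_i \;=:\; K_g,
\end{align*}
which depends on $g$ but is \emph{constant} in $v$, because each summand contributes $c_i\bigl(\varphi(\infty)+\varphi(-\infty)\bigr)$ irrespective of the sign of $a_i\cdot v$.

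Since $\|f-g\|_\infty \geq |\rho_v(f) - \rho_v(g)|$ for every generic $v$, and likewise at $-v$, the triangle inequality combined with the identity above gives
\begin{align*}
\|f-g\|_\infty \;\geq\; \tfrac{1}{2}\bigl|2\sgn(v_1)\sgn(v_2) - K_g\bigr|.
\end{align*}
Choosing generic $v \in S^{n-1}$ with $\sgn(v_1)\sgn(v_2) = +1$ and then with $\sgn(v_1)\sgn(v_2) = -1$ (both possible since $n \geq 2$) and taking the maximum of the two resulting bounds gives $\|f-g\|_\infty \geq \max\bigl(|1 - K_g/2|,\,|1 + K_g/2|\bigr) \geq 1$, independently of $g$. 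The one technical point to verify carefully is that the set of $v \in S^{n-1}$ where $\rho_v(g)$ fails to exist is confined to a finite union of great hyperplanes, so generic directions with either prescribed sign pattern are always available; beyond this genericity bookkeeping, the argument is straightforward, and the scaling step then delivers the prescribed distance $d$ for every $d>0$.
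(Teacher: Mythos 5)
Your proposal is correct and follows essentially the same route as the paper's own proof (Theorem \ref{thm:distance}): your quantity $\rho_v(g)+\rho_{-v}(g)$ is exactly the paper's radial functional $l_g(v)=\lim_{t\to\infty}\left(g(tv)+g(-tv)\right)$, which is constant in generic $v$ for every $g\in\MLP{1}{n}{1}$ but non-constant for a product of two sigmoids in $x_1$ and $x_2$, and the final rescaling step is the same. The only caveat is that summands with $a_i=0$ contribute $2c_i\varphi(b_i)$ rather than $c_i\left(\varphi(\infty)+\varphi(-\infty)\right)$ to $K_g$ (and their vanishing hyperplane is all of $S^{n-1}$), but since these contributions are still independent of $v$ the argument is unaffected.
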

That is, no matter how hard you train the one-layer network $g$ to approximate $f$, no matter the amount of nodes, there will exist an input value $x$ such that $|f(x)-g(x)|>d$.


\section{Approximation of continuous functions vanishing at infinity}\label{sct: Uniform Approximation}

The purpose of this section is to prove Theorem \ref{thm:C_0 main}, which states that $C_0(\R^n)$ can be approximated by neural networks with one hidden layer. This result holds for a slightly larger class of activation functions than mentioned in Theorem \ref{thm:C_0 in 1 layer summary} (in fact, the optimal class), allowing discontinuities and polynomial growth.


However, we shall first prove this approximation theorem in the simpler case that $\varphi\in C_0(\R)\setminus\{0\}$. Furthermore, it will be useful to first consider $n=1$ and $n=2$.

\begin{lem}\label{lem:l=1,n=1}
Let $\varphi\in C_0(\R)\setminus\{0\}$. We have $\overline{\mathcal{N}^1_\varphi(\R)}=C_0(\R)\oplus\R1$.
\end{lem}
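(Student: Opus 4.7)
The plan is to prove the two inclusions separately. The inclusion $\overline{\mathcal{N}^1_\varphi(\R)}\subseteq C_0(\R)\oplus\R\cdot 1$ is immediate: every generator $x\mapsto\varphi(ax+b)$ lies in $C_0(\R)$ when $a\neq 0$ (since $\varphi\in C_0(\R)$) and equals the constant $\varphi(b)$ when $a=0$, so $\mathcal{N}^1_\varphi(\R)\subseteq C_0(\R)+\R\cdot 1$; the sum is uniformly closed because the value-at-infinity functional splits it topologically. The constants are already present: since $\varphi\not\equiv 0$, any $b_0\in\R$ with $\varphi(b_0)\neq 0$ gives $\varphi(b_0)\cdot 1=\varphi(0\cdot x+b_0)\in\mathcal{N}^1_\varphi(\R)$, so $\R\cdot 1\subseteq \overline{\mathcal{N}^1_\varphi(\R)}$.

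For the reverse inclusion $C_0(\R)\subseteq\overline{\mathcal{N}^1_\varphi(\R)}$, I would argue by Hahn--Banach duality: it suffices to show that any finite signed Borel measure $\mu$ on $\R$ with $\int\varphi(ax+b)\,d\mu(x)=0$ for all $a\in\R\setminus\{0\}$ and $b\in\R$ is zero. For each fixed $a$, varying $b$ rewrites the condition as $\varphi*\nu_a^\vee\equiv 0$ on $\R$, where $\nu_a^\vee$ is the pushforward of $\mu$ under $x\mapsto -ax$ (a finite signed Borel measure whose Fourier transform satisfies $\hat{\nu_a^\vee}(\xi)=\hat\mu(-a\xi)$). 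Passing to distributional Fourier transforms and using the fact that a sufficiently regular function multiplying a tempered distribution to zero must itself vanish on the distribution's support, one extracts that $\hat{\nu_a^\vee}$ vanishes on $\supp(\hat\varphi)$. The hypothesis $\varphi\in C_0(\R)\setminus\{0\}$ rules out $\supp(\hat\varphi)\subseteq\{0\}$ (otherwise $\varphi$ would be a polynomial, hence zero as an element of $C_0(\R)$), so one can pick $\xi_0\in\supp(\hat\varphi)\setminus\{0\}$; then $\hat\mu(-a\xi_0)=0$ for every $a\in\R\setminus\{0\}$, which forces $\hat\mu\equiv 0$ on $\R\setminus\{0\}$. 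Continuity of $\hat\mu$ (automatic since $\mu$ is finite) then yields $\hat\mu\equiv 0$, and hence $\mu=0$.

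The main technical obstacle is the Fourier step in the generic case $\varphi\in C_0(\R)\setminus L^1(\R)$, where $\hat\varphi$ is only a tempered distribution (possibly singular) and the product $\hat\varphi\cdot\hat{\nu_a^\vee}$ is not \emph{a priori} defined, because $\hat{\nu_a^\vee}$ is merely bounded continuous rather than a Schwartz multiplier. A standard workaround is to smooth both sides: approximate $\varphi$ by $\varphi*\sigma_\delta$ and $\nu_a^\vee$ by $\nu_a^\vee*\rho_\varepsilon$ using $C_c^\infty$ approximate identities, apply the clean distributional argument to the regularised objects where all products are legitimate, and then pass $\delta,\varepsilon\to 0$ to recover the desired pointwise conclusion on $\supp(\hat\varphi)$.
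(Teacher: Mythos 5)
Your overall architecture matches the paper's: the forward inclusion is proved the same way (each generator $x\mapsto\varphi(ax+b)$ lies in $C_0(\R)$ for $a\neq0$ and is constant for $a=0$, and $C_0(\R)\oplus\R 1$ is uniformly closed), and for the converse the paper likewise argues by Hahn--Banach and Riesz--Markov against a finite signed measure annihilating all $\varphi(a\cdot+b)$. The divergence is what you do with that measure: the paper at this point simply invokes Hornik's Theorem~5 (every bounded nonconstant activation is discriminatory with respect to finite signed measures), whereas you attempt to reprove this fact by Fourier analysis. The fact you are after is true, but your argument for it has a genuine gap.

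The problematic step is the passage from $\varphi*\nu_a^\vee\equiv0$ to ``$\hat{\nu_a^\vee}$ vanishes on $\supp\hat\varphi$''. For a general $\varphi\in C_0(\R)$ the distribution $\hat\varphi$ may have positive order, while $\hat{\nu_a^\vee}$ is merely bounded and continuous, so the product $\hat\varphi\cdot\hat{\nu_a^\vee}$ is undefined; and even if it were, extracting ``$\hat{\nu_a^\vee}=0$ on $\supp\hat\varphi$'' requires dividing by $\hat{\nu_a^\vee}$ on a neighbourhood where it is nonzero, which needs $\hat{\nu_a^\vee}$ to be $C^k$ up to the local order of $\hat\varphi$. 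Your proposed repair does not supply this regularity: convolving $\nu_a^\vee$ with $\rho_\varepsilon\in C^\infty_{\mathrm c}$ turns its Fourier transform into $\hat{\nu_a^\vee}\hat\rho_\varepsilon$, which gains rapid decay but no smoothness (smoothness on the Fourier side corresponds to decay of the measure, not to mollifying it), and $\varphi*\sigma_\delta$ is still only in $C_0(\R)$, not in $L^1(\R)$, so the classical convolution theorem remains unavailable. To close the argument along these lines you would need either to reduce to a relation $f*\nu=0$ with $f\in L^1(\R)$ and $\hat f(\xi_0)\neq0$ for some $\xi_0\neq0$ --- which mollification alone does not produce from a general $C_0$ function --- or to invoke the theory of spectra of bounded functions (Wiener's Tauberian theorem for $L^\infty$). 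The economical fix is the paper's: once you have the annihilating measure, cite the discriminatory property of bounded nonconstant functions rather than rederiving it.
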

\begin{proof}
We observe that any function
$$x\mapsto\varphi(ax+b)\qquad (a,b\in\R)$$
is in $C_0(\R)$ when $a\neq0$. If $a=0$, then the above map is constant, i.e., in $\R 1$. By using \eqref{eq:1 hidden layer} we find that $\mathcal{N}^1_\varphi(\R)\subseteq C_0(\R)\oplus\R1$, and since $C_0(\R)\oplus\R1$ is closed with respect to the supremum norm, we conclude that $\overline{\mathcal{N}^1_\varphi(\R)}\subseteq C_0(\R)\oplus\R1$.

The rest of the proof proceeds exactly as in the proof of \cite[Theorem 1]{Cybenko}, replacing $C([0,1]^n)$ with $C_0(\R)$ and replacing \cite[Lemma 1]{Cybenko} with \cite[Theorem 5]{Hornik} (quite similar to the proof of Proposition \ref{prop:C_0(R2) in 1-layer}). It is however good to note that this strategy naively fails for $C_0(\R^n)$ when $n>1$, as the functions $x\mapsto \varphi(a\cdot x+b)$ are not in $C_0(\R^n)$ when $n>1$.
\end{proof}


For $n>1$, a noncompact uniform approximation theorem requires new ideas not considered by, e.g., \cite{Cybenko,Hornik,KL}.

The core idea in the case $n=2$ is to give meaning to the formal expression
\begin{align}\label{eq:formal integral rotation}
\int_0^{2\pi}g\circ p_{(\cos\theta,\sin\theta)}\,d\theta,
\end{align}
for $p_{(\cos\theta,\sin\theta)}(x,y)=x\cos\theta+y\sin\theta$, and a suitable function $g$ such that \eqref{eq:formal integral rotation} is in $\overline{\MLP{1}{2}{1}}$ and in a way generates $C_0(\R^2)$.
What complicates the proof is that, whatever \eqref{eq:formal integral rotation} means, it
is not a Bochner integral with respect to the supremum norm. Worse yet, the integrand both has inseparable range and is discontinuous, because $\|g\circ p_{(\cos\theta,\sin\theta)}-g\circ p_{(\cos\theta',\sin\theta')}\|_\infty\geq\|g\|_\infty$ for every $\theta\neq\theta'\in[0,\pi)$. The following lemma shows that at least its pointwise interpretation is in $C_0(\R^2)$.
\begin{lem}\label{lem:f is O(R-3)}
Let $g\in C_\textnormal{c}(\R)$,
and define
\begin{align}\label{eq:def f 1}
f(x,y):=\int_0^{2\pi}(g\circ p_{(\cos\theta,\sin\theta)})(x,y)\,d\theta
\end{align}
for all $(x,y)\in\R^2$. Then $f\in C_0(\R^2)$. If moreover $\int_\R g(x)dx=0$, then there exists $C\in\R$ such that
\begin{align}\label{eq:O(R-3)}
|f(x,y)|\leq\frac{C}{1+\|(x,y)\|^3}\qquad ((x,y)\in\R^2).
\end{align}
\end{lem}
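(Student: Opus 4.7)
The plan is to exploit the full rotational symmetry of $f$. Writing $(x,y)=(R\cos\alpha,R\sin\alpha)$ one has $x\cos\theta+y\sin\theta=R\cos(\theta-\alpha)$, and substituting $\psi=\theta-\alpha$ together with $2\pi$-periodicity removes the $\alpha$-dependence entirely. Thus $f(x,y)=F(R)$ for a single-variable function $F(R):=\int_0^{2\pi}g(R\cos\psi)\,d\psi$. Continuity of $F$ (and hence of $f$) on $[0,\infty)$ is immediate from the continuity of $g$ and dominated convergence.

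Next I would reduce this to an integral on the real line. Splitting $\psi\in[0,2\pi]$ into $[0,\pi]$ and $[\pi,2\pi]$, using $\cos(2\pi-\psi)=\cos\psi$ for the second piece, and substituting $u=R\cos\psi$ (so $du=-R\sin\psi\,d\psi$ with $\sin\psi\geq0$ on $[0,\pi]$) gives
\[
F(R)=2\int_{-R}^{R}\frac{g(u)}{\sqrt{R^{2}-u^{2}}}\,du.
\]
Let $M>0$ be such that $\supp g\subseteq[-M,M]$. For $R>M$, the integration effectively runs over the fixed interval $[-M,M]$ and one has the trivial bound
\[
|F(R)|\leq 2\supnorm{g}\int_{-M}^{M}\frac{du}{\sqrt{R^{2}-M^{2}}}\longrightarrow0\qquad(R\to\infty),
\]
which, together with continuity and boundedness on the compact disk of radius $M$, yields $f\in C_{0}(\R^{2})$.

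To upgrade this to the $O(R^{-3})$ rate when $\int_{\R}g=0$, I would Taylor-expand the kernel: for $R\geq 2M$ and $u\in[-M,M]$,
\[
\frac{1}{\sqrt{R^{2}-u^{2}}}=\frac{1}{R}+\frac{u^{2}}{2R^{3}}+r(u,R),\qquad |r(u,R)|\leq \frac{C_{0}\,u^{4}}{R^{5}},
\]
with $C_{0}$ absolute. Integrating this expansion against $g$, the leading term produces $(2/R)\int_{\R}g=0$ by hypothesis, the quadratic term contributes a fixed multiple of $R^{-3}$, and the remainder contributes $O(R^{-5})$. Hence $|F(R)|\leq C'/R^{3}$ for $R\geq 2M$; combining with the uniform bound $\supnorm{f}\leq 2\pi\supnorm{g}$ on the complementary ball absorbs everything into a single constant $C$, yielding the claimed estimate $|f(x,y)|\leq C/(1+\norm{(x,y)}^{3})$.

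The only real obstacle is bookkeeping: getting the signs right in the substitution $u=R\cos\psi$ across the two half-intervals (where $\sin\psi$ changes sign) and ensuring the Taylor remainder is uniform in $u$. Both are harmless once $\supp g$ is separated from $\pm R$, so no subtle analysis is required beyond the change of variables.
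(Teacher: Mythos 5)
Your proof is correct and follows essentially the same route as the paper: reduce $f$ to a radial one-dimensional integral of $g$ against the kernel $2/\sqrt{R^2-u^2}$ (the paper writes it as $\int_{-a/R}^{a/R} g(Ru)\,\tfrac{2}{\sqrt{1-u^2}}\,du$ via $u=\sin\theta$, which is the same expression after rescaling), then expand the kernel to second order so that the hypothesis $\int_\R g=0$ kills the leading $R^{-1}$ term and leaves the $R^{-3}$ contribution. The only differences are cosmetic (cosine versus sine parametrisation, and where the rescaling $u\mapsto Ru$ is performed), so no further comment is needed.
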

\begin{proof}
We rewrite \eqref{eq:def f 1} by noting that, for $(x,y)=R(\cos\varphi,\sin\varphi)$, we have $p_{(\cos\theta,\sin\theta)}(x,y)=R\sin(\theta-\varphi+\pi)$. By a substitution $\theta\mapsto \theta+\varphi-\pi$, 
it follows that
\begin{align}\label{eq:f as g(Rsin)}
f(x,y)=\int_{0}^{2\pi}g(R\sin\theta)\,d\theta,
\end{align}
where $R=\sqrt{x^2+y^2}$. Uniform continuity of $g$ implies that $g(R_n\sin\theta)$ converges to $g(R\sin\theta)$ uniformly in $\theta$ whenever $R_n\to R$, hence proving continuity of $f$. 


Let $a>0$ be such that $\supp g\subseteq[-a,a]$.
If $R>a$ then there exists a $\delta=\delta(R)\in[0,\pi/2)$ such that $\sin(\delta)=a/R$, implying that $R\sin(\theta)\notin\supp g$ whenever $|\sin\theta|>\sin\delta$. Using this $\delta$ and subsequently making the substitution $u=\sin\theta$, we find
\begin{align*}
&f(x,y)\\
&=\int_0^\delta g(R\sin\theta)d\theta+\int_{\pi-\delta}^{\pi+\delta} g(R\sin\theta)d\theta+\int_{2\pi-\delta}^{2\pi} g(R\sin\theta)d\theta\\
&=\int_{-\delta}^\delta (g(R\sin\theta)+g(R\sin(\theta+\pi))d\theta\\
&=\int_{-a/R}^{a/R}(g(Ru)+g(-Ru))\frac{1}{\sqrt{1-u^2}}du\\
&=\int_{-a/R}^{a/R}g(Ru)\frac{2}{\sqrt{1-u^2}}du.
\end{align*}
As $\frac{2}{\sqrt{1-u^2}}=2+\mathcal{O}(u^2)$ for $u\to0$, there exists a $C>0$ such that, for large enough $R$, we have $|\frac{2}{\sqrt{1-u^2}}-2|\leq Cu^2$ for all $u\in[-a/R,a/R]$.
We obtain
\begin{align*}
\left|f(x,y)-2\int_{-a/R}^{a/R}g(Ru)du\right|\leq \int_{-a/R}^{a/R}|g(Ru)|Cu^2du,
\end{align*}
which by substitution $u\mapsto R^{-1}u$ becomes
\begin{align*}
\left|f(x,y)-2R^{-1}\int_{-a}^{a}g(u)du\right|\leq R^{-3}\int_{-a}^{a}|g(u)|Cu^2du.
\end{align*}
Hence $f(x,y)=\mathcal O(R^{-1})=\mathcal O(\|(x,y)\|^{-1})$, in particular $f\in C_0(\R^2)$.
If moreover $\int g(u)du=0$, then for large enough $R=\|(x,y)\|$ we obtain
\begin{align}\label{eq:bound that can be sharpened}
|f(x,y)|\leq R^{-3}\int_{-a}^a|g(u)|Cu^2\,du,
\end{align}
which implies the lemma.
\end{proof}

The following result shows that, although \eqref{eq:formal integral rotation} is not a Bochner integral, its pointwise interpretation is a limit of a particular sequence of Riemann sums as depicted in Figure \ref{fig:uniform approximation 1-layer increasing width}.
\begin{figure}
\begin{center}
\begin{subfigure}{.15\textwidth}
\scalebox{0.6}{
\begin{tikzpicture}
\path[shade, left color=teal, right color=teal,opacity=0.5] (-0.3,-2) -- (0.3,-2) -- (0.3,2) -- (-0.3,2) ;
\path[shade, left color=teal, right color=teal,opacity=0.5] (-2,-0.3) -- (-2,0.3) -- (2,0.3) -- (2,-0.3) ;
\end{tikzpicture}}
\end{subfigure}
\begin{subfigure}{.15\textwidth}
\scalebox{0.6}{
\begin{tikzpicture}
\path[shade, left color=teal, right color=teal,opacity=0.25] (-0.3,-2) -- (0.3,-2) -- (0.3,2) -- (-0.3,2) ;
\path[shade, left color=teal, right color=teal,opacity=0.25] (-2,-0.3) -- (-2,0.3) -- (2,0.3) -- (2,-0.3) ;
\path[shade, left color=teal, right color=teal,opacity=0.25] (-2,-2) -- (-2,-1.5757) -- (1.5757,2) -- (2,2) -- (2,1.5757) -- (-1.5757,-2) ;
\path[shade, left color=teal, right color=teal,opacity=0.25] (-2,2) -- (-2,1.5757) -- (1.5757,-2) -- (2,-2) -- (2,-1.5757) -- (-1.5757,2) ;
\end{tikzpicture}}
\end{subfigure}
\begin{subfigure}{.15\textwidth}
\scalebox{0.6}{
\begin{tikzpicture}
\path[shade, left color=teal, right color=teal,opacity=0.125] (-0.3,-2) -- (0.3,-2) -- (0.3,2) -- (-0.3,2) ;
\path[shade, left color=teal, right color=teal,opacity=0.125] (-2,-0.3) -- (-2,0.3) -- (2,0.3) -- (2,-0.3) ;
\path[shade, left color=teal, right color=teal,opacity=0.125] (-2,-2) -- (-2,-1.5757) -- (1.5757,2) -- (2,2) -- (2,1.5757) -- (-1.5757,-2) ;
\path[shade, left color=teal, right color=teal,opacity=0.125] (-2,2) -- (-2,1.5757) -- (1.5757,-2) -- (2,-2) -- (2,-1.5757) -- (-1.5757,2) ;
\path[shade, left color=teal, right color=teal,opacity=0.125] (-1.153144,2) --(-0.5037,2) -- (1.153144,-2) -- (0.5037,-2);
\path[shade, left color=teal, right color=teal,opacity=0.125] (1.153144,2) --(0.5037,2) -- (-1.153144,-2) -- (-0.5037,-2);
\path[shade, left color=teal, right color=teal,opacity=0.125] (2,-1.153144) --(2,-0.5037) -- (-2,1.153144) -- (-2,0.5037);
\path[shade, left color=teal, right color=teal,opacity=0.125] (-2,-1.153144) --(-2,-0.5037) -- (2,1.153144) -- (2,0.5037);
\end{tikzpicture}}
\end{subfigure}
\caption{First three elements of a sequence of 1-layer neural networks uniformly approximating a function in $C_0(\R^2)$. Cf. Lemma \ref{lem:Riemann sums}. To increase the locality of the limit function, the ridge functions $g\circ p_a$ need to satisfy $\int g(x)dx=0$, unlike what is shown in the picture. Note that $L^p$-convergence is out of the question, as each element of the sequence has infinite integral norm, cf. \cite[Section 7]{Pinkus}.}
\label{fig:uniform approximation 1-layer increasing width}
\end{center}
\end{figure}
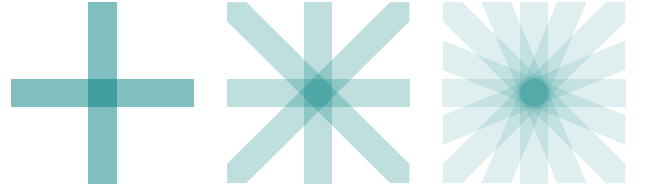
Hence, besides being in $C_0(\R^2)$, the function $f$ of Lemma \ref{lem:f is O(R-3)} is also an element of $\overline{\MLP{1}{2}{1}}$ for any $\varphi\in C_0(\R)\setminus\{0\}$.
\begin{lem}\label{lem:Riemann sums}
Let $g\in C_\textnormal{c}(\R)$ be Lipschitz continuous. We define
\begin{align}\label{eq:def f}
f(x,y):=\int_0^{2\pi}(g\circ p_{(\cos\theta,\sin\theta)})(x,y)\,d\theta
\end{align}
for all $(x,y)\in\R^2$ and
$$f_N:=\frac{2\pi}{N}\sum_{k=0}^{N-1}g\circ p_{(\cos\frac{2\pi k}{N},\sin\frac{2\pi k}{N})},$$
for all $N\in\N$. We have $f_N\in\overline{\MLP{1}{2}{1}}$ for all $\varphi\in C_0(\R)\setminus\{0\}$. Moreover, the sequence $(f_{2^m})_{m\geq1}$ converges uniformly to $f$.
\end{lem}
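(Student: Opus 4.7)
For the first assertion, I would reduce to the univariate case. Since $g \in C_\textnormal{c}(\R) \subseteq C_0(\R)$, Lemma \ref{lem:l=1,n=1} supplies one-layer univariate networks $h_j \in \MLP{1}{1}{1}$ with $h_j \to g$ uniformly on $\R$. For any unit vector $a \in \R^2$, writing $h_j(t)=\sum_i c_i\varphi(\alpha_i t+\beta_i)$ gives $(h_j\circ p_a)(x,y)=\sum_i c_i\varphi((\alpha_i a)\cdot(x,y)+\beta_i)$, which lies in $\MLP{1}{2}{1}$; and $\|h_j\circ p_a - g\circ p_a\|_\infty \leq \|h_j-g\|_\infty \to 0$. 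So $g\circ p_a \in \overline{\MLP{1}{2}{1}}$, and $f_N$, a finite linear combination of such ridges, also lies in $\overline{\MLP{1}{2}{1}}$.

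For the uniform convergence $f_{2^m}\to f$, I would use the same substitution as in the proof of Lemma \ref{lem:f is O(R-3)}. Writing $(x,y)=R(\cos\alpha,\sin\alpha)$ and $H_R^\alpha(\theta):=g(R\cos(\theta-\alpha))$, we have $f(x,y)=\int_0^{2\pi}H_R^\alpha(\theta)\,d\theta$ and $f_N(x,y)=\frac{2\pi}{N}\sum_{k=0}^{N-1}H_R^\alpha(2\pi k/N)$, which is precisely the $N$-point equispaced Riemann sum of the $2\pi$-periodic integrand $H_R^\alpha$. The lemma thus reduces to proving $\sup_{R\geq 0,\,\alpha}\bigl|\int_0^{2\pi}H_R^\alpha(\theta)\,d\theta-\tfrac{2\pi}{N}\sum_{k=0}^{N-1}H_R^\alpha(2\pi k/N)\bigr|=\mathcal O(1/N)$.

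The main obstacle is that the Lipschitz constant of $H_R^\alpha$ is of order $R\cdot\mathrm{Lip}(g)$, so the naive Riemann-sum bound $\mathcal O(\|(H_R^\alpha)'\|_\infty/N)$ degrades to $\mathcal O(R/N)$, which is useless as $R\to\infty$. The fix is to exploit that $g$ is compactly supported in some $[-a,a]$: for $R>a$, the set $\{\theta:H_R^\alpha(\theta)\neq 0\}$ has total measure at most $8a/R$ and splits into at most four sub-intervals of $[0,2\pi]$. I would then partition the $N$ intervals $I_k:=[2\pi k/N,2\pi(k+1)/N]$ into boundary intervals (those meeting $\partial\{H_R^\alpha\neq 0\}$; at most $8$ of them) and fully-interior intervals (contained in the interior of the support; at most $\lceil 4aN/(\pi R)\rceil$ of them). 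Each boundary interval contributes at most $2\|g\|_\infty\cdot 2\pi/N$ to the Riemann-sum error, while each interior interval contributes at most $(2\pi/N)^2\cdot R\cdot\mathrm{Lip}(g)$; multiplying counts by per-interval bounds cancels the $R$-dependence, giving totals $\mathcal O(\|g\|_\infty/N)$ and $\mathcal O(\mathrm{Lip}(g)/N)$ respectively, both independent of $R$. In the complementary regime $R\leq a$, the crude bound $\|(H_R^\alpha)'\|_\infty\leq a\cdot\mathrm{Lip}(g)$ yields $\mathcal O(1/N)$ directly. Combining, $\|f_N-f\|_\infty=\mathcal O(1/N)$, from which uniform convergence of $(f_{2^m})$ follows immediately.
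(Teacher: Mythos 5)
Your proof of the first assertion is essentially the paper's: both reduce to the univariate density statement of Lemma \ref{lem:l=1,n=1} and push it through the contraction $h\mapsto h\circ p_a$. For the convergence $f_{2^m}\to f$, however, you take a genuinely different and in fact stronger route. The paper splits $\R^2$ into a large disc and its complement: outside the disc it bounds $|f_{2^{M+p}}|$ by counting how many of the $2^M$ directions can place a given far-away point inside $\supp g$ (at most two, whence $\Phi_{2^{M+p}}\leq 2^{p+1}$ via the dyadic self-similarity of the sample points), and inside the disc it invokes Bochner integrability of $\theta\mapsto g\circ p_{(\cos\theta,\sin\theta)}$ restricted to the compact ball. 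That argument intrinsically uses the subsequence $N=2^m$ and gives no rate. Your argument is a direct, quantitative Riemann-sum estimate for the periodic integrand $H_R^\alpha$, exploiting the same geometric fact (the angular measure of the region where the integrand is nonconstant is $\mathcal O(a/R)$, which cancels the $\mathcal O(R)$ Lipschitz constant there); it yields $\|f_N-f\|_\infty=\mathcal O(1/N)$ for the \emph{full} sequence, which is more than the lemma claims. Two small repairs are needed. First, $\{\theta: H_R^\alpha(\theta)\neq0\}$ need not consist of four intervals, since $\{g\neq0\}$ need not be an interval; work instead with the superset $S=\{\theta:|\cos(\theta-\alpha)|\leq a/R\}$, which is a union of at most four closed arcs of total length at most $2\pi a/R$, and classify the $I_k$ as disjoint from $S$, meeting $\partial S$, or contained in $S$. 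Second, the count of fully-interior intervals should be $\lfloor 4aN/(\pi R)\rfloor$ rather than a ceiling: with the ceiling, in the regime $R\gg N$ the product of the count and the per-interval bound $(2\pi/N)^2R\,\mathrm{Lip}(g)$ does not cancel the $R$, whereas the floor (correct, since the intervals are disjoint and contained in $S$) gives a bound independent of $R$ as you intend. With these adjustments the argument is complete and correct.
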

\begin{proof}
The fact that $f_N\in\overline{\MLP{1}{2}{1}}$ follows by noting that Lemma \ref{lem:l=1,n=1} implies that $g\in C_0(\R)\subseteq \overline{\mathcal{N}^1_\varphi(\R)}$ and that the map $f\mapsto f\circ p_a$ is linear and bounded (with respect to the uniform norm) and sends $\mathcal{N}^1_\varphi(\R)$ into $\MLP{1}{2}{1}$, and hence sends $g\in\overline{\mathcal{N}^1_\varphi(\R)}$ to $g\circ p_a\in\overline{\MLP{1}{2}{1}}$.

For all $(x,y)\in\R^2$, we define the number
$$\Phi_N(x,y):=\#\left\{k\in\{0,\ldots,N-1\}\mid p_{(\cos\frac{2\pi k}{N},\sin\frac{2\pi k}{N})}(x,y)\in\supp g\right\},$$
which leads to the bound
\begin{align}\label{eq:bound f_N}
|f_N(x,y)|\leq\frac{2\pi\supnorm{g}}{N}\Phi_N(x,y).
\end{align}
If $R[\theta]:\R^2\to\R^2$ denotes the rotation by an angle $\theta$, we have
\begin{align}\label{eq:Phi characterisation}
\Phi_N(x,y)=\sum_{k=0}^{N-1}\Phi_1\big(R[\tfrac{2\pi k}{N}](x,y)\big).
\end{align}
By using \eqref{eq:Phi characterisation} twice, we obtain, for every $M,p\in\N$,
\begin{align}
\Phi_{2^{M+p}}(x,y)&=\sum_{k=0}^{2^{M+p}-1}\Phi_{1}\big(R[\tfrac{2\pi k}{2^{M+p}}](x,y)\big)\nonumber\\
&=\sum_{j=0}^{2^M-1}\sum_{r=0}^{2^p-1}\Phi_{1}\big(R[\tfrac{2\pi (2^pj+r)}{2^{M+p}}](x,y)\big)\nonumber\\
&=\sum_{r=0}^{2^p-1}\sum_{j=0}^{2^M-1}\Phi_{1}\big(R[\tfrac{2\pi j}{2^M}]\big(R[\tfrac{2\pi r}{2^{M+p}}](x,y)\big)\big)\nonumber\\
&=\sum_{r=0}^{2^p-1}\Phi_{2^{M}}\big(R[\tfrac{2\pi r}{2^{M+p}}](x,y)\big).\label{eq:Phi dividing by 2^p}
\end{align}
Moreover, for all $M\in\N$, the vectors $a_k:=(\cos\frac{2\pi k}{2^M},\sin\frac{2\pi k}{2^M})$ are pairwise linearly independent for $k=0,\ldots,2^{M-1}-1$ (and likewise for $k=2^{M-1},\ldots,2^M-1$). Any such linearly independent pair $a_k,a_{k'}$ forms a basis of $\R^2$ inducing a norm that is equivalent to the Euclidean norm, hence inducing a number $C_{kk'}>0$ such that
$$\|(x,y)\|\leq C_{kk'}(|a_k\cdot (x,y)|+|a_{k'}\cdot (x,y)|)\qquad (x,y\in\R).$$
We obtain discs $B_{r_{kk'}}(0)\subseteq\R^2$ around 0 with radii $r_{kk'}$ such that $(x,y)\notin B_{r_{kk'}}(0)$ implies that $p_{a_k}(x,y)\notin\supp g$ or $p_{a_{k'}}(x,y)\notin\supp g$, and hence we obtain a radius $r=r(M)>0$ such that for all $(x,y)\notin B_r(0)$ there is at most one $k\in\{0,\ldots,2^{M-1}-1\}$ such that $p_{a_k}(x,y)\in\supp g$, and similarly for $k\in\{2^{M-1},\ldots,2^M-1\}$. It follows that
$$\Phi_{2^M}(x,y)\leq 2\qquad ((x,y)\in\R^2\setminus B_r(0)).$$
Because the disc $B_r(0)$ is rotation invariant, it follows from \eqref{eq:Phi dividing by 2^p} that $\Phi_{2^{M+p}}(x,y)\leq 2^p\cdot 2$ for all $p\in\N$ and $(x,y)$ outside $B_r(0).$
By \eqref{eq:bound f_N}, we conclude that for all $M\in\N$ there exists an $r>0$ such that 
\begin{align*}
|f_{2^{M+p}}(x,y)|\leq \frac{4\pi\supnorm{g}}{2^M}\qquad(p\in\N,(x,y)\in\R^2\setminus B_r(0)).
\end{align*}
As $f\in C_0(\R^2)$ by Lemma \ref{lem:f is O(R-3)}, it follows that for every $\epsilon>0$ there exists an $M\in\N$ and an $r>0$ such that, for all $m\geq M$ we have
\begin{align}\label{eq:bound outside ball}
\sup_{(x,y)\in\R^2\setminus B_r(0)}|f_{2^{m}}(x,y)-f(x,y)|<\epsilon.
\end{align}

Restricting to the compact $\overline{B_r(0)}$, the function 
$$[0,2\pi)\to C_\textnormal{b}(\overline{B_r(0)}),\qquad\theta\mapsto (g\circ p_{(\cos\theta,\sin\theta)})\!\!\upharpoonright_{\overline{B_r(0)}}$$
is $\supnorm{\cdot}$-continuous and separable valued, and hence Bochner integrable. We define simple functions $s_N:[0,2\pi)\to C_\textnormal{b}(\overline{B_r(0)})$ by 
$$s_N:=\sum_{k=0}^{N-1}(g\circ p_{(\cos\frac{2\pi k}{N},\sin\frac{2\pi k}{N})})\!\!\upharpoonright_{\overline{B_r(0)}}\,\,\cdot\,1_{\big[\frac{2\pi k}N,\frac{2\pi (k+1)}{N}\big)},$$
where $1$ is the indicator function. We obtain $\int_{[0,2\pi)} s_N=f_N\!\!\upharpoonright_{\overline{B_r(0)}}\,$ and
\begin{align}\label{eq:supnorm theta}
&\|s_N(\theta)-(g\circ p_{(\cos\theta,\sin\theta)	})\!\!\upharpoonright_{\overline{B_r(0)}}\|_\infty\nonumber\\
&\leq cr\Big(\big|\cos\tfrac{2\pi k_{N,\theta}}{N}-\cos\theta\big|+\big|\sin\tfrac{2\pi k_{N,\theta}}{N}-\sin\theta\big|\Big),
\end{align}
where $c$ is the Lipschitz constant of $g$ and $k_{N,\theta}$ is the unique number such that $\theta\in\big[\frac{2\pi k_{N,\theta}}{N},\frac{2\pi (k_{N,\theta}+1)}{N}\big)$. As the latter interval has length $\frac{2\pi}{N}$ and $\cos$ has a Lipschitz constant of 1, we can bound \eqref{eq:supnorm theta} by $2cr\frac{2\pi}{N}$, which is independent of $\theta$. Hence $\eqref{eq:supnorm theta}$ converges to 0 uniformly in $\theta\in[0,2\pi)$. Therefore, by using that the Bochner integral commutes with the bounded linear map $f\mapsto f\!\!\upharpoonright_{\overline{B_r(0)}}\,$, and subsequently applying the definition of the Bochner integral, we obtain
$$f\!\!\upharpoonright_{\overline{B_r(0)}}\,~=\int_0^{2\pi}(g\circ p_{(\cos\theta,\sin\theta)})\!\upharpoonright_{\overline{B_r(0)}}\,d\theta=\lim_{N\to\infty}f_N\!\!\upharpoonright_{\overline{B_r(0)}}$$
uniformly.
Combining this with \eqref{eq:bound outside ball}, we obtain the lemma.
\end{proof}

The importance of the following lemma can be appreciated by noting that, for all $f\in\MLP{1}{2}{1}$, the corresponding $\psi$ is either constant or undefined.
\begin{lem}\label{lem:nonzero}
Let $\varphi\in C_0(\R)$, and let $g\in C_\textnormal{c}(\R)$ be Lipschitz continuous and satisfy $\int g(x)=0$ and 
$\int g(x)x^2\neq0$.
Let $f$ be defined by \eqref{eq:def f}. Then $f\in C_0(\R^2)\cap\overline{\MLP{1}{2}{1}}$ and $f(x,y)=\mathcal O(\|(x,y)\|^{-3})$ for $\|(x,y)\|\to\infty$. Furthermore, the integral
$$\psi(x)=\int_\R f(x,y)\,dy$$
defines a function $\psi\in L^1(\R)\cap C_0(\R)$, in fact, $|\psi(x)|\leq C(1+|x|)^{-2}$ for $|x|\to\infty$. Lastly, $\psi$ is nonzero.
\end{lem}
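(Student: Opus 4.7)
The first two claims about $f$ are immediate from the earlier lemmas: membership in $\overline{\MLP{1}{2}{1}}$ follows from Lemma \ref{lem:Riemann sums} (which applies since $g$ is Lipschitz and compactly supported), while $f\in C_0(\R^2)$ and the decay $f(x,y)=\mathcal O(\|(x,y)\|^{-3})$ follow from Lemma \ref{lem:f is O(R-3)}, using the hypothesis $\int g(x)\,dx=0$.

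For the properties of $\psi$, I would start from the pointwise bound $|f(x,y)|\leq C(1+\|(x,y)\|^3)^{-1}$. Splitting $\int_\R f(x,y)\,dy$ into the regions $|y|\leq|x|$ and $|y|>|x|$ and estimating each piece crudely yields $|\psi(x)|\leq C'(1+|x|)^{-2}$ for $|x|\to\infty$. This decay rate immediately places $\psi$ in $L^1(\R)\cap C_0(\R)$; continuity of $\psi$ is a dominated convergence argument, with dominating function $C(1+y^2)^{-3/2}$ valid uniformly for $x$ in any bounded set.

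The main obstacle is showing $\psi\not\equiv 0$. A naive application of Fubini to $\psi(x)=\int_\R\int_0^{2\pi}g(x\cos\theta+y\sin\theta)\,d\theta\,dy$, integrating first over $y$, would produce $\psi\equiv 0$ (since $\int g=0$ makes the inner integral vanish whenever $\sin\theta\neq 0$). This is \emph{unjustified}: the integrand is not absolutely integrable on $[0,2\pi)\times\R$, because at $\theta=0,\pi$ it is constant and nonzero in $y$ on a set of infinite measure. To circumvent this, I would exploit the radial symmetry of $f$ established in the proof of Lemma \ref{lem:f is O(R-3)}, namely $f(x,y)=h(R)$ with $R=\sqrt{x^2+y^2}$ and $h(R)=\int_0^{2\pi}g(R\sin\theta)\,d\theta$. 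The substitution $y=\sqrt{R^2-x^2}$ then converts $\psi$ into an Abel-type transform
\[\psi(x)=2\int_{|x|}^\infty h(R)\,\frac{R}{\sqrt{R^2-x^2}}\,dR.\]
Refining the estimate from Lemma \ref{lem:f is O(R-3)} one order further, using $(1-v^2/R^2)^{-1/2}=1+\tfrac{v^2}{2R^2}+\mathcal O(R^{-4})$ together with $\int g=0$, yields $h(R)=I_2 R^{-3}+\mathcal O(R^{-5})$ where $I_2:=\int v^2 g(v)\,dv\neq 0$ by hypothesis. Inserting this into the Abel-type integral and using the elementary identity $\int_{|x|}^\infty R^{-2}(R^2-x^2)^{-1/2}\,dR=x^{-2}$ (evaluated via $R=|x|\sec\phi$) gives the sharp asymptotic $\psi(x)\sim 2I_2 x^{-2}$ as $|x|\to\infty$. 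Since $I_2\neq 0$, $\psi$ is nonzero for all sufficiently large $|x|$, completing the proof.
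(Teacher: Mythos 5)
Your proposal is correct. The first three claims are handled exactly as in the paper: the membership statements are quoted from Lemmas \ref{lem:Riemann sums} and \ref{lem:f is O(R-3)}, the decay $|\psi(x)|\leq C(1+|x|)^{-2}$ comes from integrating the pointwise bound $|f(x,y)|\lesssim(1+\|(x,y)\|^{3})^{-1}$ in $y$ (the paper integrates $(1+|x|+|y|)^{-3}$ directly rather than splitting at $|y|=|x|$, but this is cosmetic), and continuity is dominated convergence. For the nonvanishing of $\psi$ you also share the paper's key step, namely sharpening the expansion in Lemma \ref{lem:f is O(R-3)} to $f(x,y)=I_2R^{-3}+\mathcal O(R^{-5})$ with $I_2=\int g(u)u^2\,du\neq0$. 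Where you genuinely diverge is in how you extract $\psi\not\equiv0$ from this: you pass to the Abel-type representation $\psi(x)=2\int_{|x|}^{\infty}h(R)R(R^2-x^2)^{-1/2}\,dR$ and compute the exact leading asymptotic $\psi(x)\sim 2I_2x^{-2}$, whereas the paper uses the much cruder observation that $R\geq|x|$ forces $f(x,y)=R^{-3}\bigl(I_2+\mathcal O(R^{-2})\bigr)$ to have a single sign in $y$ once $|x|$ is large, so that $\psi(x)=\int f(x,y)\,dy$ cannot vanish. Your route costs an extra substitution and the identity $\int_{|x|}^\infty R^{-2}(R^2-x^2)^{-1/2}\,dR=x^{-2}$, but buys the sharp asymptotic constant, showing in particular that the stated decay rate $(1+|x|)^{-2}$ is optimal; the paper's sign argument is shorter but only yields nonvanishing. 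Your remark that a naive Fubini interchange would wrongly give $\psi\equiv0$ is a sound caution not present in the paper, and correctly diagnosed: the integrand fails absolute integrability on $[0,2\pi)\times\R$ near $\theta\in\{0,\pi\}$.
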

\begin{proof}
The statements $f\in C_0(\R^2)\cap\overline{\MLP{1}{2}{1}}$ and $f(x,y)=\mathcal O(\| (x,y)\| ^{-3})$ follow from Lemmas \ref{lem:Riemann sums} and \ref{lem:f is O(R-3)}. 
We moreover deduce that
\begin{align*}
|\psi(x)|&\leq\int_\R|f(x,y)|\,dy\leq C'\int_\R\frac{1}{(1+|x|+|y|)^3}dy\\
&=2C'\int_0^\infty\frac{1}{(1+|x|+y)^3}dy=2C'\int_{1+|x|}^\infty z^{-3}\,dz\\
&=C'(1+|x|)^{-2}.
\end{align*}
Continuity of $\psi$ follows from \eqref{eq:O(R-3)} and the dominated convergence theorem, and hence $\psi\in L^1(\R)\cap C_0(\R)$.
For the last statement, we note that the proof of \eqref{eq:bound that can be sharpened} can be sharpened by using $\frac{2}{\sqrt{1-u^2}}=2+u^2+\mathcal{O}(u^4)$. Again denoting $R=\sqrt{x^2+y^2}$, we obtain
$$f(x,y)=R^{-3}\int_{-a}^ag(u)u^2\,du+\mathcal{O}(R^{-5}).$$
Without loss of generality, $\int g(u)u^2>0$. We have $R\geq |x|$, so if $|x|$ is large enough, $f(x,y)>0$ for all $y\in\R$. Hence $\psi(x)>0$ for such $x$.
\end{proof}

\begin{prop}\label{prop:C_0(R2) in 1-layer}
For all $\varphi\in C_0(\R)\setminus\{0\}$ we have $C_0(\R^2)\subseteq\overline{\MLP{1}{2}{1}}$.
\end{prop}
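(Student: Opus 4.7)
The plan is to argue by Hahn--Banach duality. Suppose for contradiction that $C_0(\R^2)\not\subseteq\overline{\MLP{1}{2}{1}}$. Then $\overline{\MLP{1}{2}{1}}\cap C_0(\R^2)$ is a proper closed subspace of $C_0(\R^2)$, so Hahn--Banach together with the Riesz--Markov representation of $C_0(\R^2)^*$ produces a nonzero finite signed Radon measure $\mu$ on $\R^2$ annihilating every function in it. The task reduces to deducing $\mu=0$.

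The main ingredient is the function $f$ of Lemma~\ref{lem:nonzero}: it lies in $C_0(\R^2)\cap L^1(\R^2)\cap\overline{\MLP{1}{2}{1}}$ (the $L^1$ membership following from the $\mathcal{O}(\|(x,y)\|^{-3})$ decay), is rotationally symmetric by construction, and is not identically zero since its marginal $\psi$ is nonzero. For every $c>0$, the dilate $f_c(x):=f(cx)$ arises from the same construction with $g$ replaced by the rescaled profile $g(c\,\cdot\,)$, which still satisfies the hypotheses of Lemma~\ref{lem:nonzero}, so $f_c\in C_0(\R^2)\cap L^1(\R^2)\cap\overline{\MLP{1}{2}{1}}$ as well. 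Since $\MLP{1}{2}{1}$ is translation invariant (from $\varphi(a\cdot(x-y)+b)=\varphi(a\cdot x+(b-a\cdot y))$), every translate $f_c(\,\cdot\,-y)$ also lies in $\overline{\MLP{1}{2}{1}}\cap C_0(\R^2)$ and is therefore annihilated by $\mu$; using that $f_c$ is even (a consequence of rotational symmetry), this rewrites as the convolution identity $(\mu*f_c)(y)=0$ for all $y\in\R^2$ and all $c>0$.

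Taking Fourier transforms converts this into the pointwise identity $\hat\mu(\xi)\,\hat f(\xi/c)=0$ for all $\xi\in\R^2$ and all $c>0$. Because $f\in L^1(\R^2)$ is nonzero, $\hat f$ is continuous and not identically zero; combined with rotational symmetry of $f$, the open set $\{\hat f\neq 0\}$ contains a rotation-invariant open annulus. For any $\xi\in\R^2\setminus\{0\}$, dilating this annulus by varying $c>0$ covers all radii in $(0,\infty)$, so there exists some $c>0$ with $\hat f(\xi/c)\neq 0$, forcing $\hat\mu(\xi)=0$. Continuity of $\hat\mu$ (since $\mu$ is finite) extends this to $\xi=0$, so $\hat\mu\equiv 0$, and by Fourier uniqueness $\mu=0$, contradicting the choice of $\mu$.

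The main subtlety is that the condition $\int g=0$ in Lemma~\ref{lem:nonzero} forces $\hat f(0)=0$, so $\hat f$ cannot be nonvanishing everywhere on $\R^2$ and a one-scale Wiener-type argument based on translates of a single $f$ cannot conclude. Dilating across all scales saves the day: the orbit of any nonempty rotation-invariant open set in $\R^2$ under dilations covers $\R^2\setminus\{0\}$. The elaborate setup of Lemmas~\ref{lem:f is O(R-3)}--\ref{lem:nonzero} is precisely to produce an $f$ that genuinely lies in $C_0(\R^2)$ -- unlike the ridge functions $g\circ p_a$ themselves, which live only in $C_\textnormal{b}(\R^2)$ and cannot be tested against functionals on $C_0(\R^2)$.
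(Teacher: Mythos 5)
Your proof is correct, and while it shares the paper's Hahn--Banach/Riesz--Markov setup and hinges on the same function $f$ constructed in Lemmas \ref{lem:f is O(R-3)}--\ref{lem:nonzero}, the final ``discriminatory'' step is genuinely different. The paper integrates out the direction orthogonal to $a$, reducing everything to the one-dimensional marginal $\psi(x)=\int_\R f(x,y)\,dy$, and then invokes Hornik's Theorem 5 to conclude that no nonzero finite measure annihilates all functions $v\mapsto\psi(a\cdot v+b)$; there the $\mathcal O(\|(x,y)\|^{-3})$ decay justifies the Fubini interchange and makes $\psi$ a bounded, nonconstant, integrable function. You instead stay in two dimensions and run a Wiener-type argument: translates give $\mu*f_c\equiv0$, the convolution theorem for a finite measure against an $L^1$ function gives $\hat\mu(\xi)\,\hat f(\xi/c)=0$, and the family of dilates (legitimate since $g(c\,\cdot)$ still satisfies the hypotheses of Lemma \ref{lem:nonzero}, or alternatively since $\overline{\MLP{1}{2}{1}}$ is stable under composition with invertible linear maps) sweeps the rotation-invariant nonempty open set $\{\hat f\neq0\}$ over all of $\R^2\setminus\{0\}$, forcing $\hat\mu\equiv0$ and hence $\mu=0$. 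Here the decay is used instead to place $f$ in $L^1(\R^2)$, so that $\hat f$ is continuous and Fourier uniqueness applies. What each route buys: yours is self-contained modulo standard Fourier facts and makes transparent exactly which features of $f$ matter (radial symmetry, integrability, nonvanishing), at the cost of the dilation trick needed because $\hat f$ has zeros; the paper's route outsources the hard part to a known one-dimensional discriminatory theorem and needs no Fourier analysis. Your motivational remark that $\hat f(0)=0$ is in fact true (it follows from $\int g=0$ via the back-projection formula), but the proof does not depend on it, since the dilation argument handles an arbitrary zero set of $\hat f$ away from a single annulus.
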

\begin{proof}
With the intention of finding a contradiction, we assume that $\overline{\MLP{1}{2}{1}}\cap C_0(\R^2)\neq C_0(\R^2)$. By the Hahn--Banach theorem, we obtain a continuous linear map $L:C_0(\R^2)\to\R$ such that $L\left(\overline{\MLP{1}{2}{1}}\cap C_0(\R^2)\right)=\{0\}$ and $L\neq 0$. By the Riesz--Markov--Kakutani theorem, there exists a finite signed Borel measure $\mu$ on $\R^2$ such that
$$L(f)=\int_{\R^2} f(x,y)d\mu(x,y)\qquad(f\in C_0(\R^2)).$$
As $L\neq0$, we obtain $\mu\neq0$. Let $g,f,\psi$ be as in Lemma \ref{lem:nonzero}.

For all $a\in\R^2$ and $b,y\in\R$, let $a_\perp\in\R^2$ be a unit vector such that $a\cdot a_\perp=0$. Define $\tilde f_{a,a_\perp,b,y}(v):=f(a\cdot v+b,a_\perp\cdot v+y)$ for $v\in\R^2$. If $h\in\mathcal{N}_\varphi^1(\R^n)$, then $x\mapsto h(Ax+b)$ belongs to $\mathcal{N}_\varphi^1(\R^n)$ as well, for any matrix $A\in\R^{n\times n}$ and any $b\in\R^n$; this follows directly from the definition. This easily extends to the closure, and hence $\tilde f_{a,a_\perp,b,y}\in\overline{\MLP{1}{2}{1}}\cap C_0(\R^2)$. By a substitution, and \eqref{eq:O(R-3)}, we find
\begin{align}\label{eq:discriminatory2}
\nonumber\int_{\R^2}\psi(a\cdot v+b)\,d\mu(v)&=\int_{\R^2}\int_\R f(a\cdot v+b,y)\,dy\,d\mu(v)\\
\nonumber&=\int_\R\int_{\R^2} f(a\cdot v+b,a_\perp\cdot v+ y)\,d\mu(v)\,dy\\
&=\int_\R L(\tilde f_{a,a_\perp,b,y})\,dy=0.
\end{align}
But, since $\psi$ is bounded and nonconstant, \cite[Theorem 5]{Hornik} implies that there exists no nonzero finite measure $\mu$ such that \eqref{eq:discriminatory2} holds for all $a\in\R^2$ and $b\in\R$. We obtain a contradiction, which implies the lemma.
\end{proof}

We now move to higher dimensions, and obtain a noncompact uniform approximation theorem in the case that $\varphi\in C_0(\R)\setminus\{0\}$.
\begin{prop}\label{prop:C_0(R^n) in 1-layer}
Let $\varphi\in C_0(\R)\setminus\{0\}$, and let $n\in\N$. Any function in $C_0(\R^n)$ is uniformly approximable on $\R^n$ by functions of the form
$$x\mapsto \sum_{j=1}^kc_j\,\varphi(a_j\cdot x+b_j),$$
for $k\in\N$, $b_j,c_j\in\R$, and $a_j\in\R^n$. In fact, for any $l\in\N$, $$C_0(\R^n)\subseteq\overline{\MLP{l}{n}{1}}.$$
\end{prop}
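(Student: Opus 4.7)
The plan is to extend the strategy of Proposition \ref{prop:C_0(R2) in 1-layer} to arbitrary $n \geq 2$, by replacing the circular integral in \eqref{eq:def f} with an integral over the unit sphere. I would first fix a Lipschitz $g \in C_\textnormal{c}(\R)$ whose first several even moments $\int_\R g(u) u^{2j}\,du$ (for $j = 0, 1, \ldots, m$) vanish, and set
$$f(x) := \int_{S^{n-1}} g(a\cdot x)\,d\sigma(a),\qquad x\in\R^n.$$
The analogue of Lemma \ref{lem:f is O(R-3)} then proceeds by aligning spherical coordinates with $x/\|x\|$ and substituting $u = R\cos\theta$ (with $R = \|x\|$), reducing $f(x)$ to $c_n R^{-1}\int_{-R}^R g(u)(1-u^2/R^2)^{(n-3)/2}\,du$; a Taylor expansion in $u^2/R^2$ combined with the vanishing moments yields $|f(x)|\leq C\|x\|^{-(n+1)}$ as soon as $m$ is chosen sufficiently large. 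This decay suffices to make the marginal $\psi(t) := \int_{\R^{n-1}} f(te_1 + y)\,dy$ well-defined, bounded, continuous, and in $L^1(\R)\cap C_0(\R)$, and a sharpening of the asymptotic of $f$ along $e_1$ confirms $\psi\neq 0$.

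The analogue of Lemma \ref{lem:Riemann sums} would then show $f\in\overline{\MLP{1}{n}{1}}$. On any closed ball $\overline{B_r(0)}$, the map $a\mapsto(g\circ p_a)\!\upharpoonright_{\overline{B_r(0)}}$ from $S^{n-1}$ to $C_\textnormal{b}(\overline{B_r(0)})$ is norm-continuous with separable range, hence Bochner-integrable, so Riemann sums over a refining sequence of equidistributed partitions of $S^{n-1}$ converge uniformly on the ball. Outside the ball, for $\|x\|$ large the set $\{a\in S^{n-1}:a\cdot x\in\supp g\}$ is a thin equatorial band about $x/\|x\|$ of angular width $O(\|x\|^{-1})$; bounding the number of partition points landing in this band (the higher-dimensional analogue of the $\Phi_{2^M}$-bound used in Lemma \ref{lem:Riemann sums}), together with the $\|x\|^{-(n+1)}$ tail of $f$, would give uniform convergence on all of $\R^n$. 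Each partial sum $\sum_k c_k\,g\circ p_{a_k}$ lies in $\overline{\MLP{1}{n}{1}}$, since $g\in C_0(\R)\subseteq\overline{\mathcal{N}^1_\varphi(\R)}$ by Lemma \ref{lem:l=1,n=1} and the lift $\cdot\circ p_{a_k}$ is a bounded linear map into $\overline{\MLP{1}{n}{1}}$.

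The argument would then conclude as in Proposition \ref{prop:C_0(R2) in 1-layer}: if $C_0(\R^n)\not\subseteq\overline{\MLP{1}{n}{1}}$, the Hahn--Banach and Riesz--Markov--Kakutani theorems produce a nonzero finite signed Borel measure $\mu$ on $\R^n$ annihilating $\overline{\MLP{1}{n}{1}}\cap C_0(\R^n)$; for each $a\in\R^n$ and $b\in\R$, extending $a$ to an orthonormal basis of $\R^n$ and applying Fubini after substitution (mimicking \eqref{eq:discriminatory2}) gives $\int_{\R^n}\psi(a\cdot v + b)\,d\mu(v) = 0$, contradicting \cite[Theorem 5]{Hornik} since $\psi$ is bounded, continuous and nonconstant. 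The $l\geq 2$ statement would follow by rerunning the same Riemann-sum lift inside $\overline{\MLP{l}{n}{1}}$, once the one-dimensional inclusion $C_0(\R)\subseteq\overline{\mathcal{N}^l_\varphi(\R)}$ has been upgraded from Lemma \ref{lem:l=1,n=1} by a minor extension of its proof. The main obstacle will be the spherical-counting step: for $n=2$, pairwise linear independence of the partition vectors on $S^1$ makes the count elementary, whereas for $n\geq 3$ one must quantify how evenly a refining partition of $S^{n-1}$ samples thin equatorial bands, e.g.\ via recursive hemispherical bisection or a spherical design yielding (partition points in a band of width $\delta$) $\leq CN\delta$ in a partition of size $N$.
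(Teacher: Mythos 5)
Your proposal takes a genuinely different route from the paper for $n\geq 3$, and the comparison is instructive. The paper does \emph{not} redo the spherical construction in higher dimensions. Instead it proves the $n=2$ case once (Proposition \ref{prop:C_0(R2) in 1-layer}) and then inducts on $n$ using the locally compact Stone--Weierstrass theorem: $C_0(\R^n)=\overline{C_0(\R)\otimes C_0(\R^{n-1})}$, so it suffices to approximate $f_1(x_1)f_2(x_2,\ldots,x_n)$; by the induction hypothesis one may take $f_2=\varphi\circ p$ for an affine $p$, whence the target is $(f_1\otimes\varphi)\circ P$ for an affine $P:\R^n\to\R^2$, and $f_1\otimes\varphi\in C_0(\R^2)$ is handled by the $n=2$ case. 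This completely sidesteps the part of your argument that you yourself flag as the main obstacle: constructing a refining partition of $S^{n-1}$ whose Riemann sums are uniformly controlled outside large balls, i.e.\ the estimate that a band of angular width $\delta$ about an arbitrary equator contains at most $CN\delta$ of the $N$ partition points, uniformly along the refinement. In the $n=2$ proof this is done by an explicit dyadic rotation trick that has no obvious analogue on $S^{n-1}$ (there is no comparable group of symmetries permuting the sample points), so this step is a real gap in your write-up, not a routine verification; low-discrepancy spherical point sets could plausibly supply it, but that would require a separate construction and proof. The analytic half of your sketch (vanishing moments forcing $|f(x)|\leq C\|x\|^{-(n+1)}$, the nonzero marginal $\psi$, and the Hahn--Banach/Riesz--Markov/Hornik duality argument) is sound and faithfully parallels Lemmas \ref{lem:f is O(R-3)}--\ref{lem:nonzero}, though you should state explicitly that the next even moment $\int g(u)u^{2m+2}\,du$ must be nonzero to conclude $\psi\neq0$. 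For the $l\geq2$ claim, note also that ``upgrading'' $C_0(\R)\subseteq\overline{\mathcal N_\varphi^l(\R)}$ is not entirely immediate from the recursive definition \eqref{eq:l hidden layers}; the paper handles depth by composing $f\in\overline{\MLP{l-1}{n}{1}}$ with a $C_0$ function $g$ equal to the identity on the range of $f$, so that $f=g\circ f\in\overline{\MLP{l}{n}{1}}$, which is the clean way to fill that step.
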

\begin{proof}
We prove the first statement of the proposition by induction on $n$, and notice that $n=1$ is Lemma \ref{lem:l=1,n=1} and $n=2$ is Proposition \ref{prop:C_0(R2) in 1-layer}. For $n>2$, we use $C_0(\R^n)=\overline{C_0(\R)\otimes C_0(\R^{n-1})}$, which follows since $C_0(\R)\otimes C_0(\R^{n-1})$ is a subalgebra of $C_0(\R^n)$ that vanishes nowhere and separates points, and hence its closure equals $C_0(\R^n)$ by the locally compact version of the Stone--Weierstrass theorem. Therefore, in order to prove the first statement of the proposition it suffices to show that any function
$$(x_1,\ldots,x_n)\mapsto f_1(x_1)f_2(x_2,\ldots,x_n)$$
can be uniformly approximated by one-layer networks, for $f_1\in C_0(\R)$ and $f_2\in C_0(\R^{n-1})$. By induction hypothesis, $f_2$ can be uniformly approximated by linear combinations of functions of the form $\varphi\circ p$, for affine maps $p:\R^{n-1}\to\R$, so we may assume without loss of generality that $f_2=\varphi\circ p$ for a fixed affine map $p:\R^{n-1}\to\R$. It then suffices to approximate the function
$$(x_1,\ldots,x_n)\mapsto f_1(x_1)(\varphi\circ p)(x_2,\ldots,x_n),$$
which equals $(f_1\otimes\varphi)\circ P$ for the affine map $P:\R^n\to\R^2$ given by $P(x_1,\ldots,x_n)=(x_1,p(x_2,\ldots,x_n))$.
The function $f_1\otimes\varphi\in C_0(\R^2)$ can be approximated by one-layer networks by using Proposition \ref{prop:C_0(R2) in 1-layer}, and one-layer networks composed with an affine map $P$ are still one-layer networks, yielding the first statement of the proposition.

Let $f\in \overline{\MLP{l-1}{n}{1}}$ and let $g\in C_0(\R)$ equal the identity on the range of $f$. By Lemma \ref{lem:l=1,n=1}, $g$ is uniformly approximated by one-layer networks $(g_m)_{m\geq1}\subseteq\mathcal N_\varphi^1(\R)$. We write $g\circ f-g_m\circ f_m=(g\circ f-g\circ f_m)+(g\circ f_m-g_m\circ f_m)$ for a sequence $(f_m)_{m\geq1}\subseteq\MLP{l-1}{n}{1}$ converging uniformly to $f$. By the uniform continuity of $g$ and an $\epsilon/2$ argument, we obtain $g\circ f=\lim_m g_m\circ f_m\in\overline{\MLP{l}{n}{1}}$. Hence, $f=g\circ f\in\overline{\MLP{l}{n}{1}}$. A straightforward induction argument yields the proposition.
\end{proof}

As a consequence, we obtain a noncompact uniform approximation theorem for all activation functions in the optimal class, which -- as argued below -- has Theorem \ref{thm:C_0 in 1 layer summary} as a special case.
\begin{thm}\label{thm:C_0 main}
A function $\varphi:\R\to\R$ satisfies 
\begin{align}\label{eq:condition}
\overline{\mathcal{N}^1_\varphi(\R)}\cap C_0(\R)\neq\{0\}
\end{align}
if and only if for all $n,l\in\N$ we have
$$C_0(\R^n)\subseteq \overline{\MLP{l}{n}{1}}.$$
\end{thm}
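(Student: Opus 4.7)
The $(\Leftarrow)$ direction is immediate: taking $n=l=1$ yields $C_0(\R)\subseteq\overline{\mathcal{N}^1_\varphi(\R)}$, whose intersection with $C_0(\R)$ is the whole of $C_0(\R)\neq\{0\}$. For the $(\Rightarrow)$ direction, my plan is to bootstrap off Proposition \ref{prop:C_0(R^n) in 1-layer}, which already handles activations lying in $C_0(\R)\setminus\{0\}$. Condition \eqref{eq:condition} hands me exactly such a function: pick any nonzero $\psi\in\overline{\mathcal{N}^1_\varphi(\R)}\cap C_0(\R)$. The heart of the argument is then to establish
$$\mathcal{N}^l_\psi(\R^n)\subseteq\overline{\mathcal{N}^l_\varphi(\R^n)}\qquad(n,l\in\N),$$
by induction on $l$, after which Proposition \ref{prop:C_0(R^n) in 1-layer} applied to the activation $\psi\in C_0(\R)\setminus\{0\}$ yields $C_0(\R^n)\subseteq\overline{\mathcal{N}^l_\psi(\R^n)}\subseteq\overline{\mathcal{N}^l_\varphi(\R^n)}$, as desired.

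For the base case $l=1$, I approximate $\psi(y)$ uniformly in $y\in\R$ by finite sums $\sum_{j}c_j\varphi(\alpha_j y+\beta_j)$; substituting $y=a\cdot x+b$ preserves uniform closeness, exhibiting $x\mapsto\psi(a\cdot x+b)$ as a uniform limit of elements of $\mathcal{N}^1_\varphi(\R^n)$, and taking spans finishes the case. For the inductive step $l\geq2$, take $f\in\mathcal{N}^{l-1}_\psi(\R^n)$ and $b\in\R$; by the induction hypothesis $f=\lim_m f_m$ uniformly with $f_m\in\mathcal{N}^{l-1}_\varphi(\R^n)$, and uniform continuity of $\psi\in C_0(\R)$ gives $\psi(f_m+b)\to\psi(f+b)$ uniformly on $\R^n$. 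For each fixed $m$, applying the same substitution trick with $y\mapsto f_m(x)+b$ exhibits $\psi(f_m+b)$ as a uniform limit of $\sum_j c_j\varphi(\alpha_j f_m+(\alpha_j b+\beta_j))$; since $\mathcal{N}^{l-1}_\varphi(\R^n)$ is a linear span, $\alpha_j f_m\in\mathcal{N}^{l-1}_\varphi(\R^n)$, so each summand is of the required form $\varphi(g+b')$ and thus lies in $\mathcal{N}^l_\varphi(\R^n)$. Hence $\psi(f_m+b)\in\overline{\mathcal{N}^l_\varphi(\R^n)}$, and its uniform limit $\psi(f+b)$ lies in that closure too; taking spans completes the induction.

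The main thing to watch is that uniform convergence must survive two compositions in the inductive step: the outer composition $\psi\circ f_m\to\psi\circ f$ needs uniform continuity, which is free because $\psi\in C_0(\R)$, and the inner approximation of $\psi$ by $\varphi$-networks must produce summands that still sit inside $\mathcal{N}^l_\varphi(\R^n)$, which works because the scaling coefficients $\alpha_j$ are absorbed by the vector-space structure of the $(l-1)$-th layer. Note that no hypothesis beyond the existence of the bootstrap $\psi$ is placed on $\varphi$ itself, matching the generality claimed in the theorem.
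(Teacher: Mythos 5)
Your proposal is correct and follows essentially the same route as the paper: fix a nonzero $\psi\in\overline{\mathcal{N}^1_\varphi(\R)}\cap C_0(\R)$, prove $\mathcal{N}^l_\psi(\R^n)\subseteq\overline{\MLP{l}{n}{1}}$ by induction on $l$ using uniform continuity of $\psi$, an $\epsilon/2$ argument, and the observation that inserting the $\varphi$-approximation of $\psi$ into each node stays inside $\mathcal{N}^l_\varphi(\R^n)$ because the scalars $\alpha_j$ are absorbed by the span, and then invoke Proposition \ref{prop:C_0(R^n) in 1-layer}. The only cosmetic difference is that the paper unifies the base case by defining $0$-layer networks as linear maps, whereas you treat $l=1$ directly by substitution.
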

\begin{proof}
The `if' direction follows by taking $l=n=1$. 

For the converse direction, fix any $\chi\in \overline{\mathcal{N}^1_\varphi(\R)}\cap C_0(\R)\setminus\{0\}$. By Proposition \ref{prop:C_0(R^n) in 1-layer}, we obtain
$C_0(\R^n)\subseteq \overline{\mathcal{N}_{\chi}^l(\R^n)}.$ It will therefore suffice to show that $\overline{\mathcal{N}_{\chi}^l(\R^n)}\subseteq \overline{\mathcal{N}_{\varphi}^l(\R^n)}$, which we will do by induction on $l$.
By defining the space of 0-layer neural networks as 
$$\mathcal{N}_\phi^0(\R^n)=\{p:\R^n\to\R~:~p\text{ linear}\},$$
independently of $\phi$, the induction basis $\overline{\mathcal{N}_{\chi}^0(\R^n)}\subseteq \overline{\mathcal{N}_{\varphi}^0(\R^n)}$ follows trivially, and, \eqref{eq:l hidden layers} holds also for $l=1$ and can hence be applied in the induction step.
Let $l\in\N$ and assume as an induction hypothesis that $\overline{\mathcal N_\chi^{l-1}(\R^n)}\subseteq\overline{\mathcal N_\varphi^{l-1}(\R^n)}$.
We will show that $\mathcal{N}^l_{\chi}(\R^n)\subseteq \overline{\mathcal{N}_{\varphi}^l(\R^n)}$ by taking $g\in\mathcal{N}^l_{\chi}(\R^n)$ arbitrary. There exist $J\in\N$, $f_j\in\overline{\mathcal{N}^{l-1}_{\varphi}(\R^n)}$, and $b_j,c_j\in\R$ ($j=1,\ldots,J$) such that
\begin{align*}
g(x)&=\sum_{j=1}^Jc_j\,\chi(f_j(x)+b_j).
\end{align*}
We let $(\chi_m)_{m\geq1}\subseteq\mathcal{N}^1_\varphi(\R)$ be such that $\|\chi_m-\chi\|_\infty\to0$, and we let $(f_{j,m})_{m\geq1}\subseteq\MLP{l-1}{n}{1}$ be such that $\|f_{j,m}-f_j\|_\infty\to0$ as $m\to\infty$. We define the functions 
\begin{align*}
g_m(x)&:=\sum_{j=1}^Jc_j\,\chi_m(f_{j,m}(x)+b_j),
\end{align*}
and conclude -- from the uniform continuity of $\chi\in C_0(\R)$ and an $\epsilon/2$-argument -- that $\|g_m-g\|_\infty\to0$.
We may assume that $\chi_m$ is of the form
$$\chi_m(x)=\sum_{k=1}^{K_m}\tilde c_{k,m}\varphi(\tilde a_{k,m}x+\tilde b_{k,m}),$$
for $K_m\in\N$, $\tilde a_{k,m},\tilde b_{k,m},\tilde c_{k,m}\in\R$, which implies
\begin{align*}
g_m(x)&=\sum_{j=1}^J\sum_{k=1}^{K_m}c_j\tilde c_{k,m}\varphi(\tilde a_{k,m}f_{j,m}(x)+a_j\tilde b_{k,m}+b_j),
\end{align*}
and therefore $g_m\in\MLP{l}{n}{1}$ (conceptually, we have inserted the hidden layer of $\chi_m$ inside each node of the last hidden layer of $g_m$). We deduce that $g\in\overline{\MLP{l}{n}{1}}$, hence, $\mathcal{N}^l_{\chi}(\R^n)\subseteq \overline{\MLP{l}{n}{1}}$, which immediately implies $\overline{\mathcal{N}^l_{\chi}(\R^n)}\subseteq \overline{\MLP{l}{n}{1}}$, the required induction step. We conclude that for all $n,l\in\N$ we have
$$C_0(\R^n)\subseteq \overline{\mathcal N^l_\chi(\R^n)}\subseteq\overline{\MLP{l}{n}{1}},$$
completing the proof.
\end{proof}

We make the case that the assumption \eqref{eq:condition} is satisfied by most activation functions. First of all, we show that this assumption is implied by the assumptions of Theorem \ref{thm:C_0 in 1 layer summary}, which hold for activation functions like for instance ReLU, Leaky ReLU, ELU, GELU, Swish, Softplus, and all sigmoidal or $C_0$ functions.

\begin{proof}[Proof of Theorem \ref{thm:C_0 in 1 layer summary}.]
We define $\chi:=\varphi_1-\varphi\in C_\textnormal{b}(\R)$, where $\varphi_m(x):=\varphi(x-m)$. It follows that $\chi$ has finite limits at $\pm\infty$. 

We now claim that $\chi$ is nonconstant, which we shall prove by contradiction. 

Suppose that $\chi$ is constant, and define
\begin{align}\label{eq:def psi linear}
\psi(x):=\varphi(x)-a_2x-b_2.
\end{align}
Then $\lim_{x\to-\infty}\psi(x)=0$. Furthermore, $\xi(x):=\psi(x-1)-\psi(x)$ satisfies 
$$\xi(x)=\varphi(x-1)-a_2(x-1)-\varphi(x)+a_2x=\chi(x)+a_2.$$
So $\xi$ is constant, and moreover $\xi(x)=\lim_{x\to -\infty}\xi(x)=0$, which implies $\psi(x)=\psi(x-1)$ for all $x\in\R$, i.e. $\psi$ is periodic. Because $\psi$ has a defined limit at $-\infty$, it must be constant. It follows from \eqref{eq:def psi linear} that $\varphi$ is linear, which contradicts our assumptions.

As $\chi$ is nonconstant, we obtain a nontrivial function
$$\chi_1-\chi=\varphi_2-2\varphi_1+\varphi\in \mathcal N_\varphi^1(\R)\cap C_0(\R)\setminus\{0\},$$
so the assumption of Theorem \ref{thm:C_0 main} is satisfied.
\end{proof}
The same argument repeated shows that continuous and nonpolynomial but asymptotically polynomial activation functions (converging to potentially different polynomials at $-\infty$ and $\infty$) satisfy $\overline{\mathcal{N}^1_\varphi(\R)}\cap C_0(\R)\neq\{0\}$, i.e. \eqref{eq:condition}. This offers an alternative perspective on \cite{LLPS} (cf. \cite[Theorem 3.1]{Pinkus}). The noncompact case is more subtle, as exemplified by the fact that the sum of a sigmoid and \textit{any} even function satisfies \eqref{eq:condition}. Another standard argument shows that \eqref{eq:condition} is satisfied by discontinuous activation functions like the binary step function as well.

\section{Bounded activation functions with identical left and right limits}\label{sct:CR}
In this section we precisely characterise the space of uniformly approximable functions in the easiest situation, namely in the case that $\varphi(-\infty)$ and $\varphi(\infty)$ are finite and equal.

We recall the definition of the commutative resolvent algebra from \cite{vN19}:
\begin{defn}
For $n\in\N$, the (real-valued part of the) commutative resolvent algebra on $\R^n$ is given by
\begin{align*}
\Cr(\R^n)=\overline{\spn}\left\{g\circ P_V\mid
\begin{aligned}&V\subseteq\R^n\text{ linear subspace, }\\
&g\in C_0(V)
\end{aligned}\right\},
\end{align*}
where $P_V$ denotes the orthogonal projection onto the linear subspace $V\subseteq\R^n$.
\end{defn}
There exists a slightly different characterisation of $\Cr(\R^n)$.
\begin{lem}
For all $n\in\N$ we have
\begin{align}\label{eq:alt characterisation Cr}
\Cr(\R^n)=\overline{\spn}\left\{g\circ P\mid
\begin{aligned}
&P:\R^n\to\R^k\text{ linear, }\\
&g\in C_0(\R^k),~ k\in\Z_{\geq0}
\end{aligned}
\right\}.
\end{align}
\end{lem}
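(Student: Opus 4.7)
The plan is to prove the two inclusions separately. The inclusion $\Cr(\R^n)\subseteq$ (RHS of \eqref{eq:alt characterisation Cr}) will be essentially definitional, while the converse requires a small argument to replace an arbitrary linear map by an orthogonal projection onto its ``effective'' domain.

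For the easy inclusion, given a linear subspace $V\subseteq\R^n$ and $g\in C_0(V)$, I would fix an isometric isomorphism $\iota:V\isom\R^{\dim V}$. Setting $\tilde g:=g\circ\iota^{-1}\in C_0(\R^{\dim V})$ and $\tilde P:=\iota\circ P_V:\R^n\to\R^{\dim V}$, I obtain $g\circ P_V=\tilde g\circ \tilde P$, which belongs to the generating set on the RHS. Taking closed spans yields $\Cr(\R^n)\subseteq$ RHS.

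For the converse, fix any linear $P:\R^n\to\R^k$ and $g\in C_0(\R^k)$, and let $V:=(\ker P)^\perp\subseteq\R^n$. The restriction $P\!\!\upharpoonright_V:V\to\ran P$ is a linear bijection between finite-dimensional inner product spaces, hence a homeomorphism. Define $h:V\to\R$ by $h(v):=g(P(v))$. Then $h\in C_0(V)$, because $\|v\|\to\infty$ in $V$ forces $\|P(v)\|\to\infty$ in $\ran P\subseteq\R^k$ (by properness of $P\!\!\upharpoonright_V$), and $g\in C_0(\R^k)$ vanishes at infinity. For arbitrary $x\in\R^n$, writing $x=P_V(x)+(x-P_V(x))$ with $x-P_V(x)\in\ker P$ gives $P(x)=P(P_V(x))$, so
\begin{equation*}
(g\circ P)(x)=g(P(P_V(x)))=h(P_V(x))=(h\circ P_V)(x).
\end{equation*}
Thus $g\circ P=h\circ P_V$ with $h\in C_0(V)$, showing $g\circ P\in\Cr(\R^n)$ and hence (RHS)\,$\subseteq\Cr(\R^n)$.

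The main -- and only -- obstacle is the observation that $P\!\!\upharpoonright_V$ is proper, which is what guarantees $h\in C_0(V)$; this is immediate from finite-dimensionality but is the step that makes the characterisations genuinely equivalent. No other subtleties arise, and the proof fits in a few lines.
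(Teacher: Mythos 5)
Your proof is correct and follows essentially the same route as the paper's: both directions rest on the identity $g\circ P=(g\circ P\!\!\upharpoonright_V)\circ P_V$ with $V=(\ker P)^\perp$, together with the observation that the linear isomorphism $P\!\!\upharpoonright_V:V\to\ran P$ is proper, so that $g\circ P\!\!\upharpoonright_V\in C_0(V)$. The only cosmetic difference is that you spell out the easy inclusion via an isometric identification $V\cong\R^{\dim V}$, which the paper leaves implicit.
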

\begin{proof}
Each function $g\circ P$, for $P:\R^n\to\R^k$ linear and $g\in C_0(\R^k)$, can be written in the form 
\begin{align}\label{eq:different characterisation g circ P}
g\circ P=\tilde{g}\circ P_V
\end{align}
for $V:=(\ker P)^\perp$, $P_V:\R^n\to V$ the corresponding orthogonal projection, and $\tilde g:=g\circ P\!\!\upharpoonright_V\,$. As $P\!\!\upharpoonright_V\,\,:V\to\ran P\subseteq\R^k$ is a linear isomorphism, and $g\!\!\upharpoonright_{\ran P}\,\,\in C_0(\ran P)$, it follows that $\tilde g\in C_0(V)$. This implies $\supseteq$ of \eqref{eq:alt characterisation Cr} and the converse inclusion follows similarly (if not slightly easier).
\end{proof}
The (real part of the) commutative resolvent algebra is a closed subalgebra of $C_\textnormal{b}(\R^n)$, as shown in the proof of Corollary \ref{cor:algebra}, or, alternatively, in \cite[Lemma 2.2]{vN19}. 
Although \cite{vN19} works over the complex numbers, the above remark (over the real numbers) follows immediately by taking the real part.

Closed subalgebras of $C_\textnormal{b}(\R^n)$ relate to deep neural networks in the following way.

\begin{lem}\label{lem:algebra}
Let $\varphi\in C_\textnormal{b}(\R)$ and $n\in\N$. If $A$ is a (uniformly) closed subalgebra of $C_\textnormal{b}(\R^n)$ with $\MLP{1}{n}{1}\subseteq A$, then $\MLP{l}{n}{1}\subseteq A$ for any $l\in\N$.
\end{lem}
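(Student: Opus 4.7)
The plan is a straightforward induction on $l$, with base case $l=1$ being the hypothesis. For the inductive step, assume $\MLP{l-1}{n}{1}\subseteq A$ with $l\geq 2$; by linearity of $A$ it suffices to prove that $\varphi\circ(f+b)\in A$ for arbitrary $f\in\MLP{l-1}{n}{1}$ and $b\in\R$.

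Two preliminary observations make the induction go through. First, because $\varphi\in C_\textnormal{b}(\R)$, every function in $\MLP{1}{n}{1}$ is bounded, and consequently so is every function in $\MLP{l-1}{n}{1}$ for $l\geq 2$; hence $f+b$ takes values in some compact interval $I\subseteq\R$. Second, I would dispose of the constants: if $\varphi\equiv 0$ then every $\MLP{l}{n}{1}$ is $\{0\}$ and the statement is vacuous, and otherwise there is some $b_0\in\R$ with $\varphi(b_0)\neq 0$, in which case the constant function $\varphi(b_0)=\varphi(0\cdot x+b_0)$ lies in $\MLP{1}{n}{1}\subseteq A$, so by scaling $\R\cdot 1\subseteq A$.

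With these in hand, the main step is a Stone--Weierstrass-style approximation. Since $\varphi$ is continuous on the compact interval $I$, I pick real polynomials $p_k$ with $p_k\to\varphi$ uniformly on $I$; then $p_k\circ(f+b)\to\varphi\circ(f+b)$ uniformly on $\R^n$ because $(f+b)(\R^n)\subseteq I$. Each $p_k\circ(f+b)$ is a real polynomial expression in $f$, so lies in $A$ by virtue of $A$ being a subalgebra that contains $f$ and all real constants. Uniform closedness of $A$ then gives $\varphi\circ(f+b)\in A$, completing the induction.

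The only subtle point is ensuring that constants are available in $A$, since the definition of ``subalgebra'' is not assumed to include a unit; this is the main thing that could derail a careless argument, but it is resolved for free by the $a=0$ case of the one-layer networks. Everything else is standard.
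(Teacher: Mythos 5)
Your proof is correct and follows essentially the same route as the paper's: handle $\varphi=0$ trivially, extract the constants from the $a=0$ one-layer networks, apply Stone--Weierstrass to $\varphi$ on the (bounded) range of $f+b$, and use that $A$ is a closed algebra containing $f$ and the constants. The extra care you take about whether ``subalgebra'' includes a unit is exactly the point the paper also addresses by noting that $\mathcal{N}_\varphi^1(\R^n)$ contains the constant functions.
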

\begin{proof}
The claim is trivial if $\varphi=0$. Thus, assume that $\varphi\neq0$ and note that then $\mathcal{N}_\varphi^1(\R^n)\subseteq A$ contains the constant functions.
For $l\geq2$, assume that $\MLP{l-1}{n}{1}\subseteq A$ and let $f\in\MLP{l-1}{n}{1}$ and $b\in\R$. We are to prove that $\varphi\circ(f+b)\in A$. As $f+b$ is a bounded function, the Stone--Weierstrass theorem supplies a sequence of polynomials $(p_k)_{k\geq1}$ converging to $\varphi$ uniformly on the range of $f+b$. Hence $p_k\circ (f+b)$ converges uniformly to $\varphi\circ(f+b)$. Because $A$ is a unital algebra, $f+b\in A$ implies that
$$p_k\circ(f+b)=p_k(f+b)\in A.$$
As $A$ is closed, we obtain $\varphi\circ(f+b)\in A$, which by induction implies that $\MLP{l}{n}{1}\subseteq A$ for every $l\geq 1$.
\end{proof}

We now turn to the case $\varphi(-\infty)=\varphi(\infty)$. Without loss of generality (because we are considering neural networks with biases $b$) we can assume that $\varphi(-\infty)=\varphi(\infty)=0$, i.e., $\varphi\in C_0(\R)$.

\begin{lem}\label{lem:first inclusion}
For any $\varphi\in C_0(\R)$ and $n\in\N$ we have
	$$\MLP{\infty}{n}{1}\subseteq\Cr(\R^n)\,.$$
\end{lem}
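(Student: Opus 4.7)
The plan is to reduce to the one-hidden-layer case via Lemma \ref{lem:algebra}, then handle $\MLP{1}{n}{1}$ directly using the alternative characterisation of $\Cr(\R^n)$ in \eqref{eq:alt characterisation Cr}.

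First I would show that $\MLP{1}{n}{1}\subseteq \Cr(\R^n)$. A generator of $\MLP{1}{n}{1}$ has the form $x\mapsto \varphi(a\cdot x+b)$ for some $a\in\R^n$, $b\in\R$. Define $P:\R^n\to\R$ by $P(x):=a\cdot x$ (a linear map) and $g:\R\to\R$ by $g(y):=\varphi(y+b)$. Since $\varphi\in C_0(\R)$ and translation preserves $C_0(\R)$, we have $g\in C_0(\R)$, so $\varphi(a\cdot x+b)=g(P(x))$ is one of the generators on the right-hand side of \eqref{eq:alt characterisation Cr}. Thus $\MLP{1}{n}{1}\subseteq \Cr(\R^n)$.

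Next I would invoke that $\Cr(\R^n)$ is a closed subalgebra of $C_\textnormal{b}(\R^n)$, which is noted in the text just before Lemma \ref{lem:algebra} (and follows from the argument used for Corollary \ref{cor:algebra} or alternatively from \cite[Lemma 2.2]{vN19}). Since $\varphi\in C_0(\R)\subseteq C_\textnormal{b}(\R)$, Lemma \ref{lem:algebra} applies with $A:=\Cr(\R^n)$ and yields $\MLP{l}{n}{1}\subseteq \Cr(\R^n)$ for every $l\in\N$. Taking the union over $l$ gives the claim.

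There is no real obstacle here; the two ingredients (direct verification for $l=1$ and the Stone--Weierstrass-based Lemma \ref{lem:algebra}) do essentially all the work. The only step requiring care is verifying that $\Cr(\R^n)$ is indeed closed under pointwise multiplication, but this is already available, so the argument is quite short.
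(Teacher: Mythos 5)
Your proposal is correct and matches the paper's proof essentially verbatim: the paper also writes a generator $x\mapsto\varphi(a\cdot x+b)$ as $g\circ P$ with $P(x)=a\cdot x$ and $g(y)=\varphi(y+b)$ to get $\MLP{1}{n}{1}\subseteq\Cr(\R^n)$, and then invokes Lemma \ref{lem:algebra} (using that $\Cr(\R^n)$ is a closed subalgebra of $C_\textnormal{b}(\R^n)$) to extend to all $l$. No gaps.
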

\begin{proof}
Any network in $\MLP{1}{n}{1}$ is a linear combination of functions of the form
\begin{align}\label{eq:levee from varphi}
x\mapsto \varphi(a\cdot x+b),
\end{align}
for $a\in\R^n$ and $b\in\R$. By taking $P(x):=a\cdot x$ and $g(y):=\varphi(y+b)$, we find that the function \eqref{eq:levee from varphi} equals $g\circ P\in\Cr(\R^n)$. Hence, $\MLP{1}{n}{1}\subseteq\Cr(\R^n)$. By Lemma \ref{lem:algebra}, this completes the proof.
\end{proof}

An immediate corollary of the above lemma is 
$$\overline{\MLP{\infty}{n}{1}}\subseteq\Cr(\R^n).$$
The following theorem is the main result of this section, and states that the above inclusion is an equality. In fact, equality is already obtained with one hidden layer.

\begin{thm}\label{thm:commutative resolvent algebra}
	For any $n\in\N$ and any $\varphi\in C_0(\R)\setminus\{0\}$ we have 
	$$\overline{\MLP{\infty}{n}{1}}=\overline{\MLP{1}{n}{1}}=\Cr(\R^n).$$
	Regarding systems with $m$ output nodes, we therefore have
		$$\overline{\MLP{\infty}{n}{m}}=\Cr(\R^n)^{\oplus m}\,.$$
\end{thm}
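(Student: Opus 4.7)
The plan is to prove the chain of equalities by showing $\Cr(\R^n)\subseteq\overline{\MLP{1}{n}{1}}$, since the reverse inclusions
$$\overline{\MLP{1}{n}{1}}\subseteq\overline{\MLP{\infty}{n}{1}}\subseteq\Cr(\R^n)$$
are already recorded: the first is immediate from the definition of $\MLP{\infty}{n}{1}$, and the second is Lemma \ref{lem:first inclusion}.

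To establish the remaining inclusion I would use the alternative characterisation of the commutative resolvent algebra in \eqref{eq:alt characterisation Cr}. Because $\overline{\MLP{1}{n}{1}}$ is norm-closed, it suffices to show that every generating function $g\circ P$ of the span on the right-hand side of \eqref{eq:alt characterisation Cr} lies in $\overline{\MLP{1}{n}{1}}$, where $P:\R^n\to\R^k$ is linear and $g\in C_0(\R^k)$. By Proposition \ref{prop:C_0(R^n) in 1-layer} applied in dimension $k$ (using that $\varphi\in C_0(\R)\setminus\{0\}$), there exists a sequence $(g_m)_{m\geq1}\subseteq\mathcal{N}_\varphi^1(\R^k)$ with $\|g_m-g\|_\infty\to0$. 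The pullback map $h\mapsto h\circ P$ is a linear contraction from $C_\textnormal{b}(\R^k)$ to $C_\textnormal{b}(\R^n)$, so $g_m\circ P$ converges uniformly to $g\circ P$. Moreover, if we write $g_m(y)=\sum_{j=1}^{K_m}c_{j,m}\varphi(a_{j,m}\cdot y+b_{j,m})$, then
$$(g_m\circ P)(x)=\sum_{j=1}^{K_m}c_{j,m}\,\varphi\bigl((P^{T}a_{j,m})\cdot x+b_{j,m}\bigr)\in\MLP{1}{n}{1},$$
so $g\circ P\in\overline{\MLP{1}{n}{1}}$. Taking linear combinations and closures yields $\Cr(\R^n)\subseteq\overline{\MLP{1}{n}{1}}$.

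The extension to $m$ output nodes is cosmetic: by definition $\MLP{l}{n}{m}=\MLP{l}{n}{1}^{\oplus m}$, and the supremum norm on $\R^m$-valued functions is equivalent to the maximum of the coordinatewise suprema, so taking closures commutes with the direct sum. Combined with the already established scalar case, this gives $\overline{\MLP{\infty}{n}{m}}=\Cr(\R^n)^{\oplus m}$.

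The only slightly nontrivial point is the passage from approximating $g\in C_0(\R^k)$ to approximating $g\circ P$ on the (noncompact) space $\R^n$; this is precisely where uniform (as opposed to compact-open) convergence pays off, because the composition with $P$ need not map bounded subsets of $\R^n$ to bounded subsets of $\R^k$ when $k<n$, yet the uniform estimate $\|g_m\circ P-g\circ P\|_\infty\leq\|g_m-g\|_\infty$ holds regardless. Everything else is a straightforward unwinding of definitions, so I do not expect a substantial obstacle.
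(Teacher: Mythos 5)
Your proposal is correct and follows essentially the same route as the paper: both reduce to the inclusion $\Cr(\R^n)\subseteq\overline{\MLP{1}{n}{1}}$ via the characterisation \eqref{eq:alt characterisation Cr}, approximate $g\in C_0(\R^k)$ by one-layer networks on $\R^k$ using Proposition \ref{prop:C_0(R^n) in 1-layer}, and pull back along the linear map $P$ (observing that $x\mapsto a\cdot P(x)$ is again of the form $x\mapsto\tilde a\cdot x$), with the reverse inclusion supplied by Lemma \ref{lem:first inclusion}. No gaps.
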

\begin{proof}
Let $g\circ P\in\Cr(\R^n)$ be a function of the form of \eqref{eq:alt characterisation Cr}, namely with $P:\R^n\to\R^k$ a linear map and $g\in C_0(\R^k)$. By Proposition \ref{prop:C_0(R^n) in 1-layer}, $g\in\overline{\spn}\{x\mapsto\varphi (a\cdot x+b)~|~a\in\R^k,b\in\R\}$. Hence, by the continuity of the map $g\mapsto g\circ P$,
\begin{align*}
g\circ P&\in\overline{\spn}\left\{x\mapsto\varphi(a\cdot P(x)+b)\mid a\in\R^k,b\in\R\right\}\\
&\subseteq\overline{\spn}\left\{x\mapsto\varphi(\tilde a\cdot x+b)\mid \tilde a\in\R^n,b\in\R\right\},
\end{align*}
where the inclusion is obtained by noting that $\R^n\to\R,x\mapsto a\cdot P(x)$ is linear, and hence given by $x\mapsto \tilde a\cdot x$ for an $\tilde a\in\R^n$.
\end{proof}


The following corollary will be used in Section \ref{sct:sigmoids}.
\begin{cor}\label{lem:com res alg in sigmoids}
For any $n,l\in\N$ and any nonconstant $\varphi\in C(\R)$ with finite left and right limits we have
$$\Cr(\R^n)\subseteq \overline{\MLP{l}{n}{1}}.$$
\end{cor}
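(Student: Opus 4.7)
The plan is to derive the corollary as a short consequence of Theorem \ref{thm:C_0 in 1 layer summary} together with the alternative characterisation \eqref{eq:alt characterisation Cr} of $\Cr(\R^n)$.

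First I would verify that the hypotheses of Theorem \ref{thm:C_0 in 1 layer summary} are met with $a_1=a_2=0$, $b_1=\varphi(\infty)$, $b_2=\varphi(-\infty)$: continuity is given, and because $\varphi$ has finite limits at $\pm\infty$ it is bounded, so nonconstancy forces nonaffineness (a bounded affine function is constant). Applied in dimension $k$, Theorem \ref{thm:C_0 in 1 layer summary} then yields
\[
C_0(\R^k)\subseteq \overline{\MLP{l}{k}{1}}\qquad(k,l\in\N).
\]

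Next I would establish, by induction on $l$, that composition with any linear map $P\colon\R^n\to\R^k$ carries $\MLP{l}{k}{1}$ into $\MLP{l}{n}{1}$. The base case $l=1$ is immediate from the identity $\varphi(a\cdot P(x)+b)=\varphi((P^\top a)\cdot x+b)$, and the inductive step follows by applying the recursive definition \eqref{eq:l hidden layers} term by term: an element of $\MLP{l}{k}{1}$ is a linear combination of maps $\varphi(f+b)$ with $f\in\MLP{l-1}{k}{1}$, and $f\circ P\in\MLP{l-1}{n}{1}$ by hypothesis.

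Combining these ingredients, take an arbitrary generator $g\circ P$ of $\Cr(\R^n)$ with $g\in C_0(\R^k)$ and $P\colon\R^n\to\R^k$ linear, approximate $g$ uniformly by a sequence $(f_m)\subseteq\MLP{l}{k}{1}$, and note that $f_m\circ P\in\MLP{l}{n}{1}$ with
\[
\|f_m\circ P-g\circ P\|_\infty\leq\|f_m-g\|_\infty\xrightarrow{m\to\infty}0.
\]
Hence $g\circ P\in\overline{\MLP{l}{n}{1}}$, and since the latter is a closed linear subspace of $C_\textnormal{b}(\R^n)$ it absorbs the entire closed span on the right-hand side of \eqref{eq:alt characterisation Cr}, which is $\Cr(\R^n)$. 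The only mild subtlety is the first step, where one must notice that nonconstancy combined with finite limits is already enough to invoke Theorem \ref{thm:C_0 in 1 layer summary}; everything else is essentially bookkeeping on top of the genuinely deep $C_0$-approximation result of Section \ref{sct: Uniform Approximation}.
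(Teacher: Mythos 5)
Your proposal is correct and takes essentially the same route as the paper: both reduce the corollary to the noncompact $C_0$ universal approximation theorem combined with the fact that precomposition with a linear map $P:\R^n\to\R^k$ sends $l$-layer networks to $l$-layer networks, the paper packaging this by applying Theorem~\ref{thm:commutative resolvent algebra} to the shifted difference $\varphi-\varphi(\cdot-1)\in C_0(\R)\setminus\{0\}$, while you invoke Theorem~\ref{thm:C_0 in 1 layer summary} directly (correctly noting that finite limits plus nonconstancy rule out affineness) and carry out the precomposition induction explicitly. The only case your argument silently skips is $k=0$ in \eqref{eq:alt characterisation Cr}, i.e.\ the constant generators, which lie in $\MLP{l}{n}{1}$ by taking $a=0$, so this is harmless.
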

\begin{proof}
If $\varphi_1(x):=\varphi(x-1)$ denotes the shift of $\varphi$, we have $\varphi-\varphi_1\in C_0(\R)\setminus\{0\}$. Furthermore, we see that $\mathcal{N}_{\varphi-\varphi_1}^l(\R^n)\subseteq \MLP{l}{n}{1}$ for any number of hidden layers $l$. Hence, the result follows from Theorem \ref{thm:commutative resolvent algebra}.
\end{proof}

\section{Bounded activation functions with distinct left and right limits}
\label{sct:sigmoids}
For this section, we let $\varphi$ be continuous with finite and distinct left and right limits.
In this case, describing the set of uniformly approximable functions is slightly more involved. 
\begin{defn}
We define
$$\fS(\R):=\left\{f\in C(\R)\mid \lim_{x\to-\infty}f(x)\text{ and }\lim_{x\to\infty}f(x)\text{ exist in $\R$}\right\}\,.$$
More generally, for $n\in\N$, we define
$$\fS(\R^n):=\overline{\spn}\,\Bigg\{\prod_{j=1}^m (g_j\circ p_{a_j})~\Bigg|~ m\in\N, g_j\in \fS(\R), a_j\in\R^n\Bigg\},$$
where we recall that $p_a(x)=a\cdot x$.
\end{defn}
We note that $\fS(\R^n)$ is a closed subalgebra of $C_\textnormal{b}(\R^n)$.
We may give a more explicit characterisation of $\fS(\R^n)$.
\begin{lem}\label{lem:explicit fS}
For all $n\in\N$ we have
$$\fS(\R^n)=\overline{\spn}\,\Bigg\{\prod_{j=1}^m (\tanh\circ p_{a_j})~\Bigg|~ m\in\Z_{\geq0}, a_j\in\R^n\Bigg\}.$$
In the above formula, $\tanh$ can be replaced by any strictly monotonous element of $\fS(\R)$.
\end{lem}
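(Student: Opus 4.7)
The plan is to reduce the statement to the following one-variable fact: every $g\in\fS(\R)$ can be uniformly approximated on $\R$ by polynomials in $\tanh$. The inclusion $\supseteq$ is immediate because $\tanh\in\fS(\R)$ and constants lie in $\fS(\R^n)$ (take $g_1=1$), so each product $\prod_{j=1}^m(\tanh\circ p_{a_j})$ is already one of the generators of $\fS(\R^n)$, and the $m=0$ case gives the constants. The nontrivial direction is $\subseteq$, and by the definition of $\fS(\R^n)$ it is enough to approximate a single generator $\prod_{j=1}^m(g_j\circ p_{a_j})$ uniformly by linear combinations of products of $\tanh\circ p_{a_j}$'s.

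First I would handle the one-dimensional claim. The map $\tanh$ extends to a homeomorphism $[-\infty,\infty]\to[-1,1]$, sending $\pm\infty$ to $\pm1$. Any $g\in\fS(\R)$ extends continuously to $[-\infty,\infty]$, hence $g\circ\tanh^{-1}$ defines an element of $C([-1,1])$. By the classical Weierstrass theorem, for every $\epsilon>0$ there is a polynomial $q$ with $\sup_{t\in[-1,1]}|q(t)-g(\tanh^{-1}(t))|<\epsilon$, and substituting $t=\tanh(x)$ yields $\supnorm{q\circ\tanh-g}<\epsilon$. Thus $g$ lies in the uniform closure of polynomials in $\tanh$.

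Next I would pass from one variable to the generators of $\fS(\R^n)$. Fix a generator $\prod_{j=1}^m(g_j\circ p_{a_j})$; every element of $\fS(\R)$ is bounded, so each $g_j$ has $\supnorm{g_j}<\infty$ and a standard telescoping estimate shows that if each $g_j$ is uniformly $\epsilon_j$-close to a polynomial $q_j$ in $\tanh$, then $\prod_j(g_j\circ p_{a_j})$ is uniformly close to $\prod_j(q_j\circ \tanh\circ p_{a_j})$, with error controlled by $\sum_j\epsilon_j\prod_{i\neq j}(\supnorm{g_i}+\epsilon_i)$. The resulting function is a polynomial in the variables $\tanh\circ p_{a_1},\ldots,\tanh\circ p_{a_m}$, i.e.\ a linear combination of monomials $\prod_{l}(\tanh\circ p_{a_{j_l}})$, which belongs to the right-hand side. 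Letting all $\epsilon_j\to 0$ gives the generator in the closure. Linearity and closure then yield $\subseteq$.

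The only subtle point is making sure the telescoping estimate above is uniform over all of $\R^n$, but this is automatic from the uniformity of the one-variable approximation and from $\supnorm{g_j\circ p_{a_j}}=\supnorm{g_j}<\infty$. For the final remark, let $\sigma\in\fS(\R)$ be strictly monotonic; then $\sigma$ extends to a homeomorphism $[-\infty,\infty]\to[\sigma(-\infty),\sigma(\infty)]$, and the same Weierstrass argument applied to $g\circ\sigma^{-1}\in C([\sigma(-\infty),\sigma(\infty)])$ shows that polynomials in $\sigma$ are dense in $\fS(\R)$, so $\sigma$ may replace $\tanh$ throughout. I do not expect any real obstacle; the one thing to be careful about is that the definition of $\fS(\R^n)$ uses $m\in\N$ whereas the right-hand side uses $m\in\Z_{\geq 0}$, but constants are accounted for on the left by the choice $g_1\equiv 1$.
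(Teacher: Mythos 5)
Your proof is correct and follows essentially the same route as the paper: reduce to the one-variable fact that polynomials in a strictly monotone $\sigma\in\fS(\R)$ are uniformly dense in $\fS(\R)$ (via the extension of $\sigma$ to a homeomorphism of $[-\infty,\infty]$ onto a compact interval plus Weierstrass), then propagate through the generators. The paper dispatches the multi-factor step by observing the right-hand side is a closed subalgebra, where you instead write out the telescoping estimate explicitly; this is only a presentational difference.
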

\begin{remark}
The above remarks can be formulated in C*-algebraic language quite concisely. Namely, for any fixed strictly monotonous $\sigma\in\fS(\R)$, $\fS(\R^n)$ is the smallest C*-subalgebra of the real C*-algebra $C_\textnormal{b}(\R^n)$ that contains the functions $1$ and $\sigma\circ p_a$ $(a\in\R^n)$.
\end{remark}
\begin{proof}[Proof of Lemma \ref{lem:explicit fS}]
The inclusion $\supseteq$ follows by taking $g_j=\tanh$. The right-hand side is therefore a subalgebra of $\fS(\R^n)$. Because composition with $p_a$ is a continuous mapping, it thus suffices to show that, for all $g\in\fS(\R)$,
$$g\in\overline{\spn}\left\{\prod_{j=1}^m\sigma\mid m\in\Z_{\geq0}\right\},$$ 
for a fixed strictly monotonous element $\sigma\in\fS(\R)$ such as $\sigma=\tanh$.
The above set contains all limits of all polynomials in $\sigma$, and hence, by Stone--Weierstrass, it contains $f\circ\sigma$ for every continuous function $f\in C(\overline{\ran\sigma})$. It therefore also contains $g=(g\circ\sigma^{-1})\circ\sigma$,
as required. 
\end{proof}

The algebra $\fS(\R)$ is closely related to the space of one-layer neural networks.

\begin{lem}\label{lem:l=1,n=1 sigmoid case}
For any $\varphi\in \fS(\R)$ we have
$\overline{\mathcal{N}^1_\varphi(\R)}\subseteq\fS(\R).$ If moreover $\varphi(-\infty)\neq\varphi(\infty)$, then we have $\overline{\mathcal{N}^1_\varphi(\R)}=\fS(\R)$.
\end{lem}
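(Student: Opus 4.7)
The plan is to prove the two statements separately: the inclusion $\overline{\mathcal{N}^1_\varphi(\R)} \subseteq \fS(\R)$ by verifying that the generators lie in $\fS(\R)$ and that $\fS(\R)$ is uniformly closed, and the reverse inclusion $\fS(\R) \subseteq \overline{\mathcal{N}^1_\varphi(\R)}$ by exhibiting $\fS(\R)$ as $C_0(\R) + \R\cdot 1 + \R\cdot\sigma$ for a distinguished ``sigmoid'' $\sigma$ built from $\varphi$, and then invoking Theorem \ref{thm:C_0 in 1 layer summary} to handle the $C_0(\R)$ part.

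For the forward inclusion, I would observe that every generator $x \mapsto \varphi(ax+b)$ of $\mathcal{N}^1_\varphi(\R)$ has finite limits at $\pm\infty$: it is constant when $a=0$, and equals $\varphi(\pm\infty)$ (possibly swapped, depending on the sign of $a$) when $a\neq 0$. Hence $\mathcal{N}^1_\varphi(\R) \subseteq \fS(\R)$. A standard $\epsilon/3$ Cauchy-tail argument shows $\fS(\R)$ is closed in the supremum norm (a uniformly Cauchy sequence of functions with existing one-sided limits yields a limit whose tails are Cauchy), so passing to the closure gives $\overline{\mathcal{N}^1_\varphi(\R)} \subseteq \fS(\R)$.

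For the reverse inclusion under $\varphi(-\infty)\neq\varphi(\infty)$, set $L_\pm := \varphi(\pm\infty)$. Since $L_+ \neq L_-$, $\varphi$ is nonconstant, and in particular $\varphi(b) \neq 0$ for some $b\in\R$, which gives the constant function $1 = \varphi(b)^{-1}\varphi(0\cdot x + b) \in \mathcal{N}^1_\varphi(\R)$. As $\varphi$ itself lies in $\mathcal{N}^1_\varphi(\R)$ (take $a=1$, $b=0$), the normalised sigmoid $\sigma := (L_+ - L_-)^{-1}(\varphi - L_-\cdot 1)$ belongs to $\mathcal{N}^1_\varphi(\R)$ and satisfies $\sigma(-\infty)=0$, $\sigma(\infty)=1$. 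Meanwhile, $\varphi$ fits the hypotheses of Theorem \ref{thm:C_0 in 1 layer summary} with $a_1=a_2=0$, $b_1=L_+$, $b_2=L_-$ (nonlinearity follows from $L_+\neq L_-$), so $C_0(\R) \subseteq \overline{\mathcal{N}^1_\varphi(\R)}$. Finally, given any $f\in\fS(\R)$ with limits $M_\pm := f(\pm\infty)$, the function $g := f - M_-\cdot 1 - (M_+ - M_-)\sigma$ vanishes at $\pm\infty$ and is therefore in $C_0(\R)$, so $f = M_-\cdot 1 + (M_+ - M_-)\sigma + g \in \overline{\mathcal{N}^1_\varphi(\R)}$.

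The key observation driving the argument is that $\fS(\R)$ sits only two dimensions above $C_0(\R)$ (one degree of freedom for each asymptotic value), and that $\varphi$ itself, precisely because its two asymptotic values differ, supplies the one direction beyond the constants needed to span those extra dimensions. I anticipate no real obstacle beyond carefully assembling ingredients already established: the closedness of $\fS(\R)$ and the $C_0$-approximation theorem from Section \ref{sct: Uniform Approximation}.
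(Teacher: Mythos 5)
Your proposal is correct and follows essentially the same route as the paper: generators of $\mathcal{N}^1_\varphi(\R)$ lie in the closed space $\fS(\R)$ for one inclusion, and for the other, any $f\in\fS(\R)$ is written as an affine combination of $1$ and $\varphi$ (your normalised $\sigma$) plus a $C_0(\R)$ remainder, which is handled by Theorem \ref{thm:C_0 in 1 layer summary}. The only difference is that you spell out the linear algebra of the two asymptotic values explicitly, whereas the paper compresses it into one line.
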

\begin{proof}
For any $\varphi\in\fS(\R)$, the space $\mathcal{N}^1_\varphi(\R)$ is spanned by functions
$$x\mapsto \varphi(ax+b)\qquad (a,b\in\R),$$
which are also functions in $\fS(\R)$. Since $\fS(\R)$ is a closed linear space, we obtain $\overline{\mathcal{N}^1_\varphi(\R)}\subseteq\fS(\R)$.

If we furthermore assume that $\varphi(-\infty)\neq\varphi(\infty)$, then for every $f\in\fS(\R)$ there exist $a,b\in\R$ such that $f-a\varphi-b\in C_0(\R)$. Hence, by using Theorem \ref{thm:C_0 in 1 layer summary},
$$f\in a\varphi+b+C_0(\R)\subseteq\overline{\mathcal{N}^1_\varphi(\R)}.$$
The combination of both inclusions finishes the proof.
\end{proof}

One of the two desired inclusions is now derived as follows.
\begin{prop}\label{prop:first inclusion sigmoids}
For every $n\in\N$ and every $\varphi\in \fS(\R)$, we have
$$\overline{\MLP{\infty}{n}{1}}\subseteq \fS(\R^n).$$
\end{prop}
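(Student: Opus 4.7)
The plan is to reduce everything to the one hidden layer case and then bootstrap to arbitrary depth via the algebra structure. The key observation is that $\fS(\R^n)$ is by definition a closed subalgebra of $C_\textnormal{b}(\R^n)$, and since every $\varphi \in \fS(\R)$ is a continuous function on $\R$ with finite limits at $\pm\infty$, it is bounded, so $\varphi \in C_\textnormal{b}(\R)$ and Lemma \ref{lem:algebra} is applicable.

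First, I would establish the base case $\MLP{1}{n}{1} \subseteq \fS(\R^n)$. By \eqref{eq:1 hidden layer}, it suffices to show that for any $a \in \R^n$ and $b \in \R$, the function $x \mapsto \varphi(a \cdot x + b)$ lies in $\fS(\R^n)$. Defining $g(y) := \varphi(y + b)$, we have $g \in \fS(\R)$ (translation preserves the existence of one-sided limits), and the function equals $g \circ p_a$, which is exactly the $m = 1$ case in the spanning set of the definition of $\fS(\R^n)$. Hence $\MLP{1}{n}{1} \subseteq \fS(\R^n)$, and taking the uniform closure gives $\overline{\MLP{1}{n}{1}} \subseteq \fS(\R^n)$ since $\fS(\R^n)$ is closed.

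Next, I would apply Lemma \ref{lem:algebra}. Since $\fS(\R^n)$ is a closed subalgebra of $C_\textnormal{b}(\R^n)$ containing $\MLP{1}{n}{1}$, that lemma immediately yields $\MLP{l}{n}{1} \subseteq \fS(\R^n)$ for every $l \in \N$. Taking the union over all $l$ and then the uniform closure gives $\overline{\MLP{\infty}{n}{1}} \subseteq \fS(\R^n)$, which is exactly the claim.

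I do not expect any serious obstacle here, as the statement is a clean structural consequence of how $\fS(\R^n)$ was defined (as a closed subalgebra containing the generators $g \circ p_a$) together with the already-proven Lemma \ref{lem:algebra}. The only thing to verify carefully is that every $\varphi \in \fS(\R)$ is bounded so that Lemma \ref{lem:algebra} applies; this follows since a continuous function on $\R$ with finite limits at $\pm \infty$ is automatically bounded. The more substantive content, namely the reverse inclusion $\fS(\R^n) \subseteq \overline{\MLP{\infty}{n}{1}}$ (under the additional assumption $\varphi(-\infty) \neq \varphi(\infty)$), is what will presumably require the detailed work in the remainder of the section.
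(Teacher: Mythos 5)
Your proof is correct and follows essentially the same route as the paper: establish $\MLP{1}{n}{1}\subseteq\fS(\R^n)$ by recognising each generator $x\mapsto\varphi(a\cdot x+b)$ as $g\circ p_a$ with $g\in\fS(\R)$, then invoke Lemma \ref{lem:algebra} using that $\fS(\R^n)$ is a closed subalgebra of $C_\textnormal{b}(\R^n)$. The paper phrases the base case via Lemma \ref{lem:l=1,n=1 sigmoid case} rather than directly, but this is an immaterial difference.
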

\begin{proof}
The space $\MLP{1}{n}{1}$ is spanned by functions of the form $f=g\circ p_a$ for $g\in\mathcal{N}^1_\varphi(\R)$ and $a\in\R^n$. By Lemma \ref{lem:l=1,n=1 sigmoid case}, we have $g\in\fS(\R)$, which implies that $f\in\fS(\R^n)$.
Therefore, $\MLP{1}{n}{1}\subseteq \fS(\R^n)$, and since $\fS(\R^n)$ is a closed subalgebra of $C_\textnormal{b}(\R^n)$, Lemma \ref{lem:algebra} implies the proposition.
\end{proof}

We proceed with the converse inclusion, in order to obtain equality of the spaces $\overline{\MLP{\infty}{n}{1}}$ and $\fS(\R^n)$.


\subsection{Converse inclusion}\label{sct:converse inclusion}


Denote by $\fS_\text{c}(\R)$ the set of functions $f\in \fS(\R)$ such that $f(\R)= [0,1]$ and $f^{-1}((0,1))$ is bounded. 
\begin{lem}\label{lem:density of fSc}
The span of $\fS_\text{c}(\R)$ is dense in $\fS(\R)$.
\end{lem}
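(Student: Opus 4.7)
The plan is to reduce the approximation problem to three concrete ingredients. Given $f\in\fS(\R)$, I would set $\alpha:=\lim_{x\to-\infty}f(x)$ and $\beta:=\lim_{x\to\infty}f(x)$ and write
\[
f = \alpha\cdot 1 + (\beta-\alpha)\,\sigma + g,
\]
where $\sigma(x):=\min(\max(x,0),1)$ is the continuous ramp and $g := f - \alpha - (\beta-\alpha)\sigma$, which lies in $C_0(\R)$ because the asymptotic values of the first two terms at $\pm\infty$ match those of $f$. Since $\sigma$ has range $[0,1]$ and $\sigma^{-1}((0,1))=(0,1)$ is bounded, $\sigma\in\fS_\text{c}(\R)$. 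So the lemma reduces to showing (i) that the constant $1$ lies in $\spn\fS_\text{c}(\R)$, and (ii) that $C_0(\R)\subseteq\overline{\spn\fS_\text{c}(\R)}$.

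Step (i) is immediate: $1-\sigma$ is also in $\fS_\text{c}(\R)$ (range $[0,1]$, nonconstant only on $(0,1)$), so $1 = \sigma + (1-\sigma)\in\spn\fS_\text{c}(\R)$, and all constants follow.

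For step (ii), I would pass through compactly supported continuous piecewise-linear functions: these are uniformly dense in $C_\textnormal{c}(\R)$, and $C_\textnormal{c}(\R)$ is uniformly dense in $C_0(\R)$, so it suffices to express every continuous compactly supported piecewise-linear function as a finite $\R$-linear combination of elements of $\fS_\text{c}(\R)$. To that end I would set $\sigma_{a,b}(x):=\sigma\bigl((x-a)/(b-a)\bigr)\in\fS_\text{c}(\R)$ for each $a<b$ and form tents $\tau_{a,b,c}:=\sigma_{a,b}-\sigma_{b,c}\in\spn\fS_\text{c}(\R)$ for $a<b<c$; each tent has support $[a,c]$, peak value $1$ at $b$, and is linear on $[a,b]$ and on $[b,c]$. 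Any continuous compactly supported piecewise-linear function vanishes at the boundary of its support and is therefore a finite linear combination of such tents, one per interior breakpoint scaled by the function's value there.

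I do not expect a genuine obstacle in this argument. The only mildly delicate point is that the definition of $\fS_\text{c}(\R)$ demands the equality $f(\R)=[0,1]$ rather than mere containment, but the building blocks $\sigma$, $1-\sigma$, and $\sigma_{a,b}$ all visibly attain both endpoints, so this is respected throughout.
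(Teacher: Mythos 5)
Your proof is correct, and its backbone is the same as the paper's: both arguments first peel off the asymptotic behaviour by subtracting elements of $\spn\fS_\text{c}(\R)$ with the right limits at $\pm\infty$ (your $\sigma$ and $1-\sigma$ play exactly the role of the paper's $v_1$ and $v_2$), reducing the problem to showing $C_0(\R)\subseteq\overline{\spn}\,\fS_\text{c}(\R)$. The only place you diverge is in that last step: the paper observes that any $f\in C_\textnormal{c}(\R)$ decomposes exactly as $f_+-f_-$ with $f_\pm/\supnorm{f_\pm}\in\fS_\text{c}(\R)$, giving the inclusion $C_\textnormal{c}(\R)\subseteq\spn\fS_\text{c}(\R)$ without any approximation, whereas you route through piecewise-linear density and tent functions $\sigma_{a,b}-\sigma_{b,c}$. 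Both are valid; the paper's version is shorter and needs one fewer density argument, while yours is more constructive in that it exhibits explicit finite combinations of ramps converging to a given $C_0$ function. Your attention to the requirement $f(\R)=[0,1]$ (equality, not containment) is well placed and correctly handled.
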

\begin{proof}
%
%
%
%
%
%
%
%
Our definitions imply that $C_0(\R)+\spn\fS_\text{c}(\R)\subseteq\fS(\R)$.
Let $f\in\fS(\R)$ be arbitrary. If $f(-\infty)=f(\infty)$ then $f-f(\infty)v\in C_0(\R)$, where $v\in\fS_\text{c}(\R)$ is defined as $v(x):=\min(1,|x|)$. It follows that $f\in C_0(\R)+\spn\fS_\text{c}(\R)$. 

Define $v_1,v_2\in\fS_{\text{c}}(\R)$ by
\[
v_1(x)=\max\{0,\min\{1,x\}\}\qquad\text{and}\qquad v_2(x)=v_1(-x),\]
so that $v_1(\infty)=1=v_2(-\infty)$ and $v_1(-\infty)=0=v_2(\infty)$. Then, setting $g:=f-f(\infty)v_1-f(-\infty)v_2$, we have $g\in C_0(\R)$ and
\[
f=g+f(\infty) v_1+f(-\infty) v_2\in C_0(\R)+\spn\fS_\text{c}(\R).
\]
Therefore,
\begin{align}\label{eq:fS 1}
C_0(\R)+\spn\fS_\text{c}(\R)=\fS(\R).
\end{align}

For an arbitrary $f\in C_\text{c}(\R)$, decompose $f=f_+-f_-$ for the positive and negative parts $f_+,f_-\geq0$ of $f$. If $f_\pm\neq0$, we have $\frac{f_\pm}{\supnorm{f_\pm}}\in\fS_\text{c}(\R)$, which implies $f_\pm\in\spn\fS_\text{c}(\R)$. Hence,
\begin{align}\label{eq:fS 2}
C_\text{c}(\R)\subseteq\spn\fS_\text{c}(\R).
\end{align}

%

By taking closures of \eqref{eq:fS 1} and \eqref{eq:fS 2}, we obtain $\overline{\spn}\fS_\text{c}(\R)=C_0(\R)+\overline{\spn}\fS_\text{c}(\R)=\fS(\R)$, as claimed.
\end{proof}

\begin{defn}\label{defn:wedge function}
	Let $J$ be a finite set. A \textbf{wedge function} with \textbf{support vectors} $\{a_j\}_{j\in J}\subseteq\R^n$ is a function of the form
		$$(g\circ P_V)\prod_{j\in J} (g_j\circ p_{a_j}),$$
	for some linear subspace $V\subseteq \R^n$, $g\in C_\textnormal{c}(V)$ and $g_j\in \fS_\textnormal{c}(\R)$ for each $j\in J$.
\end{defn}

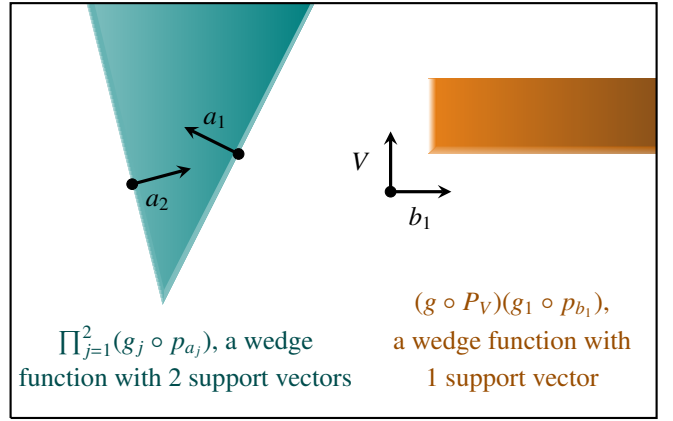
\begin{figure}
\begin{center}
\begin{tikzpicture}
\path[shade, left color=teal!50!white, right color=teal] (3,4) -- (1,0)
node[pos=0.5,minimum size=0.1cm,label={[label distance=0.1cm]95:$a_1$}](x1) {}
 -- (0,4) node[pos=0.4,minimum size=0.1cm,label={[label distance=-2]-10:$a_2$}](x2) {};
\path[shade,left color=teal,right color=white,shading angle=63.4349] (3,4) -- (1,0) -- (1.02,0.2) -- (2.95,4);
\pgfdeclareverticalshading{tealshading}{100bp}{color(0bp)=(teal!60!white); color(49bp)=(teal!60!white); color(51bp)=(white); color(100bp)=(white)}
\path[shading=tealshading,shading angle=104] (1,0) -- (0,4) -- (0.1,4) -- (1.02,0.2);
\filldraw (x1.center) circle (2pt);
\filldraw (x2) circle (2pt);
\draw [-stealth,very thick] (x1.center) -- ++(153.4349:0.8);
\draw [-stealth,very thick] (x2.center) -- ++(15.0362:0.8);
\node at (1.3,-0.5) {\color{teal!60!black}$\prod_{j=1}^2(g_j\circ p_{a_j})$, a wedge};
\node at (1.3,-1) {\color{teal!60!black} function with 2 support vectors};

\path[shade, left color=orange!90!black, right color=orange!50!black,opacity=0.9] (7.5,2) -- (4.5,2) -- (4.5,3) -- (7.5,3);
\node (o) at (4,1.5) {};
\path[shade,top color=orange!85!black,bottom color=white,shading angle=-1,opacity=0.5] (7.5,2) -- (4.5,2) -- (4.6,2.1) -- (7.5,2.1);
\path[shade,left color=white,right color=orange!90!black,shading angle=90,opacity=1] (4.49,2) -- (4.49,3) -- (4.6,3) -- (4.6,2.1);
\filldraw (o) circle (2pt);
\draw [-stealth,very thick] (o.center) -- ++(0:0.8);
\draw [-stealth,very thick] (o.center) -- ++(90:0.8);
\node at (4.4,1.15) {$b_1$};
\node at (3.6,1.9) {$V$};
{\color{orange!60!black}
\node at (5.6,0) {$(g\circ P_V)(g_1\circ p_{b_1})$,};
\node at (5.6,-0.5) {a wedge function with};
\node at (5.6,-1) {1 support vector};
}

\draw[thick] (-1,-1.5) -- (-1,4) -- (7.5,4) -- (7.5,-1.5) -- (-1,-1.5);
\end{tikzpicture}
\caption{Contour plot of two wedge functions on $\R^2$.}
\label{fig:two wedge functions}
\end{center}
\end{figure}

The following proposition is the key to Theorem \ref{thm:main2}.
\begin{prop}\label{prop:wedge functions}
Fix $\varphi\in\fS(\R)$ with $\varphi(-\infty)\neq\varphi(\infty)$ and let $m\in\N$. Any wedge function with $m$ support vectors is in $\overline{\MLP{2}{n}{1}}$. 
\end{prop}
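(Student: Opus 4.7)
My plan is to express the wedge function as $F\circ(h_0,h_1,\ldots,h_m)$, where each $h_i\in\overline{\MLP{1}{n}{1}}$ takes values in a fixed compact interval and $F$ is the continuous product map on a compact cube, so that a one-hidden-layer approximation of (a compactly supported version of) $F$ pulls back to a two-hidden-layer approximation of the wedge function on all of $\R^n$.

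Specifically, set $h_0:=g\circ P_V$ and $h_j:=g_j\circ p_{a_j}$ for $j\in J$. Since $g\in C_\textnormal{c}(V)\subseteq C_0(V)$, Corollary \ref{lem:com res alg in sigmoids} gives $h_0\in\Cr(\R^n)\subseteq\overline{\MLP{1}{n}{1}}$. Each $h_j$ likewise lies in $\overline{\MLP{1}{n}{1}}$: by Lemma \ref{lem:l=1,n=1 sigmoid case} we have $g_j\in\fS_\text{c}(\R)\subseteq\fS(\R)=\overline{\mathcal{N}^1_\varphi(\R)}$, and composition with the linear functional $p_{a_j}$ is continuous with respect to $\supnorm{\cdot}$. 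Setting $M:=\supnorm{g}$, we have $h_0(x)\in[-M,M]$ and $h_j(x)\in[0,1]$ for every $x\in\R^n$, so the tuple $(h_0,h_1,\ldots,h_m)$ maps $\R^n$ into the compact cube $K:=[-M,M]\times[0,1]^m\subseteq\R^{m+1}$. The wedge function equals $F\circ(h_0,\ldots,h_m)$, where $F(v,u_1,\ldots,u_m):=v\,u_1\cdots u_m$ is a polynomial and hence continuous on all of $\R^{m+1}$.

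Because $\varphi\in C(\R)$ is nonconstant with finite limits at $\pm\infty$, it satisfies the hypotheses of Theorem \ref{thm:C_0 in 1 layer summary}. Multiplying $F$ by a bump function $\rho\in C_\textnormal{c}(\R^{m+1})$ equal to $1$ on $K$ gives $\tilde F:=\rho\,F\in C_\textnormal{c}(\R^{m+1})\subseteq C_0(\R^{m+1})$ with $\tilde F\equiv F$ on $K$. For any $\epsilon>0$, Theorem \ref{thm:C_0 in 1 layer summary} furnishes constants $c_k,\alpha_k,\alpha_{k,j},\beta_k\in\R$ ($k=1,\ldots,N$) such that
\[
\sup_{(v,u)\in\R^{m+1}}\left|\tilde F(v,u)-\sum_{k=1}^{N}c_k\,\varphi\!\left(\alpha_kv+\sum_{j=1}^m\alpha_{k,j}u_j+\beta_k\right)\right|<\epsilon.
\]
Evaluating at $(v,u)=(h_0(x),h_1(x),\ldots,h_m(x))\in K$ for every $x\in\R^n$ yields a uniform $\epsilon$-approximation on $\R^n$ of the wedge function by $\sum_k c_k\,\varphi\big(\alpha_k h_0(x)+\sum_j\alpha_{k,j}h_j(x)+\beta_k\big)$.

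Each inner argument $\alpha_k h_0+\sum_j\alpha_{k,j}h_j$ is a finite linear combination of elements of $\overline{\MLP{1}{n}{1}}$ and hence can be approximated in $\supnorm{\cdot}$ by some $\tilde h_k\in\MLP{1}{n}{1}$. Uniform continuity of $\varphi$ (automatic since $\varphi$ is continuous with finite limits at $\pm\infty$) then gives $\varphi(\tilde h_k+\beta_k)\to\varphi(\alpha_kh_0+\sum_j\alpha_{k,j}h_j+\beta_k)$ uniformly, with each $\varphi(\tilde h_k+\beta_k)\in\MLP{2}{n}{1}$ by definition. Summing and letting $\epsilon\to 0$ places the wedge function in $\overline{\MLP{2}{n}{1}}$. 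The main subtlety is the uniform control over the noncompact space $\R^n$, which succeeds here precisely because $(h_0,\ldots,h_m)$ has image contained in the fixed compact cube $K$: the $[0,1]$-valuedness inherent to $\fS_\text{c}(\R)$ and the boundedness of $g\in C_\textnormal{c}(V)$ are both essential for this reduction.
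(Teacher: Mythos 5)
Your proof is correct, and it takes a genuinely different route from the paper's. You factor the wedge function as the multiplication map $F(v,u_1,\ldots,u_m)=v\,u_1\cdots u_m$ composed with the tuple $(h_0,\ldots,h_m)$ of first-layer outputs, observe that this tuple has range in a fixed compact cube $K$, extend $F\!\!\upharpoonright_K$ to an element of $C_0(\R^{m+1})$, and invoke the noncompact $C_0$ approximation theorem (Theorem \ref{thm:C_0 in 1 layer summary}) in $m+1$ variables for the outer layer; the remaining steps (membership of $h_0$ in $\Cr(\R^n)\subseteq\overline{\MLP{1}{n}{1}}$ via Corollary \ref{lem:com res alg in sigmoids}, of $g_j\circ p_{a_j}$ via Lemma \ref{lem:l=1,n=1 sigmoid case}, and the $\epsilon/2$ argument using uniform continuity of $\varphi$) are all sound and match tools the paper itself uses elsewhere. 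The paper instead runs an induction on the number of support vectors, built on the recursive correction terms $f_I$ and the inclusion--exclusion assertion $\alpha(I)$, arriving at the identity \eqref{eq:wedge functions} in which the only genuinely two-layer term is a single clipped ramp $h_{\#J}$ applied to the \emph{sum} $g\circ P_V+\sum_j(g_j\circ p_{a_j})$. Your argument is shorter and isolates the real difficulty in the already-proven $C_0$ theorem, and it shows that any continuous outer function of the bounded inner coordinates (not just the product) would work; the paper's argument is more constructive, producing an explicit algebraic decomposition with a concrete piecewise-linear outer activation acting on a sum rather than a generic network in $m+1$ auxiliary variables, which is what underlies the geometric discussion of wedge shapes in the introduction. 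Both establish the proposition.
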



\begin{proof}
The proof proceeds by induction on $m$. The induction basis follows immediately, since wedge functions with no support vectors are in $\overline{\MLP{2}{n}{1}}$ by Corollary \ref{lem:com res alg in sigmoids}. 

Let $J$ be an index set of support vectors $\{a_j\}_{j\in J}$ such that all wedge functions with strictly less support vectors are in $\overline{\MLP{2}{n}{1}}$ -- this is the induction hypothesis. To obtain the induction step, we shall fix a generic wedge function $(g\circ P_V)\prod_{j\in J}(g_j\circ p_{a_j})$ and prove that it is in $\overline{\MLP{2}{n}{1}}$. That is, we fix a linear subspace $V\subseteq\R^n$, functions $g\in C_\textnormal{c}(V)$ and $g_j\in \fS_\text{c}(\R)$ and support vectors $a_j\in\R^n$ for all $j\in J$. Without loss of generality, we may assume that $0\leq g\leq 1$.
We define a function $h\in C_\textnormal{b}(\R)$ by
	$$h(x):=
	\begin{cases}
	0 \quad& \text{if}\quad x\leq0,\\
	x & \text{if}\quad 0\leq x\leq 1,\\
	1 & \text{if}\quad 1\leq x,
	\end{cases}$$
as well as the shifts $h_m(x):=h(x-m)$.
	For every $I\subseteq J$, we recursively define
	\begin{align}
		f_\emptyset :=~& 0\nonumber,\\
		f_I:=~&(g\circ P_V)\prod_{j\in I}(g_j\circ p_{a_j})-h_{\#I}\circ\bigg(g\circ P_V+\sum_{j\in I}(g_j\circ p_{a_j})\bigg)\nonumber\\
		&-\sum_{\substack{H\subseteq I\\ H\neq I}}f_H\prod_{j\in I\setminus H}(g_j\circ p_{a_j})\,.\label{eq:f_K}
	\end{align}
	We note that the two definitions are consistent by taking $I=\emptyset$ and using $g\circ P_V-h_0\circ(g\circ P_V)=0$.
	We also define the space $W_I:=V+\spn\{a_j~|~ j\in I\}$. By induction on $\#I$, and using that $p_{a_j}=p_{a_j}\circ P_{W_I}$ for every $j\in I$, we find that $f_I=f_I\circ P_{W_I}$. Moreover, we claim that $f_I\!\!\upharpoonright_{W_I}\,\,\in C_\textnormal{c}(W_I)$. The latter claim is shown by proving the following statement by induction (i.e., we are using nested induction). Here, $x\in\mathbb R^n$ is fixed.
	\begin{align*}
	\boxed{
		\alpha(I):\qquad \begin{aligned}
		&\text{``If }g(P_V(x))=0\\
		&\text{or }g_{j_0}(a_{j_0}\cdot x)\in \{0,1\}\text{ for some $j_0\in I$,}\\
		&\text{then $f_I(x)=0$."}
		\end{aligned}
	}
	\end{align*}
		
	We prove $\alpha(I)$ by induction on $\#I$. By definition, $\alpha(\emptyset)$ is true. Now suppose $\alpha(H)$ is true for all $H\subsetneq I$. Then $\alpha(I)$ follows from the following three statements:
	\begin{itemize}
		\item If $g(P_V(x))=0$, then $f_H(x)=0$, so \eqref{eq:f_K} gives $f_I(x)=0$.
	
		\item If $g_{j_0}(a_{j_0}\cdot x)=0$ for a certain $j_0\in I$, then the first and second term of \eqref{eq:f_K} drop out, leaving us with
			$$f_I(x)=-\sum_{\substack{H\subseteq I\\ H\neq I}}f_H(x)\prod_{j\in I\setminus H}g_j(a_j\cdot x)\,.$$
		If $j_0\in H$, then we have $f_H(x)=0$, and if $j_0\notin H$, then $\prod_{j\in I\setminus H} g_j(a_j\cdot x)=0$. Hence, in both cases, $f_I(x)=0$.
		
		\item If $g_{j_0}(a_{j_0}\cdot x)=1$ for a certain $j_0\in I$, then
		\begin{align*}
			f_I(x)=& g(P_V(x))\prod_{j\in {I\setminus\{j_0\}}}g_j(a_j\cdot x)\\
			&-h_{\#I-1}\bigg(g(P_V(x))+\sum_{j\in I\setminus\{j_0\}}g_j(a_j\cdot x)\bigg)\\
			&-\sum_{\substack{H\subseteq I\\ H\neq I}} f_H(x)\prod_{j\in (I\setminus\{j_0\})\setminus H}g_j(a_j\cdot x).
		\end{align*}
		For all $H\subsetneq I$ with $j_0\in H$ we have $f_H(x)=0$, so the third term becomes
		\begin{align*}
			-&\sum_{\substack{H\subseteq I\\H\neq I}} f_H(x)\prod_{j\in (I\setminus\{j_0\})\setminus H}g_j(a_j\cdot x)\\
			&=-\sum_{\substack{H\subseteq I\setminus\{j_0\}\\ H\neq I\setminus\{j_0\}}} f_H(x)\prod_{j\in I\setminus H}g_j(a_j\cdot x)-f_{I\setminus\{j_0\}}(x)\,.
		\end{align*}
		We conclude that $f_I(x)=f_{I\setminus\{j_0\}}(x)-f_{I\setminus\{j_0\}}(x)=0$.
	\end{itemize}
	Therefore $\alpha(H)$ is true for every $H\subseteq J$. We will now deduce that $f_H\!\!\upharpoonright_{W_H}~$ has compact support.
	The assertion $\alpha(H)$ shows that for $x\in W_H$ with $f_H(x)\neq0$ we have $P_V(x)\in\supp g$ and $a_j\cdot x\in \overline{g_j^{-1}((0,1))}$ for $j\in H$, which by compactness of $\supp g$ and $\overline{g_j^{-1}((0,1))}$ implies that there exists $R>0$ independent of $x$ such that $\|P_V(x)\|\leq R$ and $|a_j\cdot x|\leq R$. It is not hard to see that
\[
\|x\|_\ast:=\|P_V(x)\|+\sum_{j\in H}|a_j\cdot x|\qquad (x\in W_H),
\]
defines a seminorm, and also positive definiteness is proven as follows. If $\|x\|_\ast=0$, then $x\perp V$ and $x\perp a_j$ for all $j\in H$. Since $x\in W_H=V+\spn\{a_j~|~j\in I\}$, this implies $x\perp x$ and hence $x=0$.
Now, the considerations from above imply that the set $\{x\in W_H~|~f_H(x)\neq 0\}$ is bounded with respect to the norm $\|\cdot\|_\ast$ (hence bounded with respect to any norm on $W_H$), which implies that $f_H\!\!\upharpoonright_{W_H}$ has compact support. We have noted earlier that $f_H=f_H\circ P_{W_H}$, so we conclude that
\begin{align}\label{eq:splitting}
	f_H=f_H\!\!\upharpoonright_{W_H}\circ~ P_{W_H}\quad\text{and}\quad f_H\!\!\upharpoonright_{W_H}~\in C_c(W_H),
\end{align}
in particular, $f_H$ is a wedge function without support vectors.
			
By rearranging \eqref{eq:f_K} in the case $I=J$, we obtain
\begin{align}
	(g\circ P_V)\prod_{j\in J}(g_j\circ p_{a_j})=&\sum_{\emptyset\neq H\subseteq J} f_H\prod_{j\in J\setminus H}(g_j\circ p_{a_j})\nonumber\\
	&+h_{\#J}\circ\Big(g\circ P_V+\sum_{j\in J}(g_j\circ p_{a_j})\Big)\label{eq:wedge functions}\,.
\end{align}
We can summarise \eqref{eq:splitting} and \eqref{eq:wedge functions} by saying that any wedge function can be written as a sum of wedge functions with strictly less support vectors (that are therefore in $\overline{\MLP{2}{n}{1}}$ by the induction hypothesis) plus a remainder term which we denote by
$$r := h_{\#J}\circ \Big(g\circ P_V+\sum_{j\in J}(g_j\circ p_{a_j})\Big)\,.$$

We now show that $r\in \overline{\MLP{2}{n}{1}}$.
Corollary \ref{lem:com res alg in sigmoids} implies that $g\circ P_V\in\Cr(\R^n)\subseteq\overline{\MLP{1}{n}{1}}$. Moreover, Lemma \ref{lem:l=1,n=1 sigmoid case} shows that $g_j\in\fS_\text{c}(\R)\subseteq\overline{\mathcal N_\varphi^1(\R)}$ and hence $g_j\circ p_{a_j}\in\overline{\mathcal N_\varphi^1(\R^n)}$. Therefore, 
$$g\circ P_V+\sum_{j\in J}(g_j\circ p_{a_j})\in\overline{\MLP{1}{n}{1}}.$$ By Lemma \ref{lem:l=1,n=1 sigmoid case}, $h_{\#J}\in\overline{\mathcal{N}^1_\varphi(\R)}$. By uniform continuity of elements in $\mathcal{N}^1_\varphi(\R)$, it follows that $r\in\overline{\MLP{2}{n}{1}}$.

Hence, the function in \eqref{eq:wedge functions} is in $\overline{\MLP{2}{n}{1}}$, which completes the induction step initiated at the beginning of this proof. We conclude that all wedge functions are in $\overline{\MLP{2}{n}{1}}$.
\end{proof}

\begin{thm}\label{thm:main2}
	Let $\varphi\in C(\R)$ be such that $\lim_{x\to-\infty}\varphi(x)$ and $\lim_{x\to\infty}\varphi(x)$ are finite and unequal. We have 
	$$\overline{\MLP{\infty}{n}{1}}=\overline{\MLP{2}{n}{1}}=\fS(\R^n).$$
	Regarding systems with $m$ output nodes, we therefore have
	$$\overline{\MLP{\infty}{n}{m}}=\fS(\R^n)^{\oplus m}.$$
\end{thm}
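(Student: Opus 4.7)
The inclusion $\overline{\MLP{\infty}{n}{1}}\subseteq\fS(\R^n)$ is exactly Proposition \ref{prop:first inclusion sigmoids} (applicable since $\varphi\in\fS(\R)$ by hypothesis), and $\overline{\MLP{2}{n}{1}}\subseteq\overline{\MLP{\infty}{n}{1}}$ is immediate from the definitions. Only the inclusion $\fS(\R^n)\subseteq\overline{\MLP{2}{n}{1}}$ remains; once this is in place, the multi-output statement follows from $\MLP{l}{n}{m}=\MLP{l}{n}{1}^{\oplus m}$ by applying the single-output result componentwise.

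The plan for the nontrivial inclusion is to reduce from general $\fS(\R)$-factors to $\fS_\text{c}(\R)$-factors and then invoke Proposition \ref{prop:wedge functions}. By the definition of $\fS(\R^n)$ as the closed span of products $\prod_{j=1}^m(g_j\circ p_{a_j})$ with $g_j\in\fS(\R)$ and $a_j\in\R^n$, it suffices to show that any such product lies in $\overline{\MLP{2}{n}{1}}$. Lemma \ref{lem:density of fSc} supplies, for each $j$, a sequence $g_j^{(k)}\to g_j$ in $\spn\,\fS_\text{c}(\R)$, and since each $g_j$ is bounded we may assume $\sup_k\|g_j^{(k)}\|_\infty\leq M_j$ for some $M_j<\infty$. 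The standard telescoping estimate
\begin{align*}
\Big\|\prod_j g_j^{(k)}-\prod_j g_j\Big\|_\infty\leq\sum_i\|g_i^{(k)}-g_i\|_\infty\prod_{j\neq i}M_j
\end{align*}
then shows that $\prod_j(g_j\circ p_{a_j})$ is uniformly approximated by multilinear expansions of the form $\sum_{\mathbf k}c_{\mathbf k}\prod_j(h_{j,k_j}\circ p_{a_j})$ with each $h_{j,k_j}\in\fS_\text{c}(\R)$.

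Each such product $\prod_{j=1}^m(h_j\circ p_{a_j})$ with $h_j\in\fS_\text{c}(\R)$ is a wedge function in the sense of Definition \ref{defn:wedge function}: take $V=\{0\}$ and $g=1\in C_\textnormal{c}(\{0\})$, so that $g\circ P_V\equiv 1$ and $(g\circ P_V)\prod_j(h_j\circ p_{a_j})=\prod_j(h_j\circ p_{a_j})$. Proposition \ref{prop:wedge functions} then places each of them in $\overline{\MLP{2}{n}{1}}$, and since $\overline{\MLP{2}{n}{1}}$ is a closed linear subspace of $C_\textnormal{b}(\R^n)$, its closed linear span contains all of $\fS(\R^n)$.

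I do not expect any serious obstacle: the combinatorial heavy lifting has already been absorbed by Proposition \ref{prop:wedge functions}, so that only a routine multilinear density argument remains. The only care required is to keep the approximating sequences uniformly bounded so that the telescoping estimate applies, which is automatic because $\fS(\R)\subseteq C_\textnormal{b}(\R)$.
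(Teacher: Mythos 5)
Your proposal is correct and follows essentially the same route as the paper: both directions are handled by Proposition \ref{prop:first inclusion sigmoids} on one side and by combining Lemma \ref{lem:density of fSc} with the $V=\{0\}$ case of Proposition \ref{prop:wedge functions} on the other. The paper's proof is simply more terse, leaving implicit the bounded multilinear/telescoping approximation step that you spell out.
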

\begin{proof}
By Proposition \ref{prop:wedge functions} we know that all wedge functions belong to $\overline{\MLP{2}{n}{1}}$. In particular, taking $V=\{0\}$, wedge functions of the form $\prod_{j\in J} (g_j\circ p_{a_j})$ belong to $\overline{\MLP{2}{n}{1}}$. By using Lemma \ref{lem:density of fSc}, this implies that $\fS(\R^n)\subseteq\overline{\MLP{2}{n}{1}}$. Combining the latter inclusion with the inclusion from Proposition \ref{prop:first inclusion sigmoids}, we obtain equality.
\end{proof}

\subsection{Difference between one-layer and two-layer networks}
\label{sct:1 or 2 layers}
Let $\varphi\in\fS(\R)$ with $\varphi(-\infty)\neq\varphi(\infty)$.
Let $h\in\fS(\R)$ be a continuous function satisfying $h(x)=0$ for $x\leq0$ and $h(x)=1$ for $x\geq 1$. Then 
\begin{align}\label{eq:f mollified AND}
f(x,y):=h(x)h(y)
\end{align}
(a mollified AND function) is approximable by two-layer neural networks by Theorem \ref{thm:main2}. However, the following reasoning shows that it is at least a (supremum norm) distance of 1/4 from all one-layer neural networks.
\begin{thm}\label{thm:distance}
Let $\varphi\in\fS(\R)$ with $\varphi(-\infty)\neq\varphi(\infty)$, and define $f\in\overline{\mathcal{N}_\varphi^2(\R^2)}$ by \eqref{eq:f mollified AND}.
Then 
\[
\|f-g\|_\infty\geq\frac{1}{4}\]
for all $g\in\MLP{1}{2}{1}$.
\end{thm}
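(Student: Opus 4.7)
The plan is to exhibit, for each fixed candidate $g\in\MLP{1}{2}{1}$, four points escaping to infinity whose signed sum on $g$ vanishes in the limit but evaluates to $1$ on $f$. For any function $\psi:\R^2\to\R$ and parameters $T,\lambda>0$, I consider the corner combination
\[
S_\psi(T,\lambda):=\psi(T,\lambda T)-\psi(-T,\lambda T)-\psi(T,-\lambda T)+\psi(-T,-\lambda T).
\]
Once $T\geq\max(1,1/\lambda)$, the definition of $h$ and \eqref{eq:f mollified AND} force $S_f(T,\lambda)=1$, so the triangle-inequality estimate $|S_f(T,\lambda)-S_g(T,\lambda)|\leq 4\|f-g\|_\infty$ will yield the desired $\|f-g\|_\infty\geq 1/4$ as soon as I can show $S_g(T,\lambda)\to 0$ for some suitable $\lambda$.

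For that vanishing, I expand $g(x,y)=\sum_{j=1}^N c_j\,\varphi(a_jx+b_jy+d_j)$ and observe that the $j$-th summand contributes
\[
c_j\bigl[\varphi(\sigma_1 T+d_j)+\varphi(-\sigma_1 T+d_j)-\varphi(\sigma_2 T+d_j)-\varphi(-\sigma_2 T+d_j)\bigr]
\]
to $S_g(T,\lambda)$, where $\sigma_1:=a_j+\lambda b_j$ and $\sigma_2:=a_j-\lambda b_j$. Whenever $\sigma_1,\sigma_2\neq 0$, each pair $\varphi(\sigma_i T+d_j)+\varphi(-\sigma_i T+d_j)$ tends to $\varphi(\infty)+\varphi(-\infty)$ independently of the sign of $\sigma_i$, so the bracket vanishes in the limit and the entire finite sum tends to $0$.

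The heart of the argument is therefore the choice of $\lambda$. The only summands where a constraint arises are those with both $a_j,b_j\neq 0$ (the sub-cases $a_j=b_j=0$, $a_j=0\neq b_j$, $a_j\neq 0=b_j$ being immediate), and for these I must avoid the finite set of positive bad values $\{\,|a_j/b_j|\,\}$. Since only finitely many summands are present, a valid $\lambda>0$ always exists. This is the principal obstacle I anticipate: the symmetric choice $\lambda=1$ fails precisely for ridge functions aligned with the diagonals $y=\pm x$, so the proof must crucially exploit the fact that $g$ has only finitely many degenerate slopes and that $\lambda$ can be tailored to $g$ after the fact. Once such a $\lambda$ is fixed, letting $T\to\infty$ in $|1-S_g(T,\lambda)|\leq 4\|f-g\|_\infty$ closes the argument.
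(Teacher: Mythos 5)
Your proof is correct, and it reaches the same constant $\tfrac14$ by a route that is recognisably a variant of the paper's but packaged differently. The paper works with the two--point antipodal limit $l_g(v):=\lim_{t\to\infty}\left(g(tv)+g(-tv)\right)$ on the unit circle: for a single ridge term this limit equals $c\left(\varphi(\infty)+\varphi(-\infty)\right)$ for every $v$ not orthogonal to the ridge direction, so $l_g$ is constant almost everywhere on $S^1$ for every $g\in\MLP{1}{2}{1}$, while $l_f$ takes the values $1$ and $0$ on arcs of positive measure; comparing two good directions then forces a discrepancy of at least $\tfrac12$ in $l$, hence $\tfrac14$ in the sup norm. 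You instead build the four--point alternating combination $S_\psi(T,\lambda)$, which is the discrete ``mixed second difference'' obtained by pairing those two antipodal sums with opposite signs, so that the constant $\varphi(\infty)+\varphi(-\infty)$ cancels term by term and $S_g(T,\lambda)\to0$ outright, while $S_f\equiv1$ for large $T$. Your finite set of bad slopes $\lambda=|a_j/b_j|$ plays exactly the role of the measure--zero set of bad directions $a_j\cdot v=0$ in the paper. What your version buys is explicitness: the origin of the factor $4$ is immediate (four evaluation points), there is no appeal to ``constant almost everywhere'' or an implicit pigeonhole over two directions, and the sub-case analysis ($a_j$ or $b_j$ zero) is spelled out. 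What it gives up is the paper's slightly more structural statement -- that $l_g$ is a well-defined, a.e.-constant function on $S^1$ for \emph{all} of $\overline{\MLP{1}{2}{1}}$ -- which the paper reuses as a general obstruction criterion (any $f$ with $l_f$ not constant a.e.\ lies outside $\overline{\MLP{1}{2}{1}}$); your bound, being uniform over $\MLP{1}{2}{1}$, does of course pass to the closure as well.
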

\begin{proof}
For any function $g\in C(\R^2)$ we define
$$l_{g}(v):=\lim_{t\to\infty}\left(g(tv)+g(-tv)\right),$$
for the $v\in S^1\subseteq\R^2$ for which this limit exists. 

If $g(x)=c\varphi(a\cdot x+b)$, then $l_{g}$ will be defined on the full unit circle $S^1$, and constant almost everywhere, namely on all $v\in S^1$ satisfying $a\cdot v\neq0$ (or everywhere, if $a=0$). Taking a sum, linearity of limits implies that, for all $g\in\MLP{1}{2}{1}$, the partial function $l_g$ is defined everywhere and constant almost everywhere on $S^1$. (In fact, this holds for all $g\in\overline{\MLP{1}{2}{1}}$.) However,
\begin{align*}
l_f((\cos\theta,\sin\theta))=
\begin{cases}
1\qquad\theta\in(0,\tfrac12\pi)\cup(\pi,\tfrac32\pi)\\
0\qquad\theta\in(\tfrac12\pi,\pi)\cup(\tfrac32\pi,2\pi),
\end{cases}
\end{align*}
which is manifestly not constant almost everywhere. Hence, there exists a $v\in S^1$ with
$$\left|\lim_{t\to\infty}g(tv)-\lim_{t\to\infty}f(tv)\right|\geq\frac14,$$
which implies that $\|g-f\|_\infty\geq\frac{1}{4}$.
\end{proof}
The above reasoning for showing $f\notin\overline{\MLP{1}{n}{1}}$ will work for all $f\in C(\R^n)$ with $l_f$ not constant almost everywhere on $S^{n-1}$, hence supplying a large list of examples of functions not uniformly approximable by one-layer neural networks. By scaling, the uniform distance can be made arbitrary large, so there are functions in $\MLP{2}{n}{1}$ with arbitrarily large distance from $\MLP{1}{n}{1}$, proving Theorem \ref{thm:1 or 2 layers intro}.

\section{Nonzero one-layer networks do not vanish at infinity}
As an encore, we prove a claim made in the introduction.
We also prove a claim made in the caption of Figure \ref{fig:uniform approximation 1-layer increasing width}, which is stated without proof in \cite[Section 7]{Pinkus}. An elegant proof of the latter (i.e. the proof of point 2 below) was supplied by a very generous anonymous referee.
\begin{thm}\label{thm:0C0}
Let $m\in\N,n\in\N_{\geq2}$ be numbers, $a_1,\ldots,a_m\in\R^n$ be vectors, and $f_1,\ldots,f_m:\R\to\R$ be functions. Define $f(x):=\sum_{j=1}^mf_j(a_j\cdot x)$. 
\begin{enumerate}
\item\label{item:1} If $f\in C_0(\R^n)$ then $f=0$. In particular,
$$\MLP{1}{n}{1}\cap C_0(\R^n)=\{0\},$$ 
for every function $\varphi:\R\to\R$.
\item\label{item:2} If $f\in L^p(\R^n)$ for some $p\in(0,\infty)$, then $f=0$ almost everywhere. In particular,
$$\MLP{1}{n}{1}\cap L^p(\R^n)=\{0\},$$ 
for every function $\varphi:\R\to\R$ and every $p\in(0,\infty)$.
\end{enumerate}
\end{thm}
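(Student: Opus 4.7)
My plan is to prove both parts by induction on $m$, after reducing to the case where $a_1,\ldots,a_m$ are nonzero and pairwise linearly independent. Any $a_j=0$ contributes a constant $f_j(0)$ that can be absorbed into another summand, and any parallel pair $a_{j'}=\lambda a_j$ can be merged via $f_j(a_j\cdot x)+f_{j'}(\lambda a_j\cdot x)=g(a_j\cdot x)$; either reduction strictly decreases $m$. The base case $m=0$ means $f$ is a constant, and a nonzero constant belongs to neither $C_0(\R^n)$ nor $L^p(\R^n)$ (for $n\geq2$), so $f=0$ in either part.

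For the inductive step (assuming both parts hold for $m-1$), I choose $v\in\R^n$ with $a_m\cdot v=0$ but $a_j\cdot v\neq0$ for all $j<m$. Such $v$ exists because $n\geq2$ makes $a_m^\perp$ at least one-dimensional, and linear independence of $a_m$ from each $a_j$ ($j<m$) makes every $a_m^\perp\cap a_j^\perp$ a proper subspace of $a_m^\perp$, so a finite union of them cannot exhaust $a_m^\perp$. The translated difference
\begin{align*}
F_t(x):=f(x+tv)-f(x)=\sum_{j=1}^{m-1}\bigl[f_j(a_j\cdot x+t(a_j\cdot v))-f_j(a_j\cdot x)\bigr]
\end{align*}
then has only $m-1$ ridge directions, since the $f_m$-term cancels thanks to $a_m\cdot v=0$, while the remaining directions $a_1,\ldots,a_{m-1}$ are still pairwise linearly independent and nonzero.

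For part 1, translation invariance of $C_0(\R^n)$ gives $F_t\in C_0(\R^n)$, so the induction hypothesis forces $F_t\equiv0$. Hence $f$ is constant along the line $\R v$; combined with $f\in C_0(\R^n)$ and $v\neq0$, letting $\|tv\|\to\infty$ yields $f\equiv0$. For part 2, the same reasoning gives $F_t=0$ in $L^p(\R^n)$, i.e.\ $f(\cdot+tv)=f(\cdot)$ almost everywhere for every fixed $t$. Applying Fubini to the jointly measurable function $(x,t)\mapsto f(x+tv)-f(x)$ shows it vanishes for a.e.\ $(x,t)\in\R^n\times\R$; slicing the other way, for a.e.\ $x$ the map $t\mapsto f(x+tv)$ is essentially constant, so $f$ factors a.e.\ through $P_{v^\perp}:\R^n\to v^\perp$. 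Since $\R v$ has infinite Lebesgue measure, $L^p$-integrability forces that factor to vanish, and $f=0$ a.e.

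The main obstacle will be this last Fubini passage in part 2: the family $\{F_t=0\}_{t\in\R}$ consists of $x$-a.e.\ equalities with one null set per $t$, and joint measurability is needed to collapse them into a single null set in $\R^n\times\R$ before genuine $v$-direction invariance of the equivalence class $[f]\in L^p$ (and hence the factorisation through $P_{v^\perp}$) can be concluded.
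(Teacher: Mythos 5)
Your argument is correct, but it takes a genuinely different route from the paper. You induct on $m$: after merging zero and parallel directions, you pick $v\in a_m^\perp$ with $a_j\cdot v\neq0$ for $j<m$ and observe that the difference $F_t=f(\cdot+tv)-f$ is a ridge sum with only $m-1$ directions, so the induction hypothesis forces $F_t=0$ (everywhere, resp.\ a.e.), i.e.\ $f$ is translation-invariant in the direction $v$; decay (resp.\ integrability) along the line $\R v$ then kills $f$. The paper avoids induction entirely: it chooses one vector $v_j\perp a_j$ per direction, with norms growing fast enough that every non-empty subset sum $v(I)$ is nonzero, and derives the single inclusion--exclusion identity \eqref{eq:combinatorial formula for 1-layer neural networks}, expressing $f(x)$ as a signed sum of translates $f(x+tv(I))$ over non-empty $I$. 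Part 1 is then immediate by letting $t\to\infty$, and Part 2 follows from a lattice-summation (Poisson-type) lemma showing $f(x+lv(I))\to0$ along $l\in\N$ for a.e.\ $x$ -- so the paper only needs \emph{decay} of far translates, never exact translation invariance, and needs no reduction to pairwise independent directions. Your version buys a more elementary difference-operator structure and a stronger intermediate conclusion (exact invariance of $f$ under translation by $\R v$), at the cost of the Fubini bookkeeping in the $L^p$ step, which you correctly identify as the delicate point: one must use joint measurability of $(x,t)\mapsto f(x+tv)-f(x)$ (which holds because preimages of Lebesgue-null sets under the surjective linear map $(x,t)\mapsto x+tv$ are null) to convert the family of per-$t$ null sets into a single null set in $\R^n\times\R$ before slicing along lines. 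With that point spelled out, both parts of your proof go through.
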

\begin{proof}
Since $n\geq2$, we can choose for each $j\in\{1,\ldots,m\}$ a vector $v_j\in\R^n$ satisfying
\[
a_j\cdot v_j=0\quad\text{and}\quad \|v_j\|>\|v_1\|+\ldots+\|v_{j-1}\|.
\]
For brevity, write $[m]:=\{1,\ldots,m\}$. For $I\subseteq[m]$, define
\[
v(I):=\sum_{i\in I}v_i\,,\quad\text{in particular,}\quad v(\emptyset)=0.
\]
By choice of the $v_j$ we have that $v(I)\neq0$ for all $\emptyset\neq I\subseteq[m]$.

For all $j\in[m]$, $x\in\R^n$, and $t\in\R$, we note that, since $a_j\cdot v_j=0$,
\[
\sum_{I\subseteq[m],j\notin I}(-1)^{\# I}f_j(a_j\cdot(x+tv(I)))=\sum_{I\subseteq[m],j\in I}(-1)^{\# I-1}f_j(a_j\cdot(x+t v(I))),
\]
and hence,
\begin{align*}
\sum_{I\subseteq[m]}(-1)^{\# I}f_j(a_j\cdot(x+tv(I)))=&\sum_{I\subseteq[m],j\in I}(-1)^{\# I}f_j(a_j\cdot(x+tv(I)))\\
&+\sum_{I\subseteq[m],j\notin I}(-1)^{\# I}f_j(a_j\cdot (x+tv(I)))\\
=&0.
\end{align*}
By summing the above over $j$ and interchanging the sums, we see $\sum_{I\subseteq[m]}(-1)^{\# I}f(x+tv(I))=0$. Since the summand for $I=\emptyset$ is simply $f(x)$, we thus get
\begin{align}\label{eq:combinatorial formula for 1-layer neural networks}
f(x)=-\sum_{\emptyset \neq I\subseteq[m]}(-1)^{\# I}f(x+tv(I))\qquad(x\in\R^n,t\in\R).
\end{align}
The above formula allows us to prove both \ref{item:1} and \ref{item:2}.
\begin{enumerate}
\item If $f\in C_0(\R^n)$ then for every $x\in\R^n$ and every $\epsilon>0$ there exists a $t\in\R$ large enough such that the right-hand side of \eqref{eq:combinatorial formula for 1-layer neural networks} is smaller than $\epsilon$, hence $|f(x)|<\epsilon$ for every $x$ and every $\epsilon$, implying $f=0$.
\item If $f\in L^p(\R^n)$ then $g=|f|^p\in L^1(\R^n)$. For completeness let us prove a folklore assertion related to the Poisson summation formula, namely the claim that for every $g\in L^1(\R^n)$ and any lattice $\Lambda\subseteq\R^n$, the series
\[
h(x):=\sum_{k\in\Lambda} g(x+k)
\]
converges for almost all $x\in\R^n$.
Indeed, if $U\subseteq \R^n$ is a measurable subset such that $\R^n=\sqcup_{k\in\Lambda}(U+k)$,
then
\[
\infty>\int_{\R^n}|g(x)|=\int_{U}\sum_{k\in\Lambda}|g(x+k)|\,dx\,.
\]
So there exists a null-set $N_0\subseteq U$ such that $h(x)<\infty$ for $x\in U\setminus N_0$. As $h(x+l)=h(x)$ for every $l\in\Lambda$, we have for the null-set $N_\Lambda:=N_0+\Lambda$ that $h(x)<\infty$ for every $x\in\R^n\setminus N_\Lambda$.

 For every non-empty $I\subseteq[m]$, $\Lambda:=\Z v(I)$ is a nontrivial lattice and so the above statement in particular supplies a null-set $N_I\subseteq\R^n$ such that
\[
\lim_{l\in\N,l\to\infty}g(x+lv(I))=0\qquad (x\in\R^n\setminus N_I).
\]
Hence,
\[
\lim_{l\in\N,l\to\infty}f(x+lv(I))=0\qquad (x\in\R^n\setminus N_I).
\]
Now, $N:=\bigcup_{\emptyset\neq I\subseteq[m]}N_I$ is a null-set, and \eqref{eq:combinatorial formula for 1-layer neural networks} shows for $x\in\R^n\setminus N$ that
\[
f(x)=-\sum_{\emptyset \neq I\subseteq[m]}(-1)^{\# I}f(x+lv(I))\to0\quad(l\in\N,l\to\infty).
\]
Thus, $f=0$ almost everywhere.
\end{enumerate}
\end{proof}

\section{Open questions}
As practitioners are asking for mathematically founded ways to choose the right architecture for their specific problems,
there is still much to learn regarding the influence of the number of neurons, the width, the depth, et cetera, on the approximation capabilities of neural networks. 
This paper indicates that the uniform topology on $\R^n$ is apt to gain new insights. Yet, there remains a lot of unexplored terrain.

Density theorems and analytic bounds expressing the quality of the optimal approximation in terms of the width and depth of the neural network, as well as the regularity of the activation function, have thus far been developed with respect to the topology of compact convergence and with respect to $L^p$-convergence \cite{Gripenberg,SYZ21,SYZ22,Yarotsky}. The spaces $C(\R^n)$ and $L^p(\R^n)$ naturally form the arenas of functions for which those bounds can be sought. By the present results, the spaces $C_0(\R^n)$, $\Cr(\R^n)$, and $\fS(\R^n)$ are the analogous arenas in which one should search for similar bounds and density theorems for global uniform convergence.

One concrete and fascinating question would be whether there exists an activation function such that every element of $C_0(\R^n)$ can be globally uniformly approximated up to arbitrary accuracy with a fixed width and depth dependent only on $n$, i.e., whether there exist globally superexpressive activations.

It would also be very informative to get theoretical bounds on the number of nodes that are needed to obtain a certain uniform precision (possibly constraining the amount of nodes per layer \cite{Gripenberg,KL} and also the amount of nodes in totality \cite[Section 5]{Ismailov}).

The same questions can be asked for specific classes of neural networks, such as convolutional neural networks or finite impulse recurrent neural networks, as then the `upper bound' of Proposition \ref{prop:first inclusion sigmoids} still holds. For instance, if a certain class of convolutional neural networks can be expressed as feedforward ANNs (the neural networks considered in this paper) with sigmoidal activation function, it follows that a function outside $\fS(\R^n)$ will not be uniformly approximated.
Whether the converse is true is not immediately clear.

A reason to favour the uniform norm over the $L^p$-norms has already been highlighted in the caption of Figure \ref{fig:uniform approximation 1-layer increasing width}. Namely, while the latter is infinite for all neural networks, the former can be finite and actually describe convergence to local ($C_0$) functions. One may also compare our results with those of weighted $L^p$ spaces, although they surrender translation invariance of the norm.
A related class of topologies is given by the weighted supremum norms, as considered in \cite{CST}. They form a natural class of topologies on $C(\R^n)$ that lie between the uniform topology and the topology of compact convergence, and it would therefore be interesting to relate the results of \cite{CST} to those obtained here.

On a different note, precisely because our results concern the uniform topology, they may yield quite tangible statements about the structure of a neural network \textit{after} training. For instance, the results of Sections \ref{sct:CR} and \ref{sct:sigmoids} imply that deep neural networks can be represented well as one-layer or two-layer neural networks without significant information loss at any scale, i.e., preserving the generalisation. Similarly, Theorem \ref{thm:main2} relates deep feedforward neural networks to Sum Of Product Neural Networks, as in \cite{LL,LWN}, which might be more light-weight than the corresponding deep neural network. It is unclear whether these observations lead to a practically feasible compression method, but the fact that there are ways to alternatively represent neural networks with arbitrary small loss deserves further investigation.

A final question is whether one can use the results of Theorem \ref{thm:1 or 2 layers intro} to reasonably decide whether to believe that an unknown model is a neural network, based on its responses to suitably chosen inputs.

The author expects further investigation of uniform universal approximation outside $[0,1]^n$ to be worthwhile, not in the least because it will require thinking outside the box.

\medskip

\noindent \textbf{Acknowledgements}\\
The author thanks Marjolein Troost for the crucial observations that sparked this paper, and for providing important guidance. The author is moreover indebted to Klaas Landsman, Walter van Suijlekom, Wim Wiegerinck, and the anonymous referees for helpful comments. 
This work was supported in part by NWO Physics Projectruimte 680-91-101, in part by NSF grant DMS-1554456, in part by ARC grant FL17010005, and in part by NWO grant OCENW.M.22.070.


\end{document}